\definecolor{deepred}{HTML}{940000}
\definecolor{Gray}{gray}{0.94}
\definecolor{myorange}{RGB}{239,189,64}
\definecolor{myblue}{RGB}{71,159,248}
\definecolor{mypurple}{RGB}{95,49,121}
\definecolor{mygreen}{RGB}{119,204,73}
\definecolor{mygray}{RGB}{150,150,150}
\newlength\savewidth\newcommand\shline{\noalign{\global\savewidth\arrayrulewidth
  \global\arrayrulewidth 1pt}\hline\noalign{\global\arrayrulewidth\savewidth}}
\renewcommand \thepart{}
\renewcommand \partname{}
\newcommand*{\ie}{{\it i.e.}\@\xspace}
\theoremstyle{plain}
\newtheorem{theorem}{Theorem}%
\newtheorem{proposition}[theorem]{Proposition}
\newtheorem*{proposition*}{Proposition}
\theoremstyle{definition}
\theoremstyle{definition}
\newtheorem{remark}[theorem]{Remark}
\DeclareMathOperator*{\argmax}{arg\,max}
\DeclareMathOperator*{\argmin}{arg\,min}
\newcommand{\vb}{v_{\text{base}}}
\newcommand{\pb}{p_{\text{base}}}
\newcommand*{\dif}{\mathop{}\!\mathrm{d}}
\newcommand{\KL}[2]{\mathcal{D}_{\mathrm{KL}}\left(#1\Vert #2\right)}
\newcommand{\norm}[1]{\left\lVert#1\right\rVert}
\newcommand{\E}{\mathbb{E}}
\newcommand{\LL}{{\mathcal{L}}}
\def\Secref#1{Section~\ref{#1}}
\def\eqref#1{equation~\ref{#1}}
\def\Eqref#1{Equation~\ref{#1}}
\def\Algref#1{Algorithm~\ref{#1}}
\newcommand{\methodname}{\textbf{VGG-Flow}\xspace}
\newcommand{\pt}{p_t}
\newcommand{\qt}{q_t}
\title{
\fontsize{17pt}{\baselineskip}\selectfont 
Value Gradient Guidance for Flow Matching Alignment
}
\author{%
\fontsize{9.5pt}{\baselineskip}\selectfont
Zhen Liu\textsuperscript{1\textdagger}
\quad
Tim Z. Xiao\textsuperscript{2*}
\quad
Carles Domingo-Enrich\textsuperscript{3*}
\quad
Weiyang Liu\textsuperscript{4}
\quad
Dinghuai Zhang\textsuperscript{3,5\textdagger}
\\
$^1$The Chinese University of Hong Kong (Shenzhen)
\quad $^2$University of T\"ubingen \\
$^3$Microsoft Research
\quad $^4$The Chinese University of Hong Kong \quad $^5$Mila -- Quebec AI Institute\\
\textsuperscript{*}Equal contribution \quad \textsuperscript{\textdagger}Corresponding author~~~~~~~~ {\tt\href{https://vggflow25.github.io/}{\textbf{vggflow25.github.io}}}
}
\begin{document}

\doparttoc
\faketableofcontents

\maketitle

{
\begin{figure}[H]
    \centering
    \vspace{-1.2cm}
    \includegraphics[width=\linewidth]{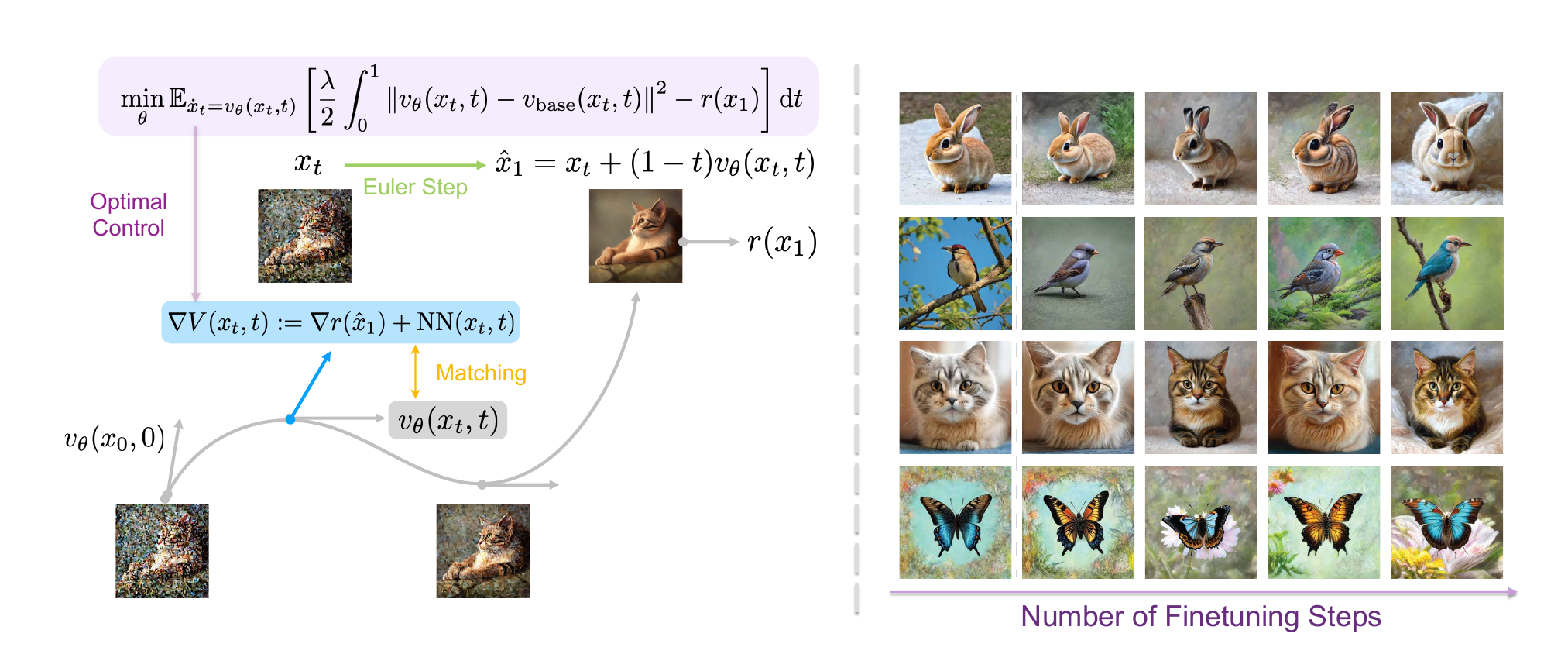}
    \caption{\footnotesize 
    Left: Illustration of our proposed \methodname algorithm. The \textbf{\textcolor{mygray}{velocity field}} is trained to 
    \textbf{\textcolor{myorange}{match}}
    the \textbf{\textcolor{myblue}{value gradient}} field obtained from the \textbf{\textcolor{mypurple}{optimal control}} problem. The value gradient field is parametrized as the reward gradient field of the \textbf{\textcolor{mygreen}{one-step Euler prediction}} plus a learnable residual field.
    Right: Evolution of samples (with fixed seeds and prompts) during the course of finetuning on the reward model of Aesthetic Score.
    }
    \label{fig:teaser2}
    \vspace{-3.5mm}
\end{figure}
}

\begin{abstract}

While methods exist for aligning flow matching models -- a popular and effective class of generative models -- with human preferences, existing approaches fail to achieve both adaptation efficiency and probabilistically sound prior preservation. In this work, we leverage the theory of optimal control and propose \methodname, a gradient-matching–based method for finetuning pretrained flow matching models. The key idea behind this algorithm is that the optimal difference between the finetuned velocity field and the pretrained one should be matched with the gradient field of a value function. This method not only incorporates first-order information from the reward model but also benefits from heuristic initialization of the value function to enable fast adaptation. Empirically, we show on a popular text-to-image flow matching model, Stable Diffusion 3, that our method can finetune flow matching models under limited computational budgets while achieving effective and prior-preserving alignment.
    
\end{abstract}

\section{Introduction}

Flow matching models~\cite{liu2023rectified,lipman2023flow,albergo2023stochastic} are one of the most effective methods in modeling high-dimensional real-world continuous distributions and widely used for the generation of images~\cite{esser2024scaling}, videos~\cite{wan2025wanopenadvancedlargescale}, 3D objects~\cite{xiang2025structured, liu2023meshdiffusion, liu2024gshell, zhang2024clay}, etc. These models, compared to diffusion models that rely on simulation with stochastic differential equations (SDEs), are trained to sample with deterministic ordinary differential equations (ODEs) of which sampling paths are often straighter and easier to model.

Similar to the motivations for performing alignment for diffusion models~\cite{fan2023dpok,black2024training}, it is natural to finetune flow matching models with reward models so that the generated samples are more aligned with human preferences.
While existing methods have already achieved fast, effective, diversity-preserving and prior-preserving alignment for diffusion models through gradient-matching-based approaches, the ODE sampling paths of flow matching models pose challenges in applying these methods. The key challenge is that, with flow matching models, one typically has access to neither a reference path (unless one has access to the large-scale pretraining dataset) nor the probability flow. Since it is non-trivial to obtain the probability flow and to incorporate the learned prior from base models for flow matching models, it is harder to align flow matching models in an efficient yet probabilistic way.

To address this issue, we take inspiration from the theory of optimal control and consider a relaxed objective: we optimize the target reward but with the accumulated cost-to-go defined as the $\ell_2$ distance between the velocity fields of the finetuned model and the base model. The optimal solution of this optimization program is described by the Hamilton-Jacobi-Bellman (HJB) equation and can be shown in our formulation equivalent to two conditions: a gradient matching condition that the residual velocity field matches the gradient of the value function, and a value consistency condition that ensures correct estimation of value functions. In light of this result, we propose our finetuning method, dubbed \methodname (short for \textbf{V}alue \textbf{G}radient \textbf{G}uidance for \textbf{Flow} Matching Alignment), that finetunes the flow matching model via ``matching with value gradient guidance''---the difference between the velocity fields of the finetuned model and the base model is expected to be the gradient of the value function---while the value function can be solved with a consistency equation. Such a formulation allows us not only to directly propagate the reward gradient to the matching target through the value consistency equation in an amortized and memory-efficient way but also to use a heuristic initialization of the value gradient for fast convergence. We empirically show that \methodname can effectively and robustly finetune large flow matching models like Stable Diffusion 3~\cite{esser2024scaling} within limited computational resources.

To summarize, our contributions are

\begin{itemize}[leftmargin=*,nosep]
\setlength\itemsep{0.35em}
    \item With a relaxed objective, we leverage the HJB equation from optimal control theory to propose \methodname, an efficient and effective alignment method for flow matching models that matches the residual velocity field with the guidance signal of value function gradient.
    \item We propose to parametrize the value gradients with a forward-looking technique, which eases the difficulty in learning accurate value gradients in limited time and thus accelerates convergence.
    \item We empirically demonstrate the effectiveness of \methodname on a large-scale text-to-image flow matching model, Stable Diffusion 3, and show that \methodname achieves better reward convergence, sample diversity, and prior preservation compared to other alignment baselines.
\end{itemize}

\vspace{-2mm}
\section{Related Work}
\vspace{-1mm}

\textbf{General alignment strategies.} 
Since large generative models are typically trained on uncurated massive datasets, their sample distributions are typically far from human preferences. A common approach to solve this problem is through reinforcement learning from human feedback (RLHF)~\cite{ouyang2022training}, in which one first trains a reward model from human preference datasets and later finetunes a generative model with reinforcement learning methods such that it samples from this reward model. In the alignment of large language models, it is common to use simple policy gradient methods such as PPO~\cite{schulman2017proximal} and GRPO~\cite{shao2024deepseekmath}. While they are general enough to be applicable to continuous generative models, they can be less efficient because they do not leverage the differentiable nature of both reward models and generative models.
Similar to traditional RL~\cite{zhang2023latent} methods, the framework of generative flow networks (GFlowNets)~\cite{bengio2021foundations,zhang2022generative,zhang2022unifying,pan2023stochastic,zhang2023let}, which are highly correlated with soft RL methods, can be used to finetune diffusion models~\cite{zhang2023diffusiongf, zhang2025improving,yun2025learning,liu2025efficient}. Alternatively, one may simple reward-reweight methods~\cite{fan2023dpok,lee2023aligning,fan2025online} for the same purpose.
It is also under exploration to perform test-time scaling on diffusion models via methods like sequential Monte Carlo~\cite{singhal2025general,he2025rneplugandplaydiffusioninferencetime}, parallel tempering~\cite{he2025crepecontrollingdiffusionreplica}, and search~\cite{ma2025inference} without any model finetuning.

\textbf{Differentiable RLHF for continuous foundation generative models.} 
Diffusion models and flow matching models, commonly used to build foundation models in the continuous domain, exhibit different properties due to their differentiable and sequential sampling process. For diffusion models, since each sampling step is stochastic (probably excluding the last step), one may finetune these models using stochastic optimal control~\cite{uehara2024fine} which typically requires extra steps for learning surrogate functions. A recent work inspired by the framework of generative flow networks~\cite{liu2025efficient} builds a gradient-informed finetuning strategy that efficiently aligns diffusion models with gradient-matching-like losses in a probabilistic way. For flow matching models, these approaches are not applicable because they require the transitions to be stochastic. One way that applies to both diffusion models and flow matching models is to treat the sampling process as a computational graph with which we directly optimize differentiable rewards~\cite{xu2024imagereward, clark2024directly}. However, such a strategy by design fails to align with the target distribution but only aims to find some modes in the distribution. More principled approaches for aligning flow matching models include the recent method of Adjoint Matching~\cite{domingo-enrich2025adjoint}, which by turning flow matching models into equivalent SDEs computes a gradient matching target for the velocity field with an adjoint ODE. Solving such an ODE is however expensive, especially in cases of foundation models where accurate adjoint ODE solving requires smaller time steps in ODE solver.

\textbf{Optimal control and machine learning.} 
Optimal control (OC) is concerned with steering systems subject to random fluctuations so as to minimize a given cost. OC methods, including the subset of stochastic optimal control (SOC), have been employed in a broad range of areas, including the simulation of rare events in molecular dynamics \citep{hartmann2014characterization,hartmann2012efficient,zhang2014applications,holdijk2023stochastic}, modeling in finance and economics \citep{pham2009continuous,fleming2004stochastic}, stochastic filtering and data assimilation \citep{mitter1996filtering,reich2019data}, tackling nonconvex optimization problems \citep{chaudhari2018deep}, management of power systems and energy markets \citep{belloni2004power,powell2016tutorial}, robotic control \citep{theodorou2011aniterative,gorodetsky2018high}, analysis of mean-field games \citep{carmona2018probabilistic}, optimal transport theory \citep{villani2003topics,villani2008optimal}, the study of backward stochastic differential equations \citep{carmona2016lectures}, and large-deviation principles \citep{feng2006large}. Relevant and recent applications of SOC in machine learning include performing reward fine-tuning of diffusion and flow matching models \citep{domingo-enrich2025adjoint,uehara2024fine,zhang2025improving,liu2025efficient} and conditional sampling of diffusion processes~\cite{wu2024practical,denker2024deft,pidstrigach2025conditioning}. There is also a growing literature on SOC methods for sampling from unnormalized densities, as an alternative to MCMC methods~\cite{zhang2022path,berner2024an,havens2025adjoint,albergo2024nets,chen2025sequential,blessing2025underdamped,vargas2024transport}. Additionally, there have been a string of methodological works exploring deep learning loss functions for SOC~\cite{domingoenrich2024stochastic,nüsken2023solving,domingoenrich2024taxonomy}.

\vspace{-2mm}
\section{Preliminaries}
\vspace{-1mm}

\subsection{Flow Matching Models}

Flow matching models are a class of generative models that are trained to generate samples sequentially following some reference paths. Specifically, one generates samples $x_1$ with a flow matching model by simulating a trajectory from an initial state $x_0 \sim \mathcal{N}(0, I)$ with the dynamics $\dot x = v_\theta(x,t)$. The velocity field $v_\theta(x,t)$ is learned with the flow matching loss:
\begin{align}
    \label{eqn:flow-matching-definition}
    \LL(\theta) = \E_{x_1 \sim \mathcal{D}, t \sim \text{Uniform}[0, 1]} \norm{v_\theta(x_t, t) - u(x_t | x_1)}^2
\end{align}
where $u(x_t | x_1)$ is a reference conditional velocity field. A popular choice is $u(x_t | x_1) = (1-t)x_t + tx_1$, adopted by a variant called rectified flows~\cite{liu2023rectified}. 

The probability flow $p(x,t)$ corresponding to the velocity field $v(x,t)$ satisfies the so-called continuity equation: 
\begin{align}
    \label{eqn:cont-eqn-definition}
    \frac{\partial}{\partial t} p(x,t) + \nabla \cdot \big(p(x,t) v(x,t)\big) = 0.
\end{align}
Since a flow matching model is modeled as an ordinary differential equation (ODE), one may use any ODE solver to generate samples, including the simplest Euler sampler: $x_{t + \Delta t} = x_t + \Delta t v(x_t, t)$ where $\Delta t$ is the step size.

\subsection{Optimal Control}
\label{sec:optimal_control}

In optimal control theory, we aim to find an optimal control signal $u^*(x,t)$ under a known time-varying dynamics $\dot x = f(x, u^*, t)$ with the initial state $x(0) = x_0$ such that a cost functional is minimized. The standard control objective is defined as
\begin{align}
    \label{eqn:optimal-ctrl-objective}
    \argmin_u J[u],\quad J[u] \triangleq \int_{0}^{T} L(x(t), u(t), t) \, \dif t + \Phi(x(T)),
\end{align}
where $\Phi(\cdot)$ is the terminal cost and $L(x, u, t)$ is the running cost.

The Hamilton-Jacobi-Bellman (HJB)  equation is indeed the continuous counterpart of the Bellman equation in the discrete domain, where the transitions are defined by a graph instead of a dynamical system $\dot x=v(x,t)$.
The solution of this optimal control problem satisfies the HJB equation:
\begin{align}
    \label{eqn:hjb-definition}
    - \frac{\partial V}{\partial t}(x, t) = \min_u \left[ L(x, u, t) + \nabla V(x, t)^\top f(x, u, t) \right],
\end{align}
in which $V(x,t)$ is the value function or the minimal cost-to-go from state $x$ at time $t$:
\begin{align}
    \label{eqn:value-definition}
    V(x, t) = \min_{u} \left\{ \int_t^T L(x(s), u(s), s) \, \dif s + \Phi(x(T)) \mid x(t)=x \right\}.
\end{align}

\vspace{-3.5mm}
\section{Method}
\vspace{-1.5mm}

\subsection{Gradient Matching for Aligning Flow Matching Models}

Given a reward function $r(\cdot)$, we want to train our generative model to achieve high reward scores for the generated samples and also to preserve the prior distribution of the pretrained model. We can then define the following formulation for training a flow matching model $\dif x_t = v_\theta(x_t, t) \dif t, t \in [0, 1] $ that transforms a standard Gaussian distribution $p_0 = \mathcal{N}(0, I)$ to a target distribution $p_\theta$.

We start with the following optimal control formulation for flow matching alignment problems
\begin{align}
    \min_{\theta} \E_{x_0 \sim p_0, \dot{x}_t = v_\theta(x_t,t)} \left[\frac{\lambda}{2} \int_0^1 \norm{\tilde{v}_\theta(x_t, t)}^2 \dif t-r(x_1)\right], \ v_\theta(x_t,t) \triangleq \vb(x_t,t) + \tilde v_\theta(x_t,t),
\end{align}
where $\tilde{v}_\theta = v_\theta - \vb$ is the residual velocity field and $\lambda$ is the reward multiplier/temperature.  With such relationship, we interchangeably use ${v}_\theta$ and $\tilde{v}_\theta$ to denote the parameterized flow matching model. This program can be interpreted as a control problem where we want to find a deterministic control parameterized by $\tilde{v}$ that minimizes the expected cost of the system, which is defined as the sum of the terminal reward function $r(x_1)$ and the running cost (\ie, regularization term) $\frac{\lambda}{2} \int_0^1 \| \tilde{v}(x_t, t)\|^2 \dif t$.

\begin{remark}[Connection to \Eqref{eqn:optimal-ctrl-objective}]
\label{remark:connect_control}
In our reward funeting setup, the control $u$ from the general optimal control formulation in \Secref{sec:optimal_control} is denoted as $v$ and the terminal time $T$ is set to $1$.
Our dynamics here is $\dot x =  f(x, v, t) \triangleq v(x, t)$, the running cost is defined as in $L(x, v, t) \triangleq \frac{\lambda}{2}\norm{v(x, t) - \vb(x, t)}^2$, the terminal cost is $\Phi(x(T))\triangleq -r(x_1), T=1$, and the value function is $V(x, t)\triangleq \min_v \int_t^1 \frac{\lambda}{2} \norm{v(x_s, s)-\vb(x_s, s)}^2 \dif s-r(x_1)$ for a dynamic starting with $x_t=x$.
\end{remark}

The corresponding HJB equation for the above objective is:
\begin{align}
    \partial_t V(x,t) + \min_{\tilde v} \Bigl[
      \nabla V(x,t) \cdot
      \bigl(\vb(x,t) + \tilde v(x,t)\bigr)
      + \frac{\lambda}{2}
        \| \tilde v(x,t)\|^2    
    \Bigr]
    = 0,
\end{align}
With the first-order condition of the minimization program of 
$\tilde v$ in the HJB equation, we obtain the following optimal control law:
\begin{align}
    \label{eqn:optimal-ctrl-law}
    \text{(Value Gradient Matching)}~~~~~~~~\tilde v^{\star}(x,t) = -\frac{1}{\lambda} \nabla V(x,t).
\end{align}
This optimal control law can be interpreted as a gradient matching criterion, where the residual velocity field $\tilde v^{\star}(x,t)$ should match value function gradient $\nabla V(x,t)$ at state $x$ at time $t$. If an oracle value function is provided, then alignment of the flow matching model can simply be achieved through a ``gradient matching'' loss between the residual velocity field and the oracle value gradient.

\begin{algorithm}[t]
\caption{\methodname algorithm}
\label{alg:ours}
\begin{algorithmic}
\Require Pretrained flow matching model $\vb(x, t)$, given reward function $r(x_1)$, value gradient model $g_\phi(x, t)$ parameterized by \Eqref{eq:vgrad_param}.
\Ensure Finetuned flow matching model $v_\theta(x, t)$
\State Initialize flow matching model  $v_\theta\gets\vb$.
\While{Stopping criterion not met}
\State Collect trajectories 
$\{x_t\}_{t}$ via solving the current neural ODE $\dot x_t = v_\theta(x_t, t)$.
\State Update value gradient model $g_\phi(x, t)$ with loss $\LL_\text{consistency}(\phi) + \alpha \LL_\text{boundary}(\phi)$.
\State Update velocity field model $v_\theta(x, t)$ with loss $\LL_\text{matching}(\theta)$.
\EndWhile
\end{algorithmic}
\end{algorithm}

\subsection{Solving HJB Equation with Value Gradient Guidance}

With the optimal control law (\Eqref{eqn:optimal-ctrl-law}), the HJB equation reduces to
\begin{align}
\label{eq:value_consistency}
    \text{(Value Consistency)}~~~~~\frac{\partial}{\partial t} V(x,t) = \frac{1}{2\lambda} \| \nabla V(x,t) \|^2 - \nabla V(x,t) \cdot \vb(x,t).
\end{align}

While we could in principle solve this equation by parametrizing $V(x,t)$ with a neural net, it is better that we directly parametrize $\nabla V(x,t)$ since it is considerably more effective and robust, as shown in diffusion model and energy-based model literature~\cite{salimans2021should,song2021train}. With $g_\phi(x,t)\triangleq\nabla V_\phi(x,t)$, we may write the equivalent gradient-version HJB equation by taking gradients on both sides:
\begin{align}
    \frac{\partial}{\partial t} g_\phi &= \frac{1}{\lambda} [\nabla g_\phi]^T g_\phi - [\nabla g_\phi]^T\vb(x,t) - [\nabla \vb(x,t)]^T g_\phi  \\
    &= [\nabla g_\phi]^T\left(\frac{1}{\lambda} g_\phi - \vb(x,t) \right) - [\nabla \vb(x,t)]^T g_\phi
\end{align}
with the boundary condition $g_\phi(x,1) = -\nabla r(x)$ at terminal time.

With $\beta = 1 / \lambda$, we write the following set of losses to update value function gradient model $g_\phi(x, t)$:
\begin{align}
    \LL_\text{consistency}(\phi) &= \E_{x_0\sim \mathcal{N}(0, I), \dot x_t = v(x_t,t)}\norm{
        \frac{\partial}{\partial t} g_\phi
        + [\nabla g_\phi]^T
            \bigl( \vb - \beta g_\phi \bigr)
        + [\nabla \vb]^T g_\phi
    }^2,
    \\
    \LL_\text{boundary}(\phi) &= \E_{x_0\sim \mathcal{N}(0, I), \dot x_t = v(x_t,t)}\norm{
        g_\phi(x_1, 1) + \nabla r(x_1)
    }^2.
\end{align}
In practice, this consistency loss based on \Eqref{eq:value_consistency} can be efficiently implemented with finite difference methods and Jacobian-vector products in PyTorch.

Furthermore, with a decently learned value gradient model that captures the optimal control,  we regress our residual velocity field to it to learn our flow matching model $v_\theta$:
\begin{align}
    \LL_\text{matching}(\theta) = \E_{x_0\sim \mathcal{N}(0, I), \dot x_t = v(x_t,t)}\norm{
        \tilde v^{}_\theta(x_t,t)
        + \beta g_\phi(x_t,t)
    }^2.
\end{align}
Notice that we only use this objective to update $\theta$, not $\phi$.
This makes the total training objective $\LL_\text{total}$
\begin{align}
    \LL_\text{total}(\theta, \phi) = \LL_\text{matching}(\theta) + \LL_\text{consistency}(\phi) + \alpha_\text{} \LL_\text{boundary}(\phi),
\end{align}
where $\alpha$ is a coefficient to tune the importance of boundary condition loss in the training.

\textbf{Efficient parametrization of value function gradients.}
Solving the consistency equation for the value function gradient model $g_\phi(x,t)$ can take a non-trivial amount of time. For flow matching models, especially variants like rectified flows, the value of $x_t$ can be well approximated by the reward of the single-Euler-step prediction $\hat x_1 = \hat x_1(x_t, t) \triangleq x_t + (1-t) \cdot \texttt{stop-gradient}(v(x_t,t))$, in which the stop gradient operation is inspired by DreamFusion~\cite{poole2023dreamfusion} and helps improve results. Therefore, we propose to parametrize $g_\phi(x,t)$ with
\begin{align}
\label{eq:vgrad_param}
    g_\phi(x,t) \triangleq -\eta_t\cdot\texttt{stop-gradient}\left(\nabla_{x_t} r(\hat x_1(x_t, t))\right) + \nu_\phi(x_t, t)
\end{align}
where $\eta_t$ is a positive weighting scalar 
and $\nu_\phi(x_t, t)$ is a learnable error correction term which is supposed to be close to zero when $t \to 1$.

\textbf{Putting everything together.} At each training step, our \methodname algorithm simulates trajectories $\dot x_t = v_\theta(x_t, t)$ with an ODE solver, and use the obtained trajectory data to update the value gradient model $g_\phi$ and velocity field model $v_\theta$.
We summarize the proposed method in~\Algref{alg:ours}.

\begin{figure}[t]
    \centering
    \vspace{-6mm}
    \includegraphics[width=0.99\linewidth]{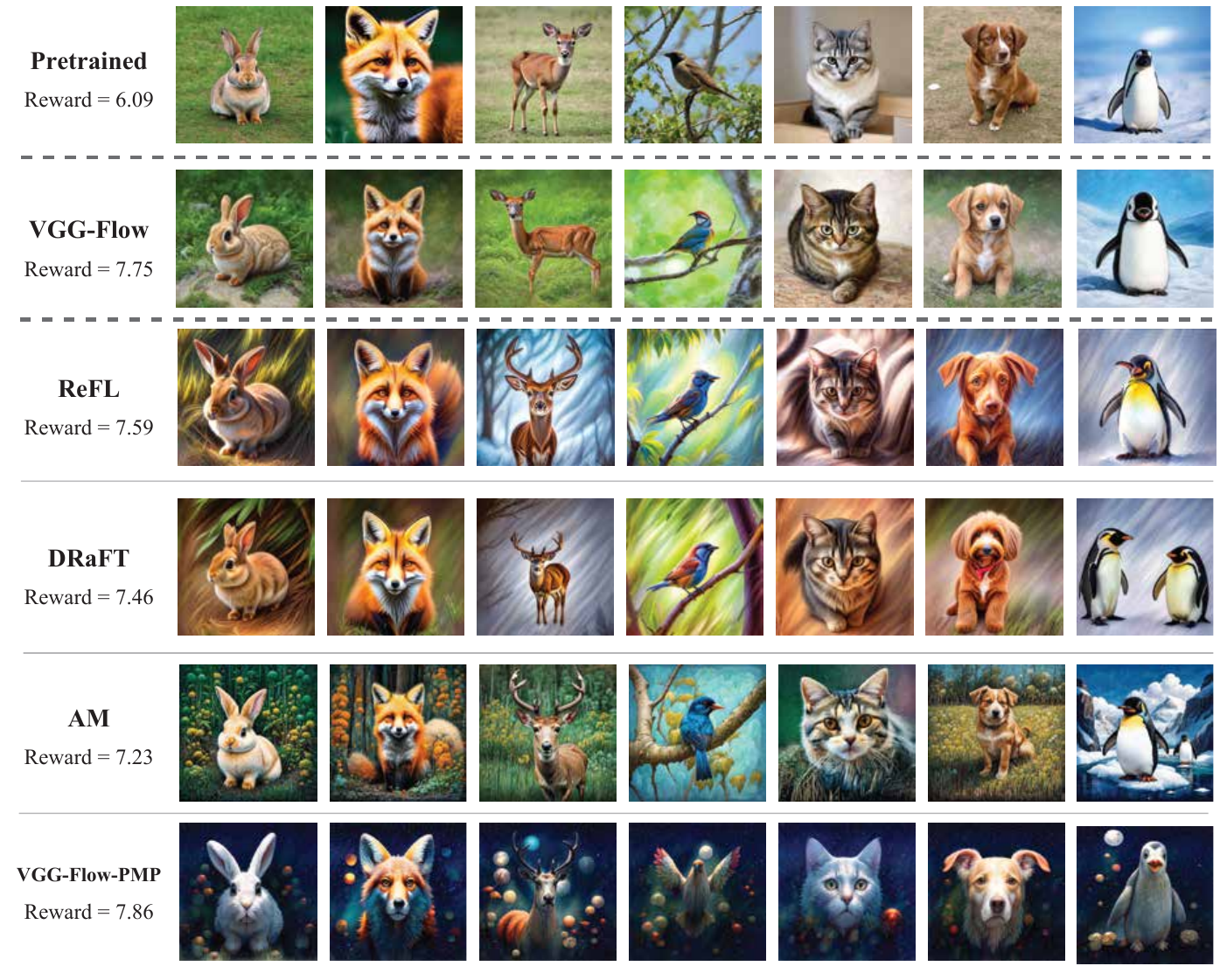}
    \vspace{-2.5mm}
    \caption{\footnotesize
    Comparison on samples generated by models finetuned with different methods. All models are finetuned with a maximum of $400$ update steps and for fair qualitative comparison we pick the model checkpoints that yield the best rewards without significant collapsing in image semantics (as ReFL and DRaFT are more prone to overfitting). For each set of images produced by each method, we display their average reward on the left.
    }
\label{fig:main_qualitative_comparison}
\end{figure}

\vspace{-1mm}
\section{Experiments}
\vspace{-1mm}
\label{sec:experiment}

\begin{figure}[t]
    \vspace{-1mm}
    \centering
    \includegraphics[width=\linewidth]{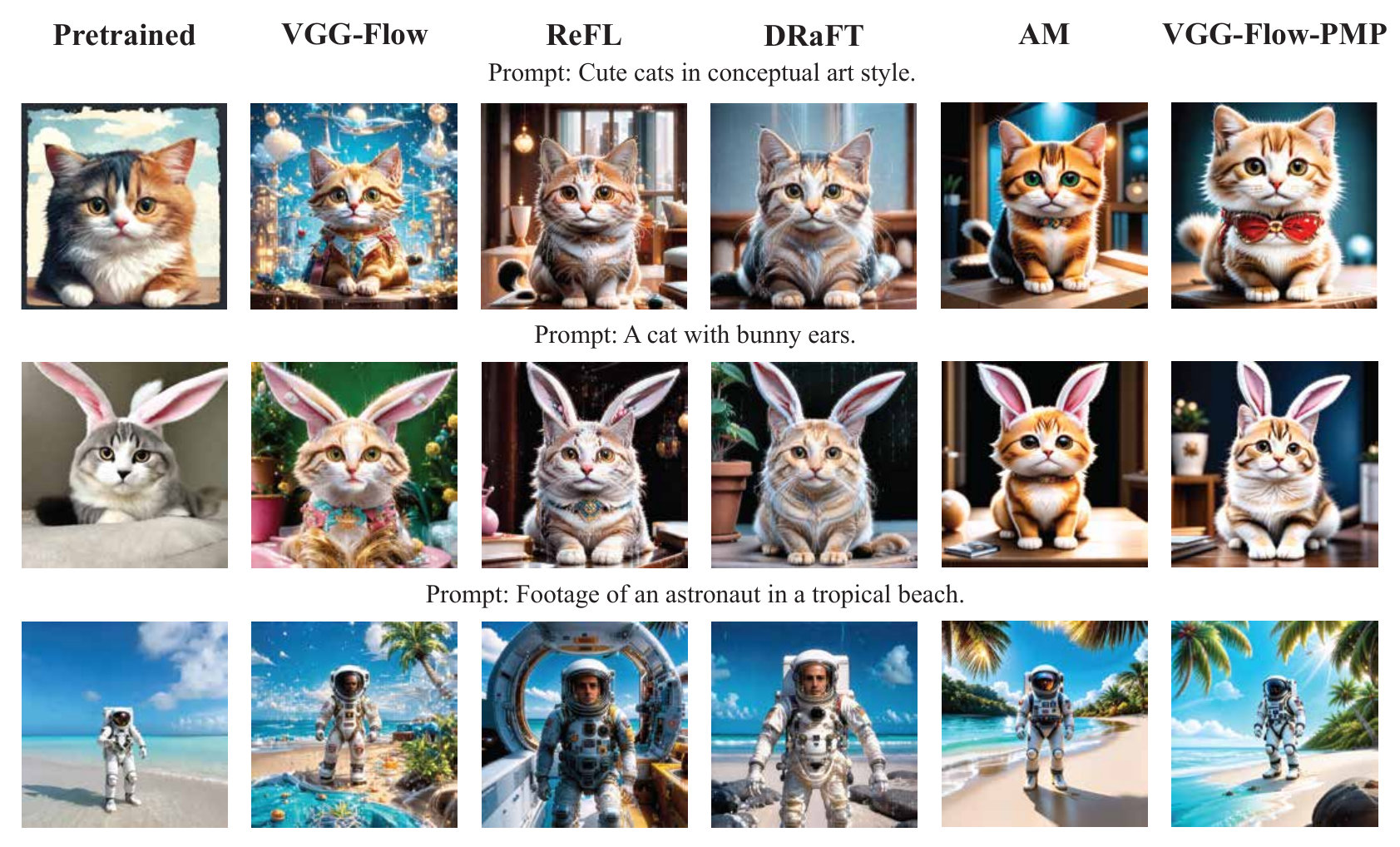}
    \vspace{-5mm}
    \caption{\footnotesize
        Qualitative results on HPSv2.           
    }
    \label{fig:hps}
    \vspace{-2mm}
\end{figure}

\begin{figure}[t]
    \vspace{-1mm}
    \centering
    \includegraphics[width=\linewidth]{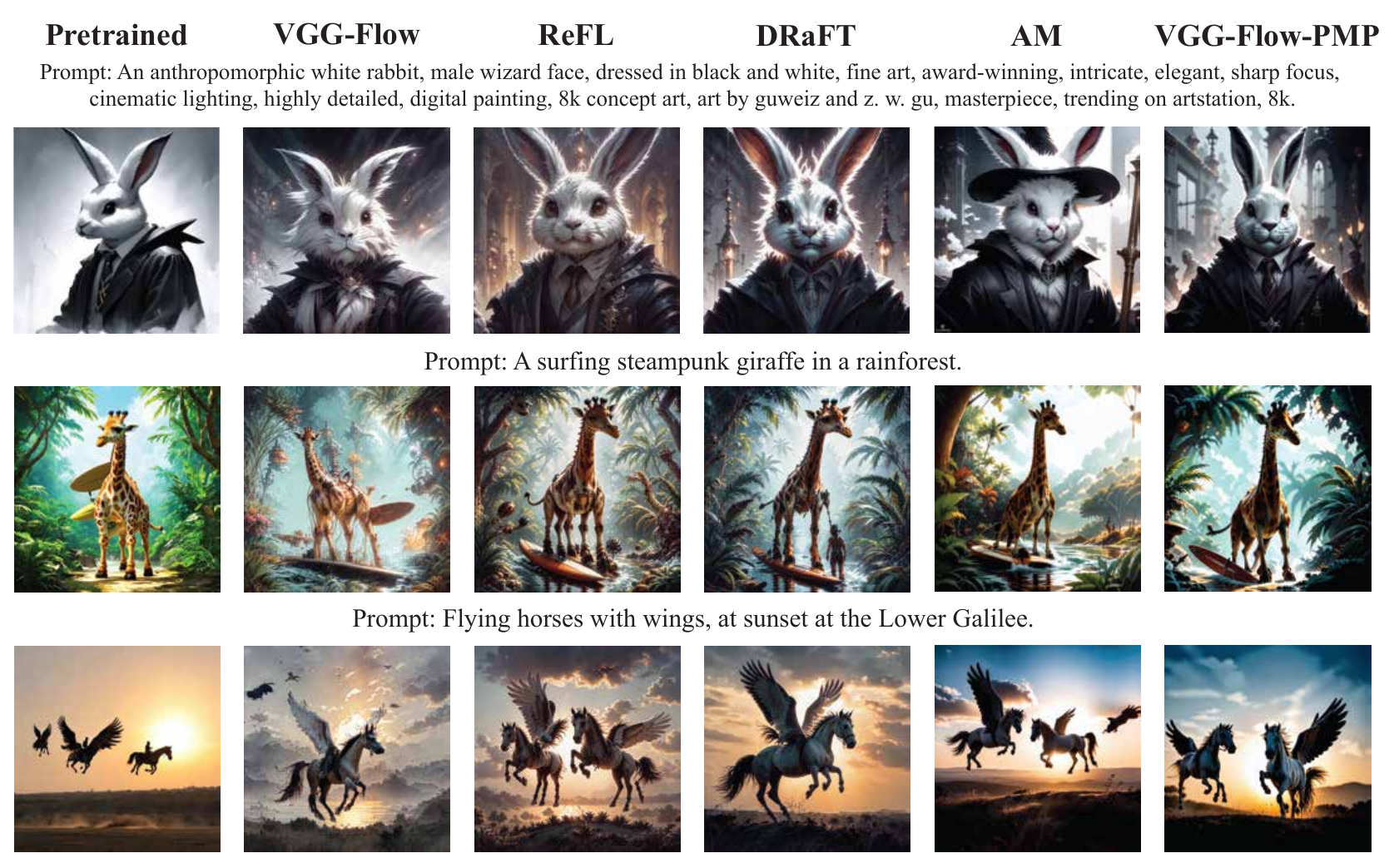}
    \vspace{-5mm}
    \caption{\footnotesize
        Qualitative results on PickScore.           
    }
    \label{fig:pickscore}
    \vspace{-3.5mm}
\end{figure}

\begin{table}[t]
\scriptsize
    \setlength{\tabcolsep}{2.5pt}
    \setlength\extrarowheight{3pt}
    \renewcommand{\arraystretch}{1.27}
    \centering
    \vspace{-2.5mm}
    \begin{tabular}{c|ccccccccc}
         \multirow{3}{*}{\hspace{1mm} Method \hspace{1mm}} & \multicolumn{3}{c}{Aesthetic Score} &  \multicolumn{3}{c}{HPSv2} & \multicolumn{3}{c}{PickScore}
         \\
         & 
         \makecell{Reward \\ ($\uparrow$)} & \makecell{Diversity \\ DreamSim \\($\uparrow$, $10^{-2}$)} & \makecell{FID \\($\downarrow$)} & \makecell{Reward \\ ($\uparrow$, $10^{-1}$)} & \makecell{Diversity \\ DreamSim \\($\uparrow$, $10^{-2}$)} & \makecell{FID \\($\downarrow$)} & \makecell{Reward \\ ($\uparrow$)} & \makecell{Diversity \\ DreamSim \\($\uparrow$, $10^{-2}$)} & \makecell{FID \\($\downarrow$)} \\\shline
         Base (SD3) & 5.99 $\scriptscriptstyle\pm$ {\tiny 0.01} & 23.12 $\scriptscriptstyle\pm$ {\tiny 0.15} & 212 $\scriptscriptstyle\pm$ {\tiny 5} & 2.80 $\scriptscriptstyle\pm$ {\tiny 0.05} & 22.42 $\scriptscriptstyle\pm$ {\tiny 0.29} & 558 $\scriptscriptstyle\pm$ {\tiny 2} & 21.81 $\scriptscriptstyle\pm$ {\tiny 0.02} & 27.81 $\scriptscriptstyle\pm$ {\tiny 0.10} & 589 $\scriptscriptstyle\pm$ {\tiny 5}
         \\\hline
         ReFL & \textbf{10.00} $\scriptscriptstyle\pm$ {\tiny 0.31} & 5.59 $\scriptscriptstyle\pm$ {\tiny 1.33} & 1338 $\scriptscriptstyle\pm$ {\tiny 191} & \textbf{3.87} $\scriptscriptstyle\pm$ {\tiny 0.01} & 14.08 $\scriptscriptstyle\pm$ {\tiny 0.55} & 1195 $\scriptscriptstyle\pm$ {\tiny 21} & 23.19 $\scriptscriptstyle\pm$ {\tiny 0.05} & 17.71 $\scriptscriptstyle\pm$ {\tiny 0.77} & 997 $\scriptscriptstyle\pm$ {\tiny 15}
         \\
         DRaFT & 9.54 $\scriptscriptstyle\pm$ {\tiny 0.14} & 7.78 $\scriptscriptstyle\pm$ {\tiny 0.60} & 1518 $\scriptscriptstyle\pm$ {\tiny 111} & 3.76 $\scriptscriptstyle\pm$ {\tiny 0.02} & 15.05 $\scriptscriptstyle\pm$ {\tiny 1.23} & 1177 $\scriptscriptstyle\pm$ {\tiny 29} & 23.00 $\scriptscriptstyle\pm$ {\tiny 0.08} & 19.03 $\scriptscriptstyle\pm$ {\tiny 0.92} & \textbf{968} $\scriptscriptstyle\pm$ {\tiny 26}
         \\
         AM & 6.87 $\scriptscriptstyle\pm$ {\tiny 0.17} & \textbf{22.34} $\scriptscriptstyle\pm$ {\tiny 2.39} & 465 $\scriptscriptstyle\pm$ {\tiny 93} & 3.59 $\scriptscriptstyle\pm$ {\tiny 0.03} &  14.11 $\scriptscriptstyle\pm$ {\tiny 0.26} & 1246 $\scriptscriptstyle\pm$ {\tiny 24} & 22.78 $\scriptscriptstyle\pm$ {\tiny 0.04} & 19.70 $\scriptscriptstyle\pm$ {\tiny 1.08} & 1033 $\scriptscriptstyle\pm$ {\tiny 59}
         \\
         VGG-Flow-PMP & 7.52 $\scriptscriptstyle\pm$ {\tiny 0.16} &  11.17 $\scriptscriptstyle\pm$ {\tiny 1.67}  & 1170 $\scriptscriptstyle\pm$ {\tiny 213} & 3.57 $\scriptscriptstyle\pm$ {\tiny 0.05} & 15.36 $\scriptscriptstyle\pm$ {\tiny 0.06} & 1195 $\scriptscriptstyle\pm$ {\tiny 5} & 22.10 $\scriptscriptstyle\pm$ {\tiny 3.61} & 16.78 $\scriptscriptstyle\pm$ {\tiny 1.13} & 1148 $\scriptscriptstyle\pm$ {\tiny 47}
         \\\rowcolor{Gray}
         \methodname & 8.24 $\scriptscriptstyle\pm$ {\tiny 0.07} & 22.12 $\scriptscriptstyle\pm$ {\tiny 0.17}  & \textbf{375} $\scriptscriptstyle\pm$ {\tiny 25} & 3.86 $\scriptscriptstyle\pm$ {\tiny 0.03} & \textbf{18.40} $\scriptscriptstyle\pm$ {\tiny 1.12} & \textbf{1161} $\scriptscriptstyle\pm$ {\tiny 19} & \textbf{23.21} $\scriptscriptstyle\pm$ {\tiny 0.05} & \textbf{20.93} $\scriptscriptstyle\pm$ {\tiny 0.98} & 1058 $\scriptscriptstyle\pm$ {\tiny 31}\\
    \end{tabular}
    \vspace{1mm}
    \caption{
    \footnotesize
    Comparison between the models finetuned with our proposed method \methodname and the baselines. All models are finetuned with 400 update steps. Since there are inherent trade-offs between reward and other metrics, the values at the final update step do not fully capture the differences between methods. We therefore refer readers to the Pareto front comparisons in Figs.~\ref{fig:general_tradeoff}, \ref{fig:hps_tradeoff}, and \ref{fig:pickscore_tradeoff} for a more comprehensive evaluation.
    }
    
    \label{table:general_results_table}
    \vspace{-5mm}
\end{table}

\subsection{Experiment Settings}

\textbf{Base model.}
Throughout the paper, we consider the popular open-sourced text-conditioned flow matching model Stable Diffusion 3~\cite{esser2024scaling} and a 20-step Euler solver to sample trajectories.

\textbf{Reward model.}
We consider three reward models learned from large-scale human preference datasets: Aesthetic Score \cite{laion_aesthetic_2024}, Human Preference Score (HPSv2) \cite{wu2023humanv2, wu2023hpsv1}, and PickScore~\cite{kirstain2023pick}. 

\textbf{Prompt dataset.}
For Aesthetic Score, we use a set of simple animal prompts used in the original DDPO paper~\cite{black2024training}; for HPSv2, we consider photo+painting prompts from the human preference dataset (HPDv2) \cite{wu2023humanv2}; for PickScore, we use the prompt set in the Pic-a-Pick dataset~\cite{kirstain2023pick}. 

\textbf{Metrics.}
We follow previous works \cite{domingo-enrich2025adjoint, liu2025efficient} and compute the variance of latent features (both DreamSim features~\cite{fu2023dreamsim} and CLIP features~\cite{radford2021learning, hessel2021clipscore}) extracted from a batch of generated images (we use a batch of size $16$) to measure sample diversity. To measure the degree of prior preservation, we compute the per-prompt FID score between image sets generated by the finetuned model and the base model and use the average per-prompt FID score as the prior preservation metric.

\textbf{Baselines.}
We consider two types of baselines: the generic ones that rely on direct reward maximization with truncated computation graphs, including ReFL \cite{xu2024imagereward} and DRaFT \cite{clark2024directly}, and the adjoint-based Adjoint Matching method~\cite{domingo-enrich2025adjoint}. 
Specifically, ReFL samples a trajectory and take the truncated computational graph $\texttt{stop-gradient}(x_{t}) \to x_{t+\Delta t}$ with a randomly sampled time step $t$. The model is finetuned to maximize the reward $r(\hat x_1(x_t, t))$ of the single-step prediction $\hat x_1(x_t, t)$. Similarly, DRaFT truncate the full inference computational graph at some random time step $1 - K\Delta t$ and perform backpropagation on this length $K$ graph with the differentiable reward signal $r(x_1)$. 
For optimal-control-based baselines, we consider adjoint matching~\cite{domingo-enrich2025adjoint}, which finetunes flow matching model under stochastic settings (details in Appendix~\ref{sec:adjoint_matching_pmp}). Additionally, we consider a variant of VGG-Flow  which, derived with the Pontryagin's Maximum Principle~\cite[PMP]{liberzon2011calculus}, follow an adjoint-matching-like algorithm but with slightly different evolution equations (Appendix~\ref{sec:adjoint_matching_pmp}).

\textbf{Experiment settings and implementation details.}
We use LoRA parametrization~\cite{hu2022lora} on attention layers of the finetuned flow matching model with a LoRA rank of $8$.
The value gradient network in \methodname is set to be a scaled-down version of the Stable Diffusion-v1.5 U-Net, initialized with tiny weights in the final output layers. Since Stable Diffusion 3 is a latent flow matching model, the reward for a sampled image $x_1$ is $r(\text{decode}(x_1))$ where $\text{decode}(\cdot)$ is the VAE decoder of the Stable Diffusion 3 model. This decoder is always frozen and we only finetune the LoRA parameters. For all experiments, we use $3$ random seeds. For Aesthetic Score, HPSv2 and PickScore experiments, we set the default inverse temperature terms $\beta = 1/\lambda$ to 5e4, 3e7 and 5e5, respectively; for all ablation studies with Aesthetic Score, we set $\beta = $~1e4. We set the boundary loss coefficient $\alpha$ to $10000$ for all experiments. We use an effective batch size of $32$ for all methods, and only use on-policy samples without any replay buffer. For \methodname, we sub-sample the collected trajectories by uniformly splitting each into $5$ bins and then taking one transition out of each; we also clip the computed reward gradients in Eqn.~\ref{eq:vgrad_param} at the 80th percentile of the gradient norms of the corresponding training batches. For ReFL, we sample the truncation time step between $15$ and $20$. We follow prior work~\cite{xu2024imagereward, prabhudesai2023aligning} use $\text{ReLU}(r(x))$ as the reward model for both ReFL and DRaFT for stable training. 
For experiments on adjoint matching (AM), we use 4 GPUs for each run and set the inverse temperature $\beta$ to  $5\times 10^{3}, 3\times 10^{5}, 1\times 10^{4}$
for Aesthetic, HPS, PickScore, respectively, based on the same hyperparameter choosing protocol from~\cite{domingo-enrich2025adjoint}. All AM experiments use float32 computation and drop samples that result in too large gradient norms.

\vspace{-1mm}
\subsection{Results}
\vspace{-1mm}

\textbf{General experiments.}
We show in Figure~\ref{fig:main_qualitative_comparison} the visualization of samples produced by both the base model and the models finetuned on Aesthetic Score. As ReFL and DRaFT finetuning can easily lead to reward hacking, we perform early stopping and pick the model checkpoints without major loss of image semantics and with the highest reward values possible. Compared to ReFL and DRaFT, out proposed \methodname produces higher rewards with better preservation of semantic prior from the base Stable Diffusion 3 model. Our method also works well on other reward models, including HPSv2 and PickScore, as shown in Figure~\ref{fig:hps} and~\ref{fig:pickscore}. To further illustrate the advantage of our method and the tradeoffs between reward convergence and other metrics, we present quantitative results in Table~\ref{table:general_results_table}, Figure~\ref{fig:general_results} and Figure~\ref{fig:general_tradeoff}. Specifically, we observe that \methodname achieves comparable speed with respect to direct reward maximization methods (ReFL and DRaFT) but better maintains sample diversity (measured by DreamSim and CLIP diversity score) and base model prior (measure by FID score). 
We observe that ReFL and DRaFT on Aesthetic Score easily achieves reward values close to $9$, of which value typically indicates complete forgetting of base model prior~\cite{clark2024directly}. Furthermore, the Pareto front figures of reward values, diversity scores and FID scores show that our \methodname achieves better diversity/FID scores at the same level of reward values -- demonstrating that our \methodname outperforms the baselines even if we perform early stopping.

\begin{figure}[t!]
    \centering
    \vspace{-1.1em}
    \includegraphics[width=0.6\linewidth]{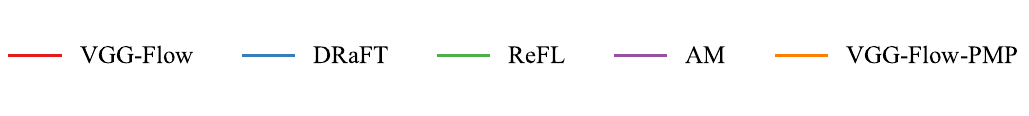}%
    \vspace{-1.1em}

    \includegraphics[width=0.24\linewidth]{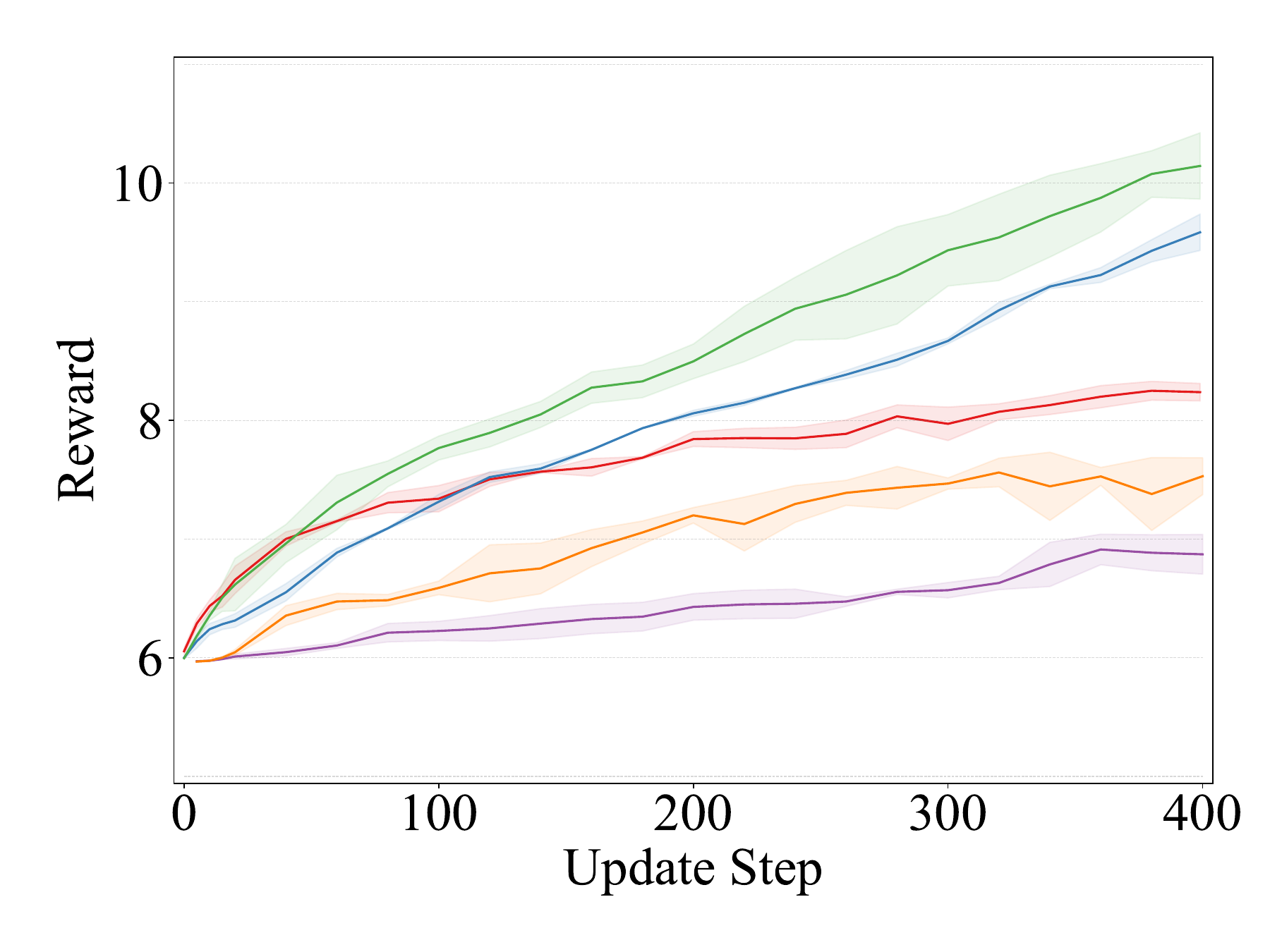}
    \hspace{-0.4em}
    \includegraphics[width=0.24\linewidth]{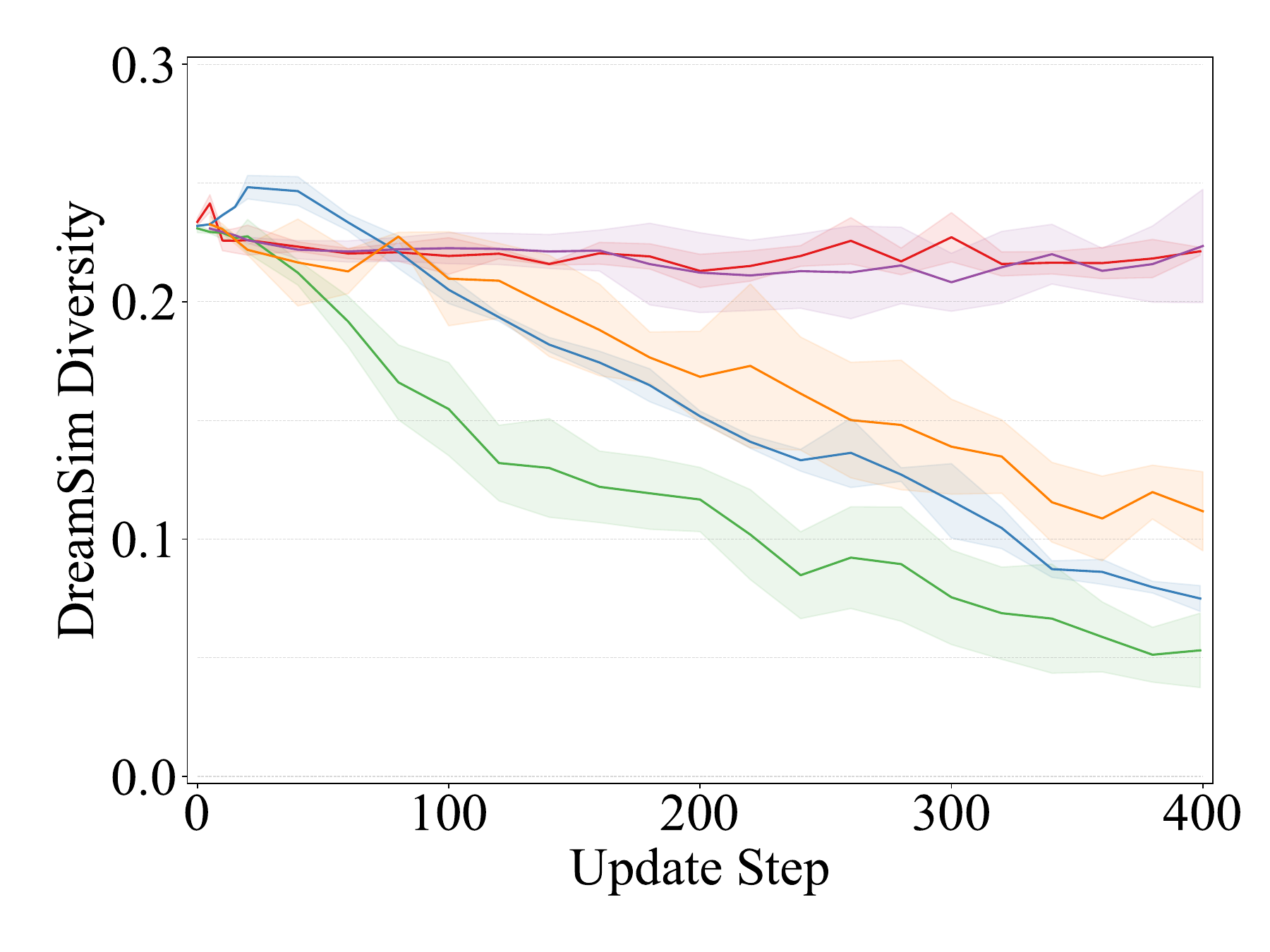}
    \hspace{-0.4em}
    \includegraphics[width=0.24\linewidth]{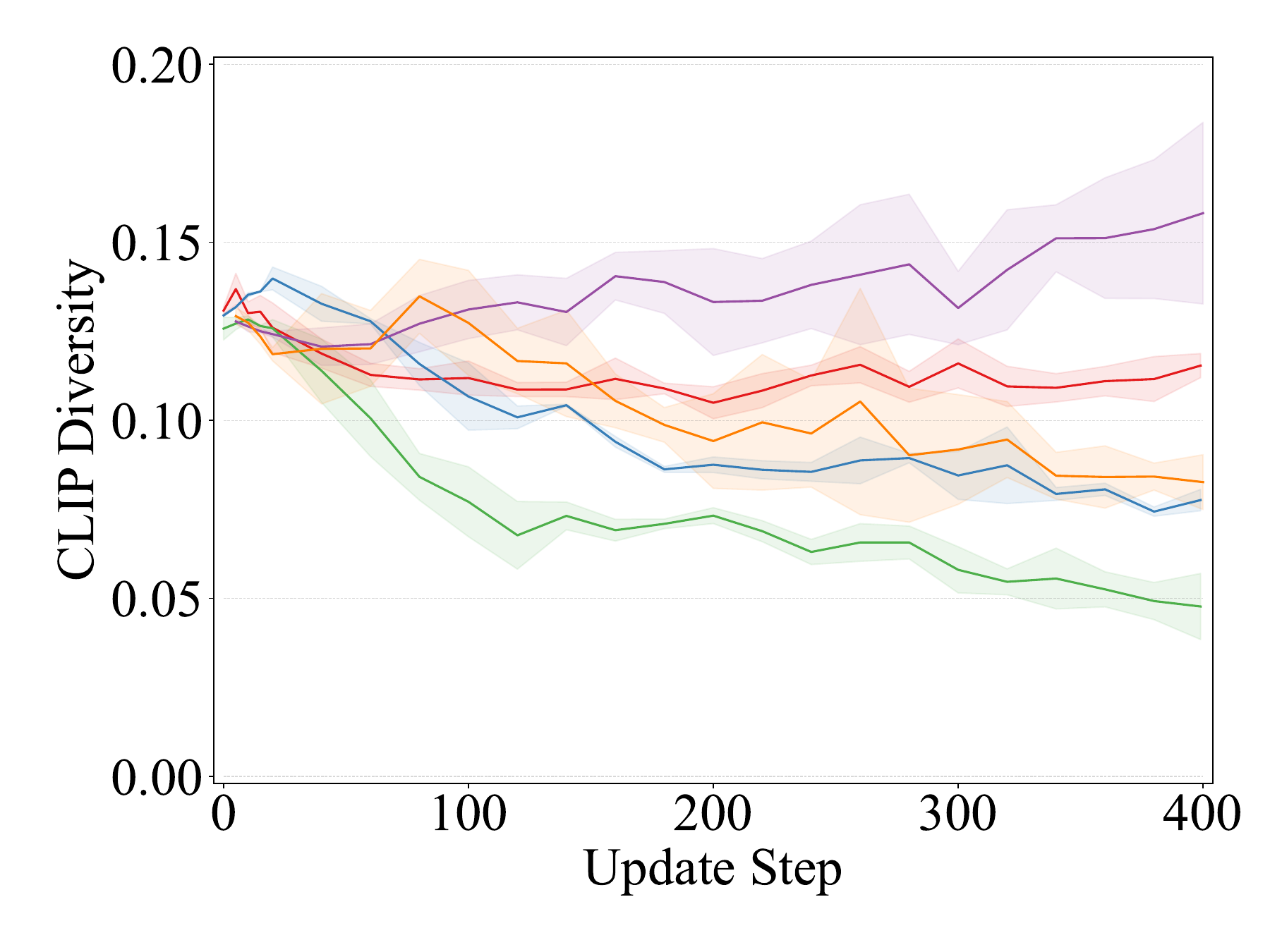}
    \hspace{-0.4em}
    \includegraphics[width=0.24\linewidth]{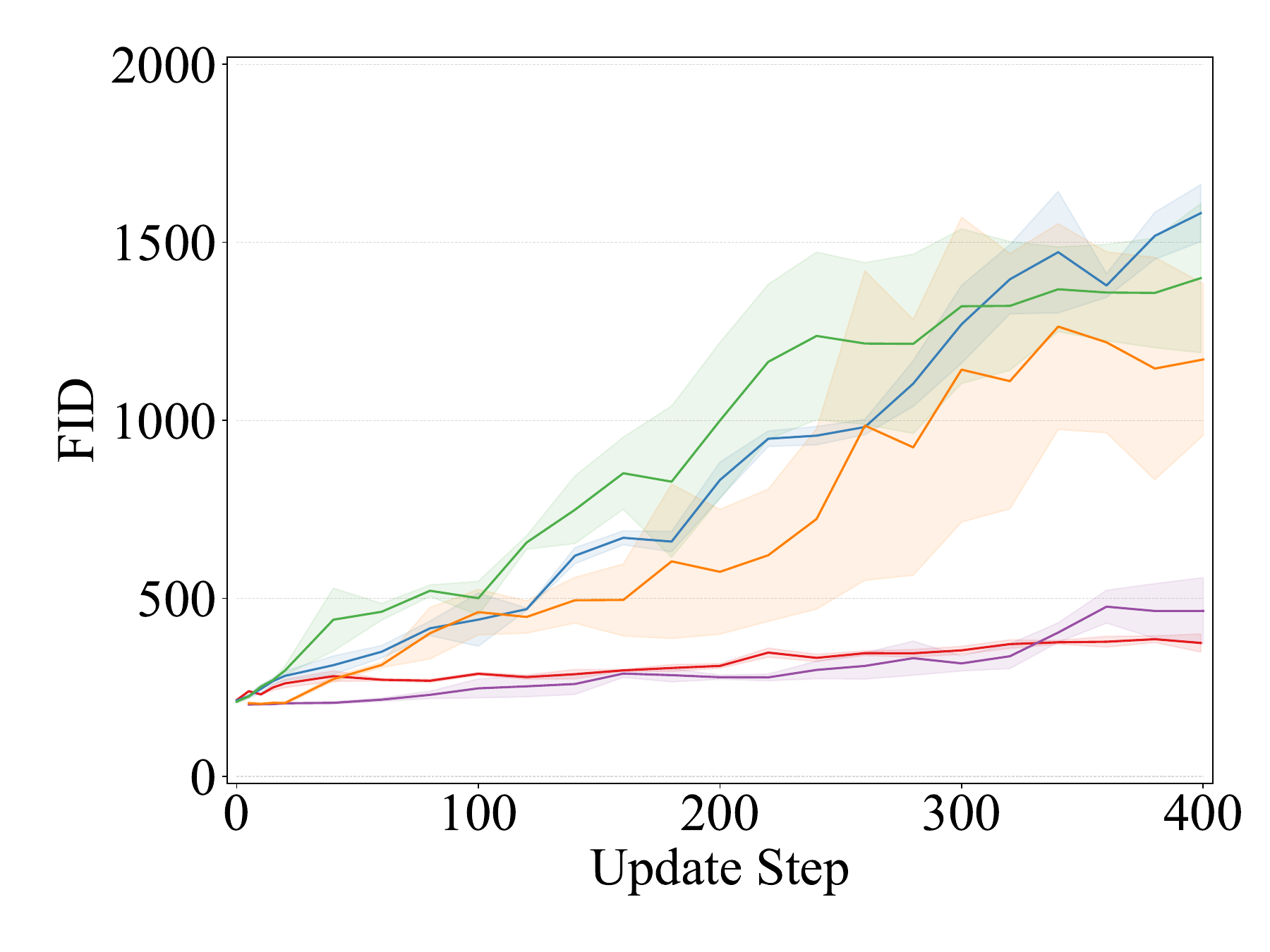}
    \caption{\footnotesize 
        Convergence curves of different metrics for different methods throughout the finetuning process on Aesthetic Score. Finetuning with our proposed \methodname converges faster than the non-gradient-informed methods and with better diversity- and prior-preserving capability.
    }
    \label{fig:general_results}
    \vspace{-3mm}
\end{figure}

\begin{figure}[t!]
    \centering

    \includegraphics[width=0.33\linewidth]{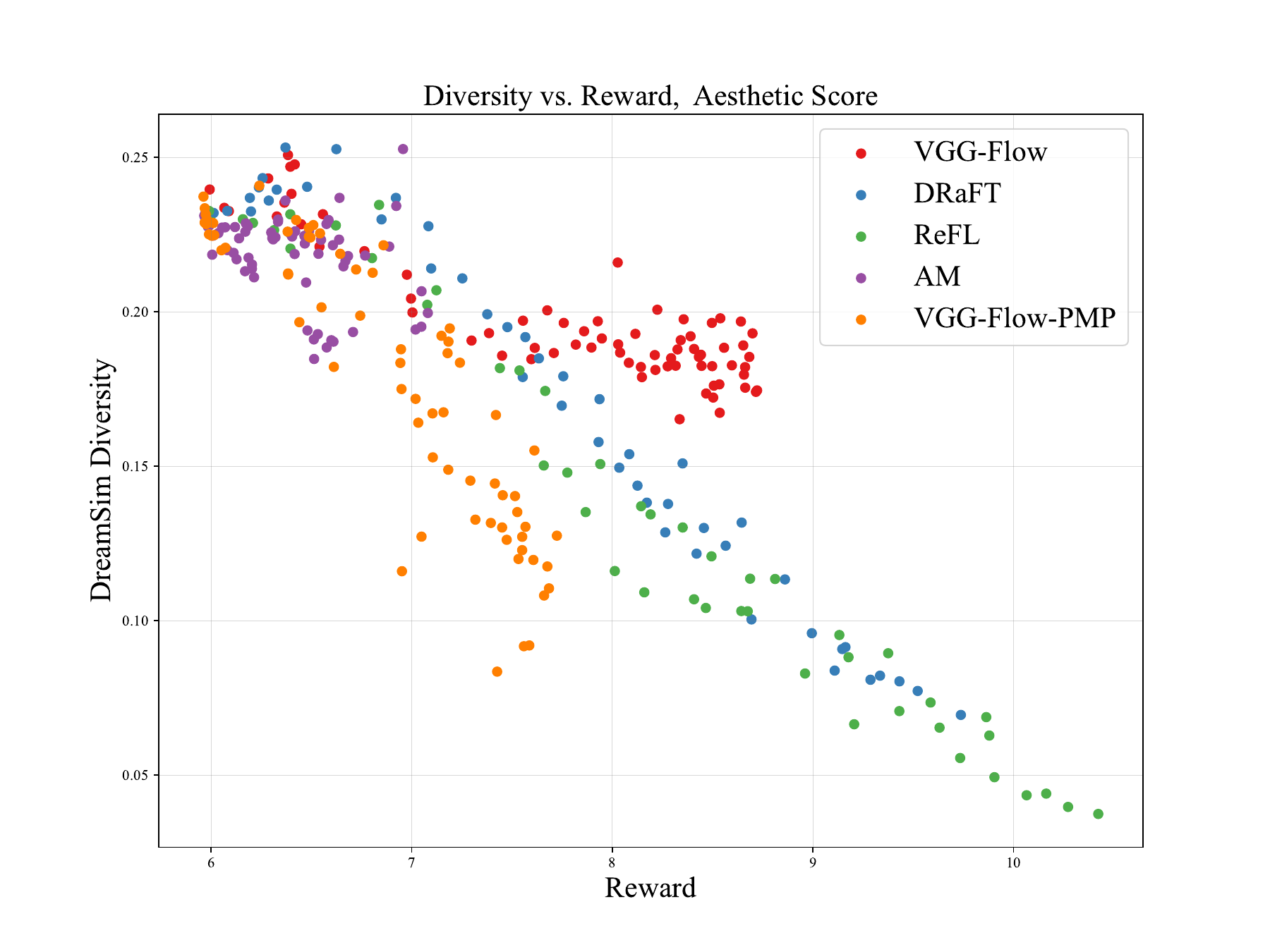}
    \hspace{-5mm}
    \includegraphics[width=0.33\linewidth]{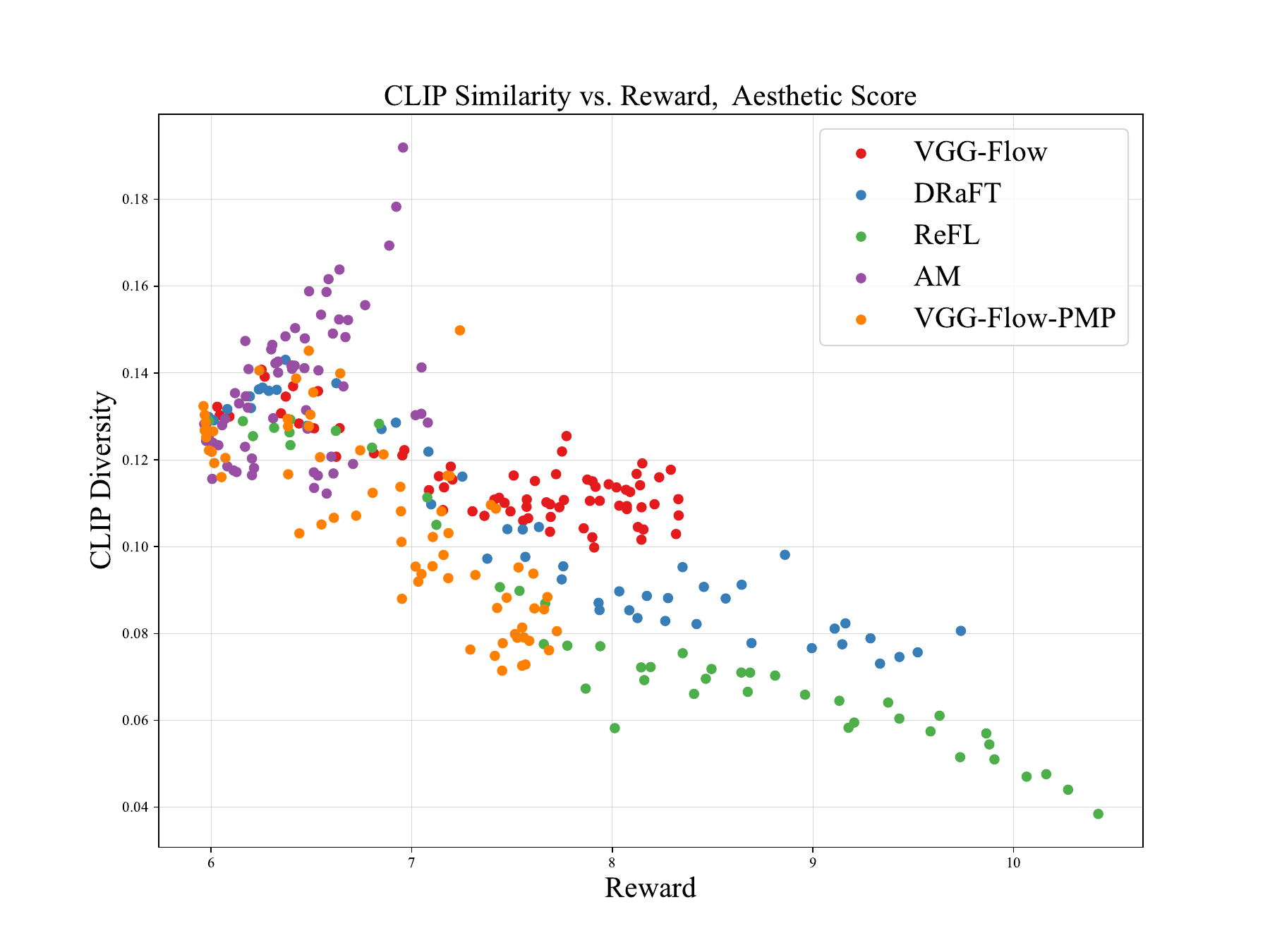}
    \hspace{-5mm}
    \includegraphics[width=0.33\linewidth]{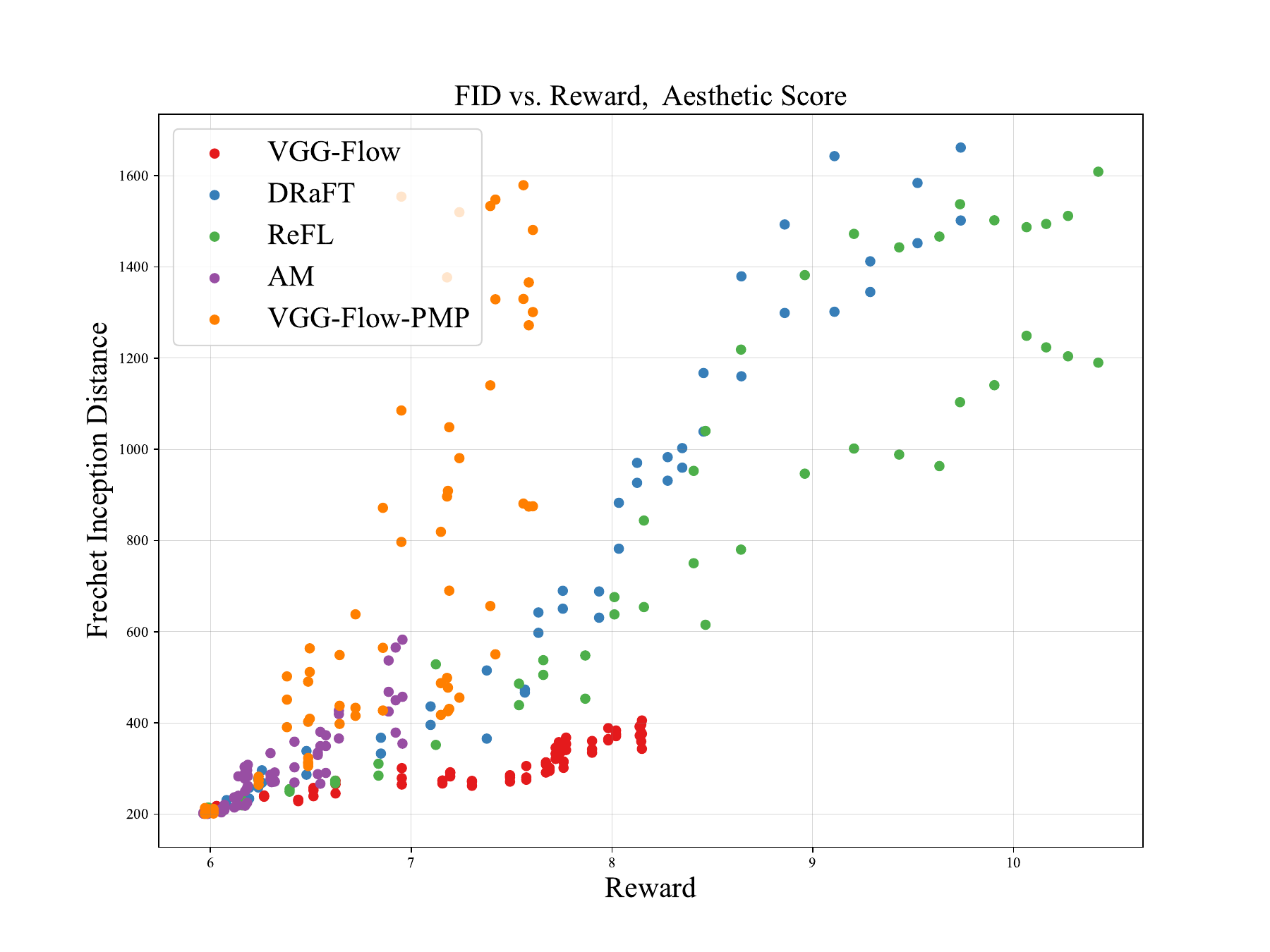}
    \vspace{-2mm}
    \caption{
        \footnotesize Trade-offs between reward, diversity preservation and prior preservation for different reward finetuning methods on Aesthetic Score. Dots represent the evaluation results of models checkpoint saved after every 5 iterations of finetuning, where ones with greater reward, greater diversity scores and smaller FID scores are considered better.
    }
    \label{fig:general_tradeoff}
    \vspace{-5mm}
\end{figure}

\begin{figure}[t!]
    \centering
    \vspace{-1.1em}
    \includegraphics[width=0.6\linewidth]{figs/aesthetic_evo_legend.pdf}%
    \vspace{-1.1em}

    \includegraphics[width=0.24\linewidth]{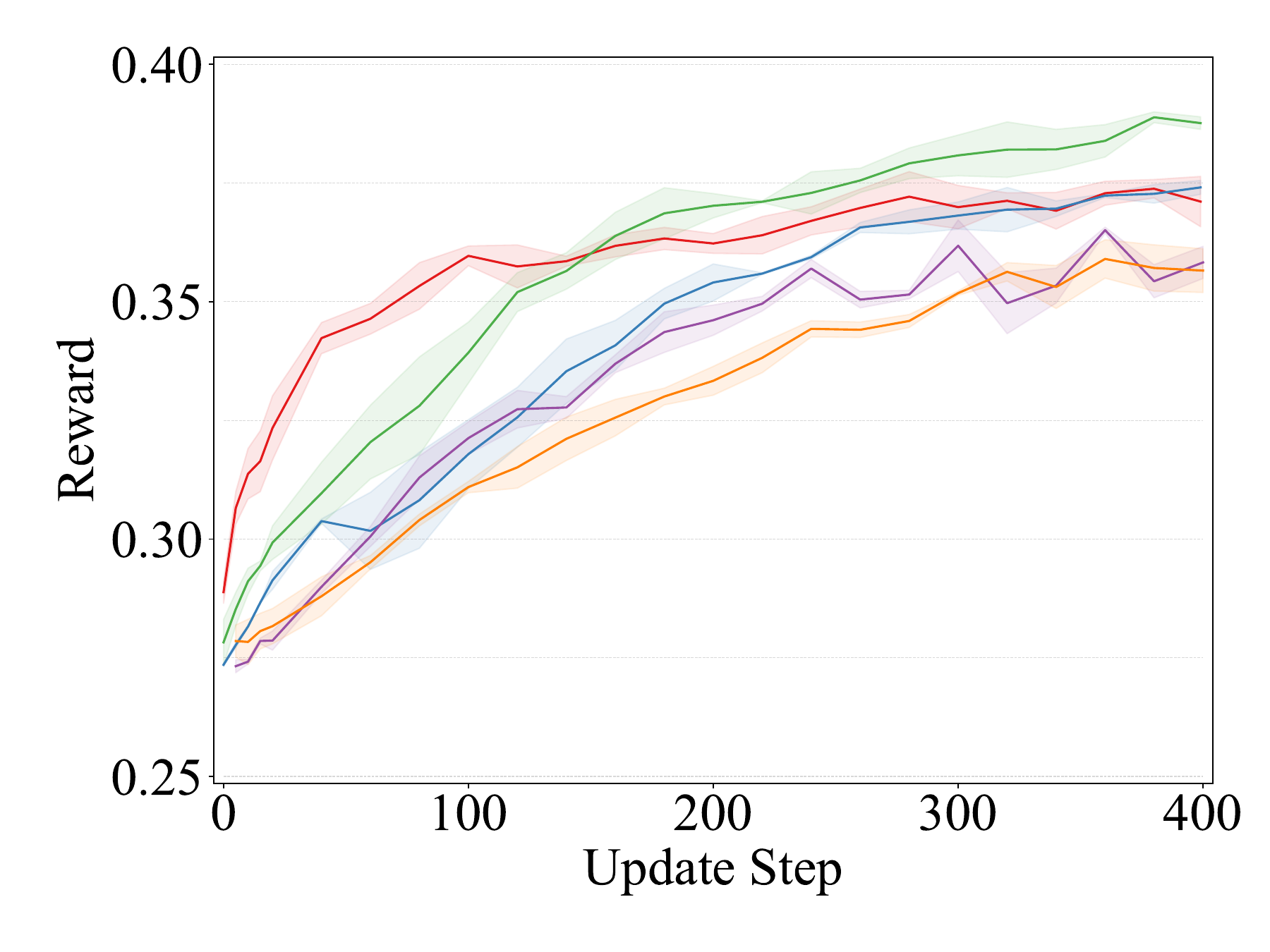}
    \hspace{-0.4em}
    \includegraphics[width=0.24\linewidth]{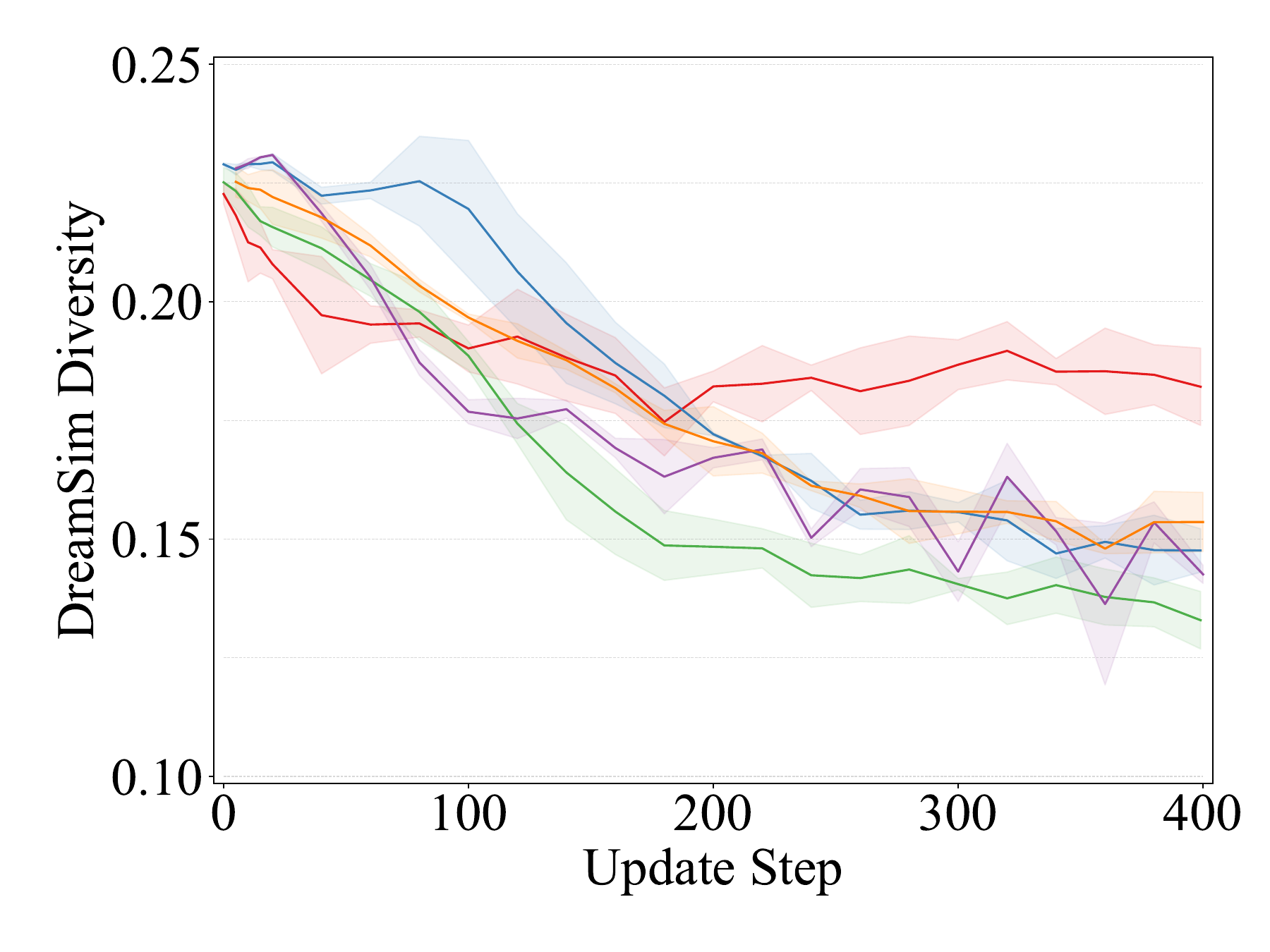}
    \hspace{-0.4em}
    \includegraphics[width=0.24\linewidth]{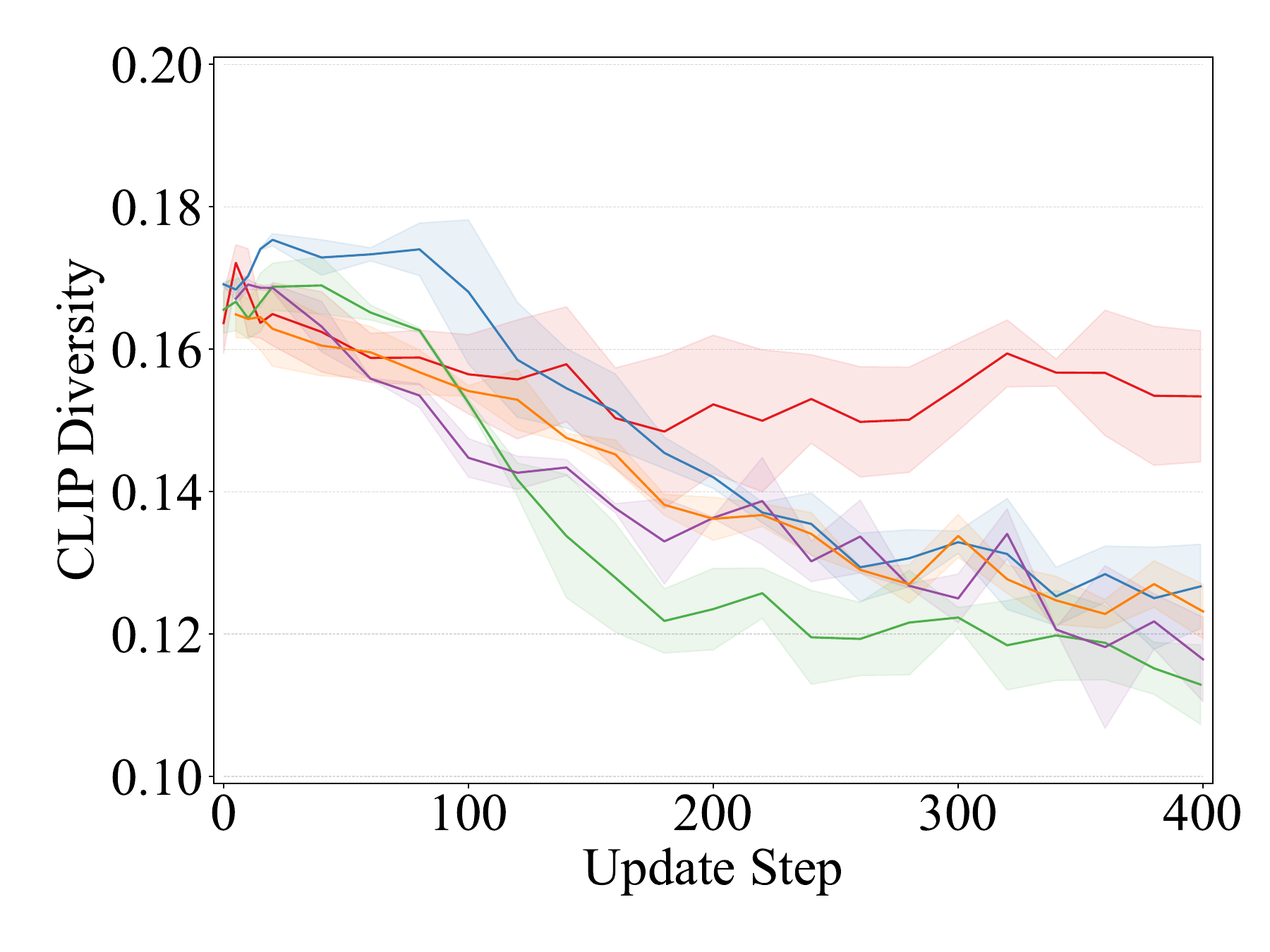}
    \hspace{-0.4em}
    \includegraphics[width=0.24\linewidth]{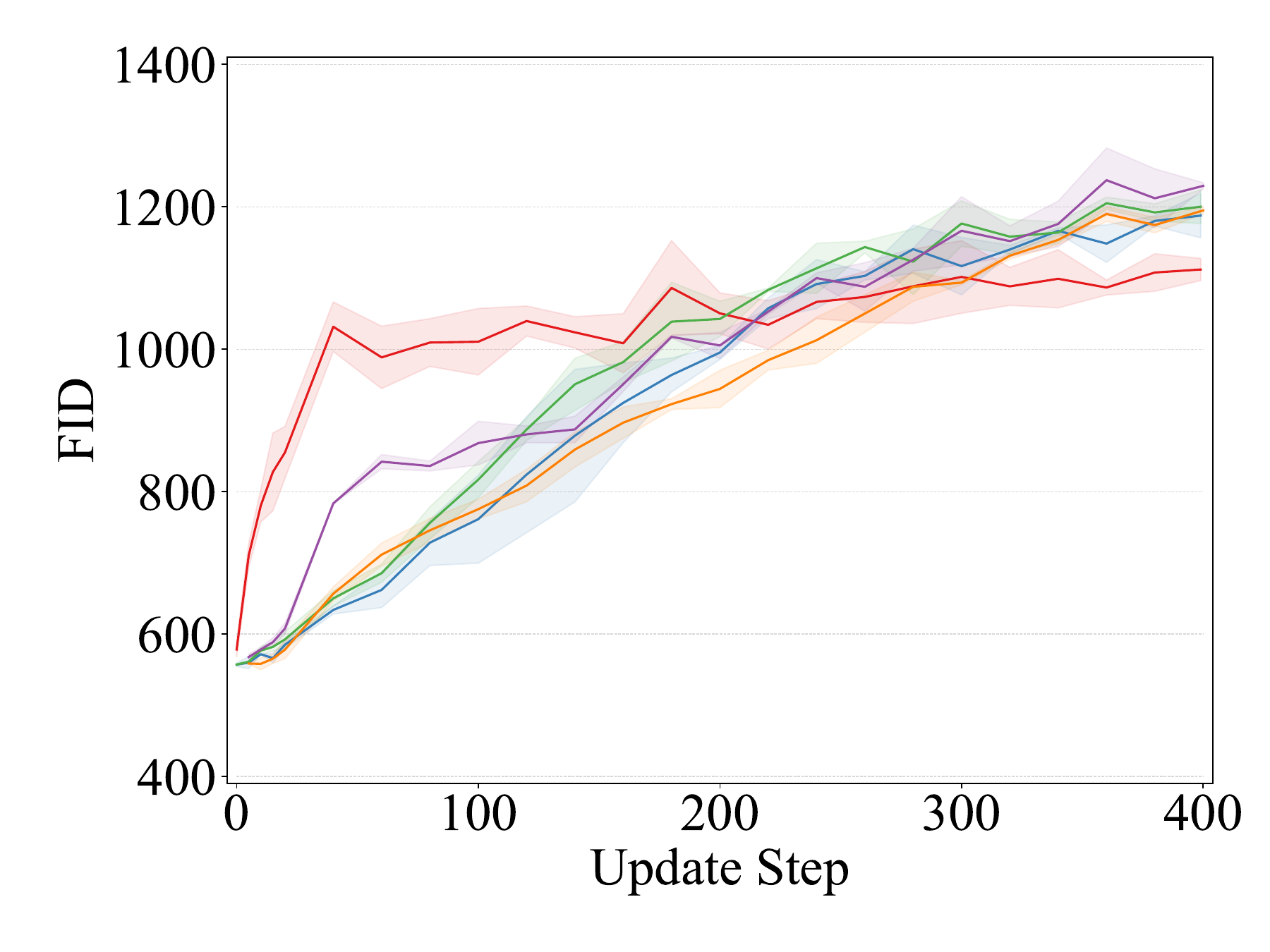}
    \caption{\footnotesize 
        Convergence of different metrics for different methods throughout the finetuning process on HPSv2.
    }
    \label{fig:hps_results}
    \vspace{-1mm}
\end{figure}

\begin{figure}[t!]
    \vspace{-2mm}
    \centering
    
    \includegraphics[width=0.33\linewidth]{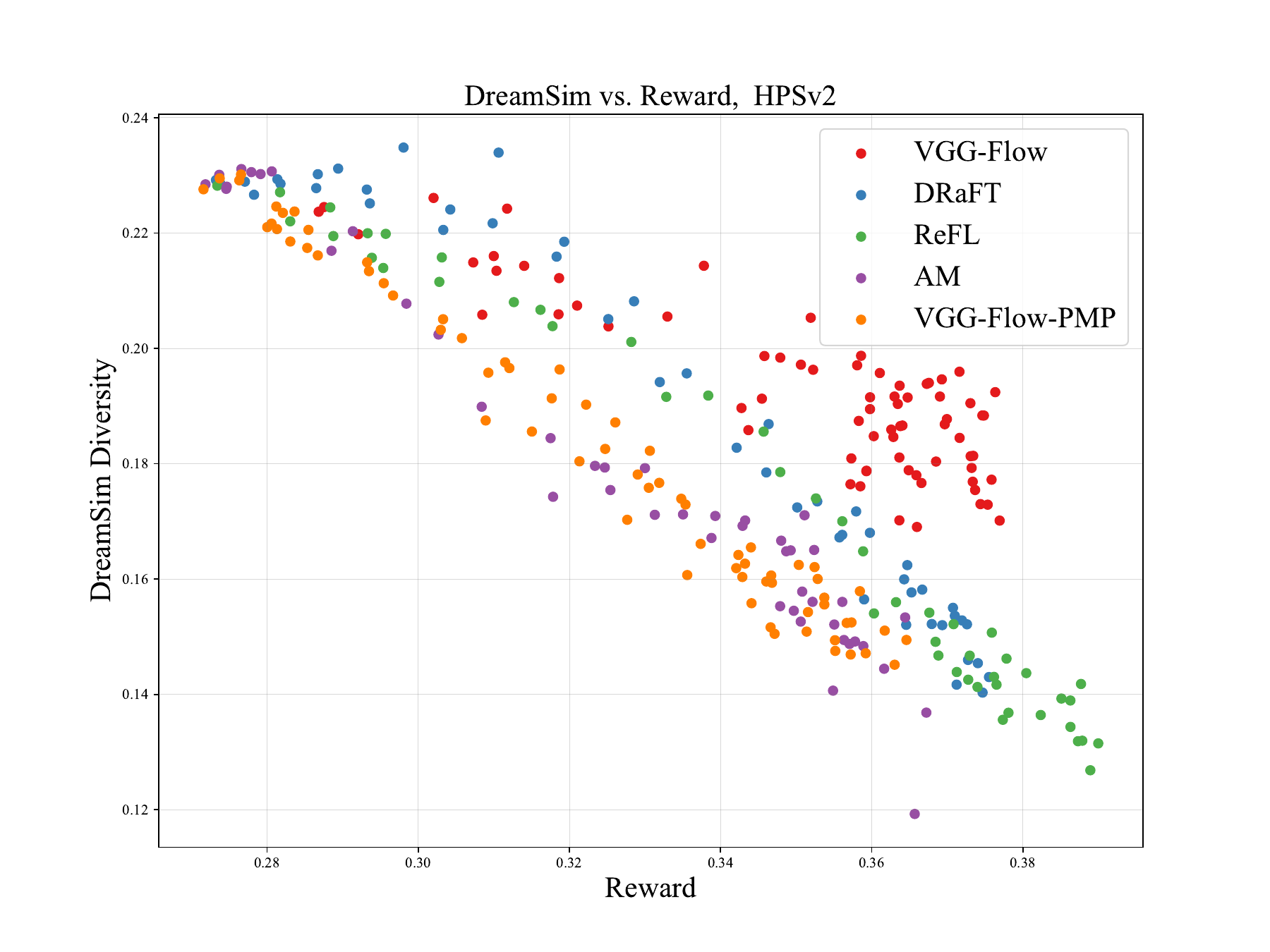}
    \hspace{-5mm}
    \includegraphics[width=0.33\linewidth]{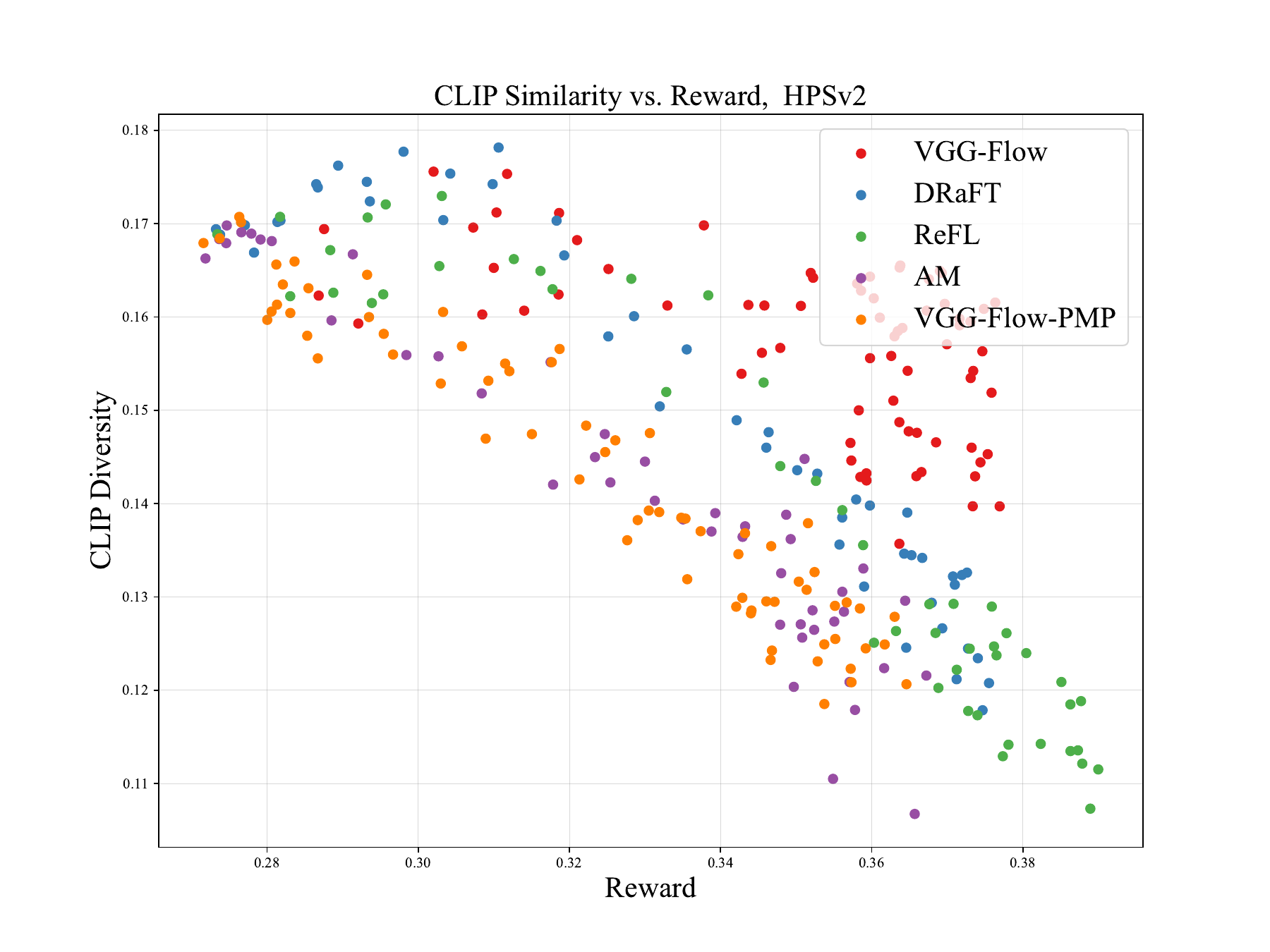}
    \hspace{-5mm}
    \includegraphics[width=0.33\linewidth]{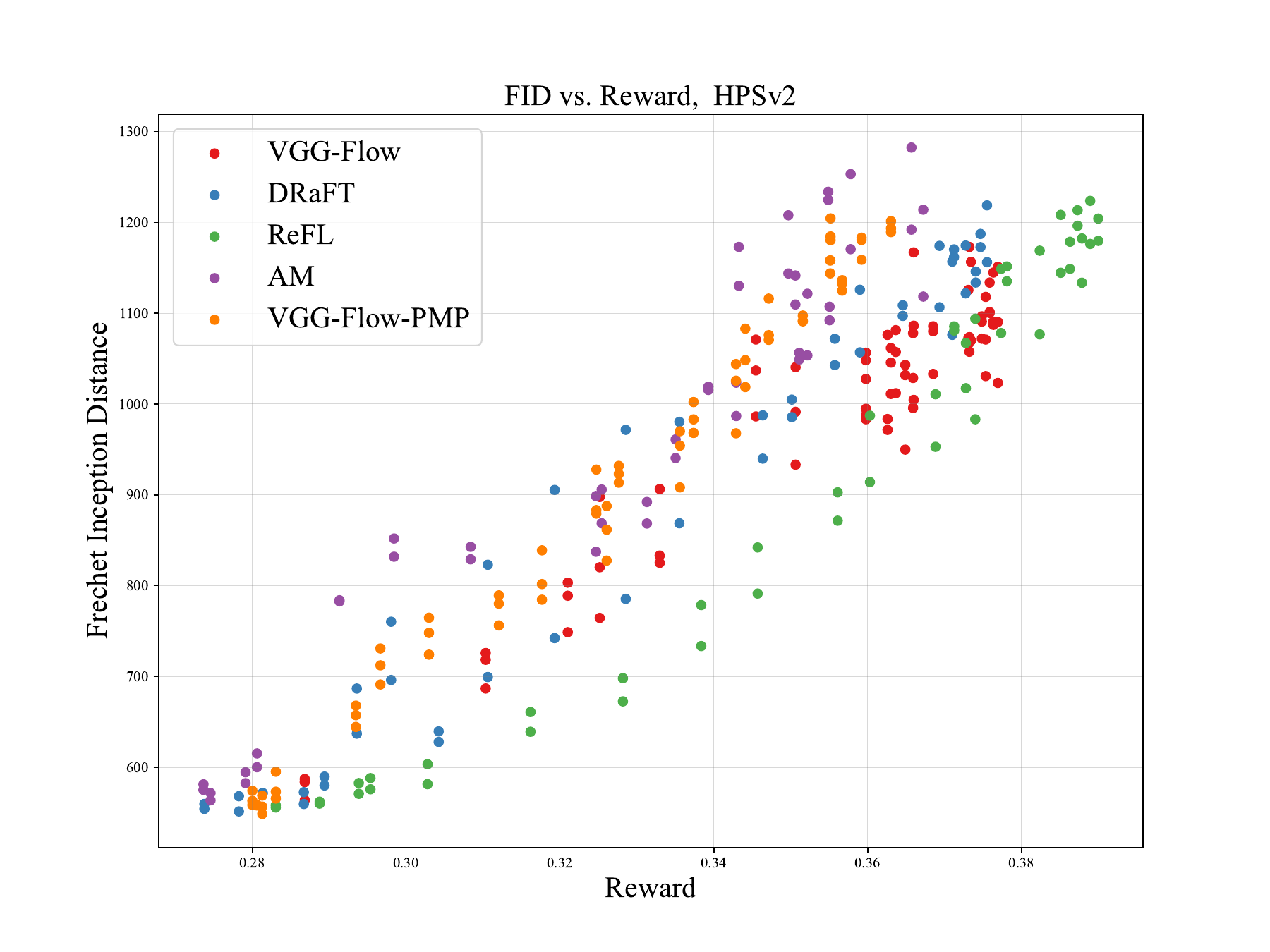}
    \vspace{-2mm}
    \caption{
        \footnotesize Trade-offs between metrics for different reward finetuning methods (experiments on HPSv2).
    }
    \label{fig:hps_tradeoff}
    \vspace{-1mm}
\end{figure}

\begin{figure}[t!]
    \centering
    \vspace{-1.1em}
    \includegraphics[width=0.6\linewidth]{figs/aesthetic_evo_legend.pdf}%
    \vspace{-1.1em}

    \includegraphics[width=0.24\linewidth]{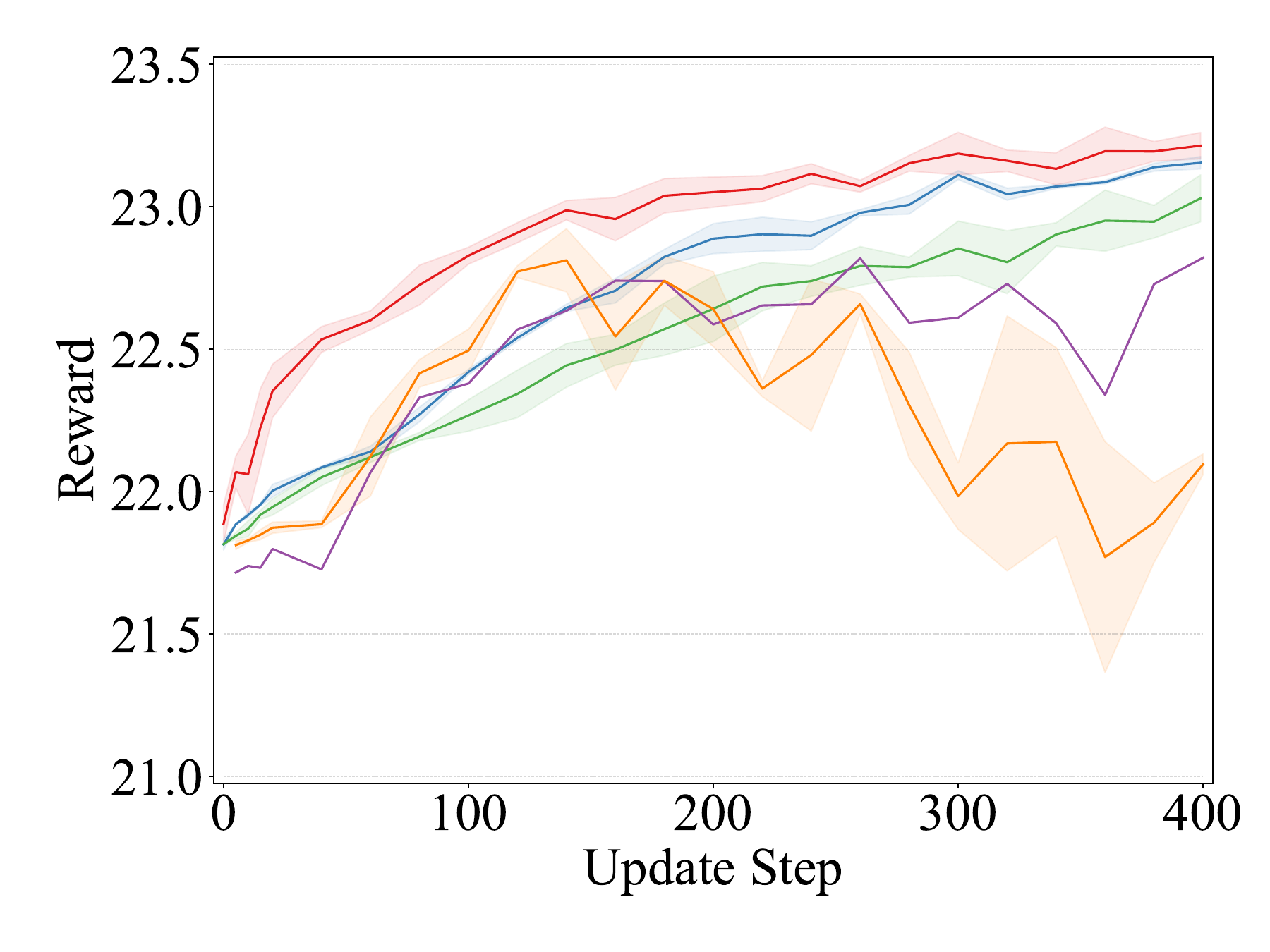}
    \hspace{-0.4em}
    \includegraphics[width=0.24\linewidth]{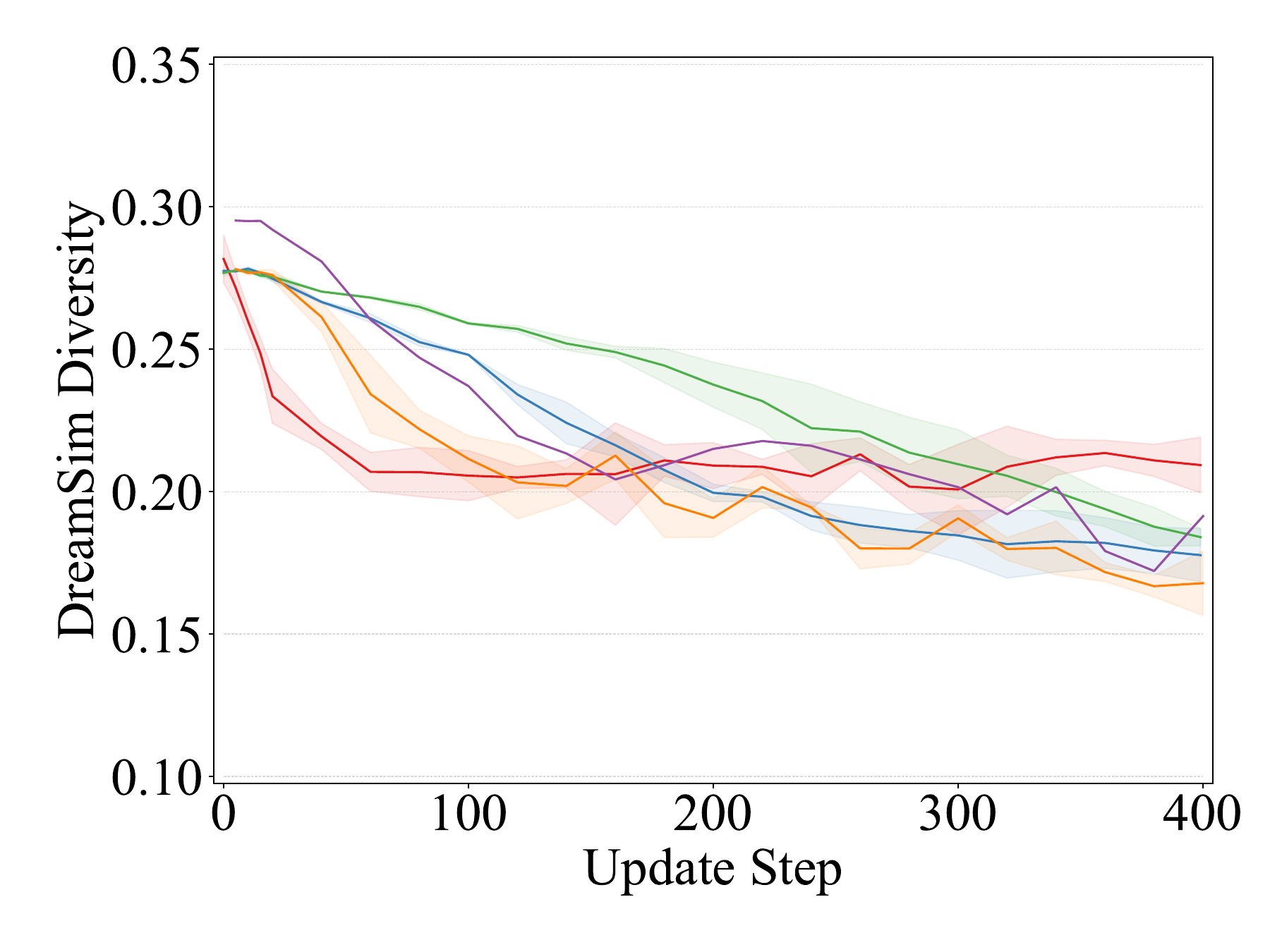}
    \hspace{-0.4em}
    \includegraphics[width=0.24\linewidth]{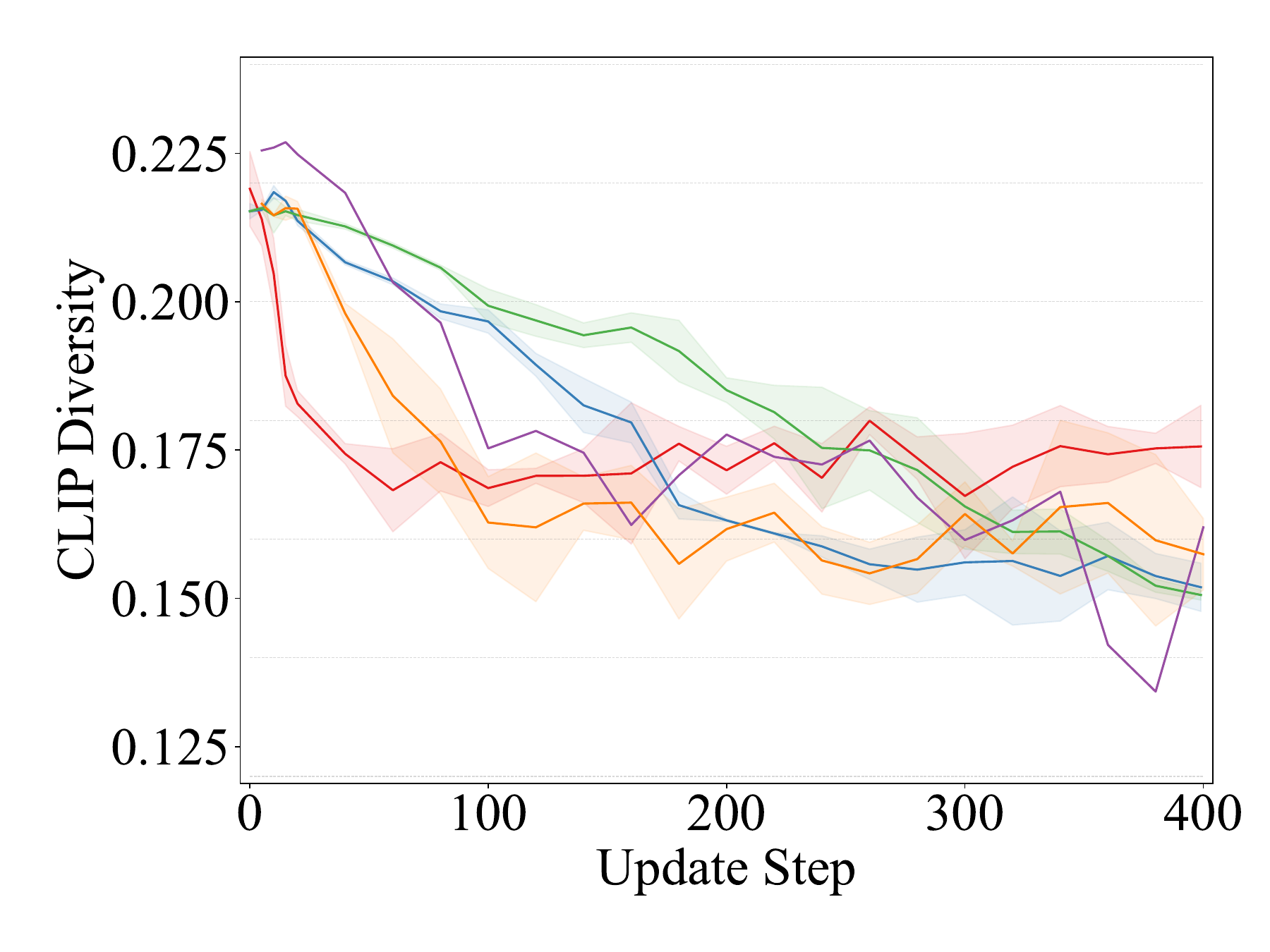}
    \hspace{-0.4em}
    \includegraphics[width=0.24\linewidth]{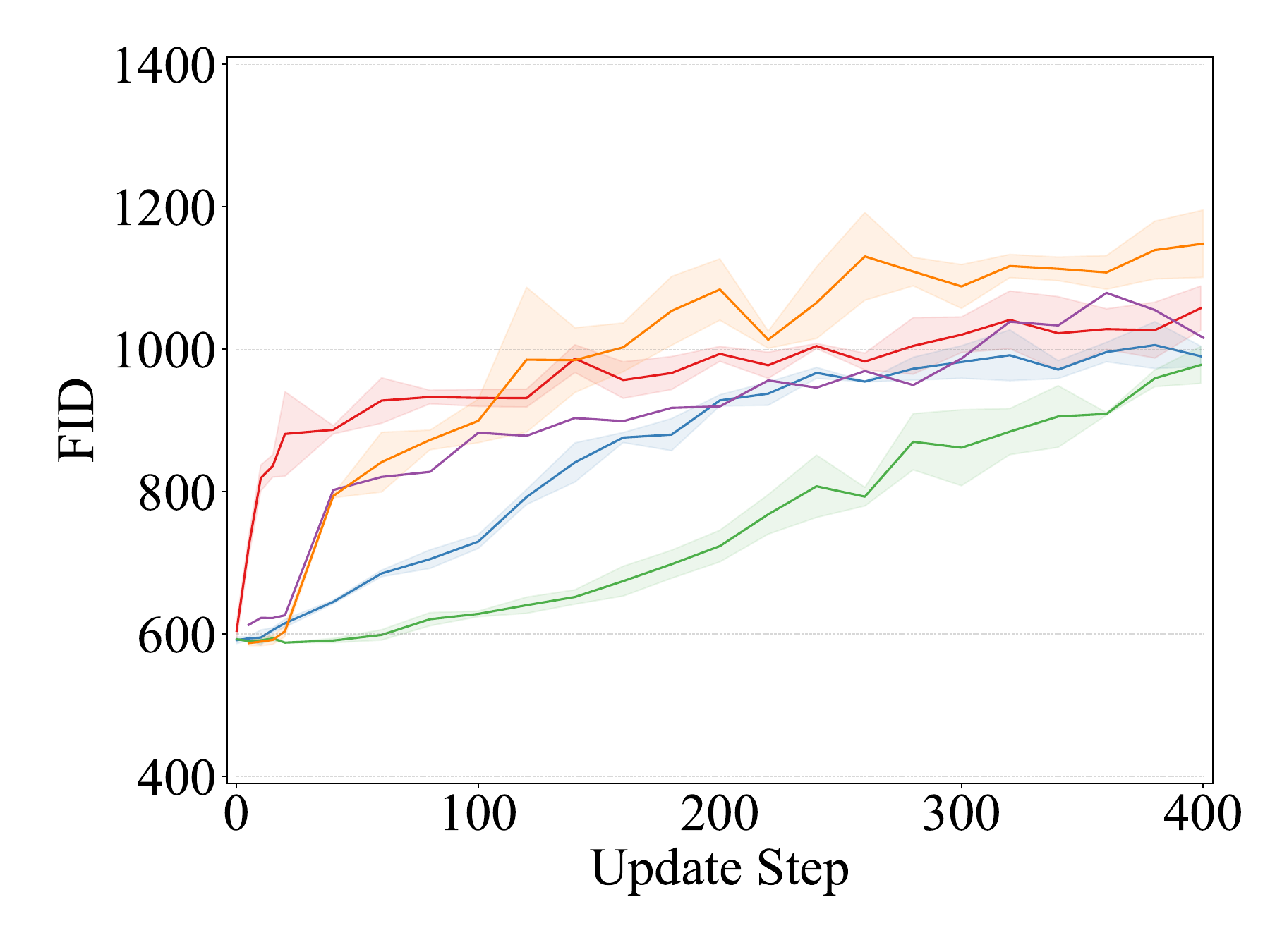}
    \caption{\footnotesize 
        Convergence of different metrics for different methods throughout the finetuning process on PickScore.
    }
    \label{fig:pickscore_results}
    \vspace{-1mm}
\end{figure}

\begin{figure}[t!]
    \vspace{-2mm}
    \centering
    
    \includegraphics[width=0.33\linewidth]{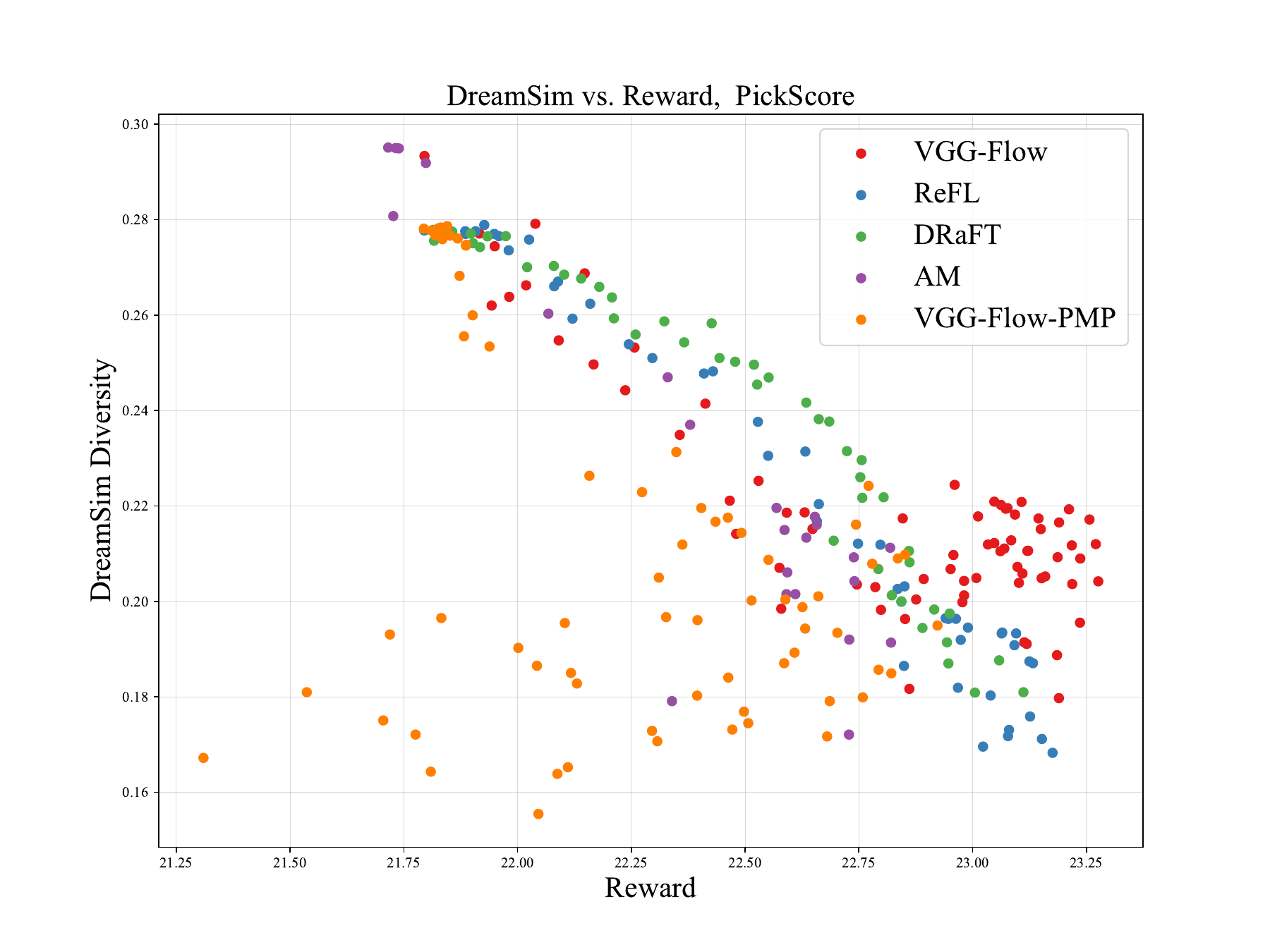}
    \hspace{-5mm}
    \includegraphics[width=0.33\linewidth]{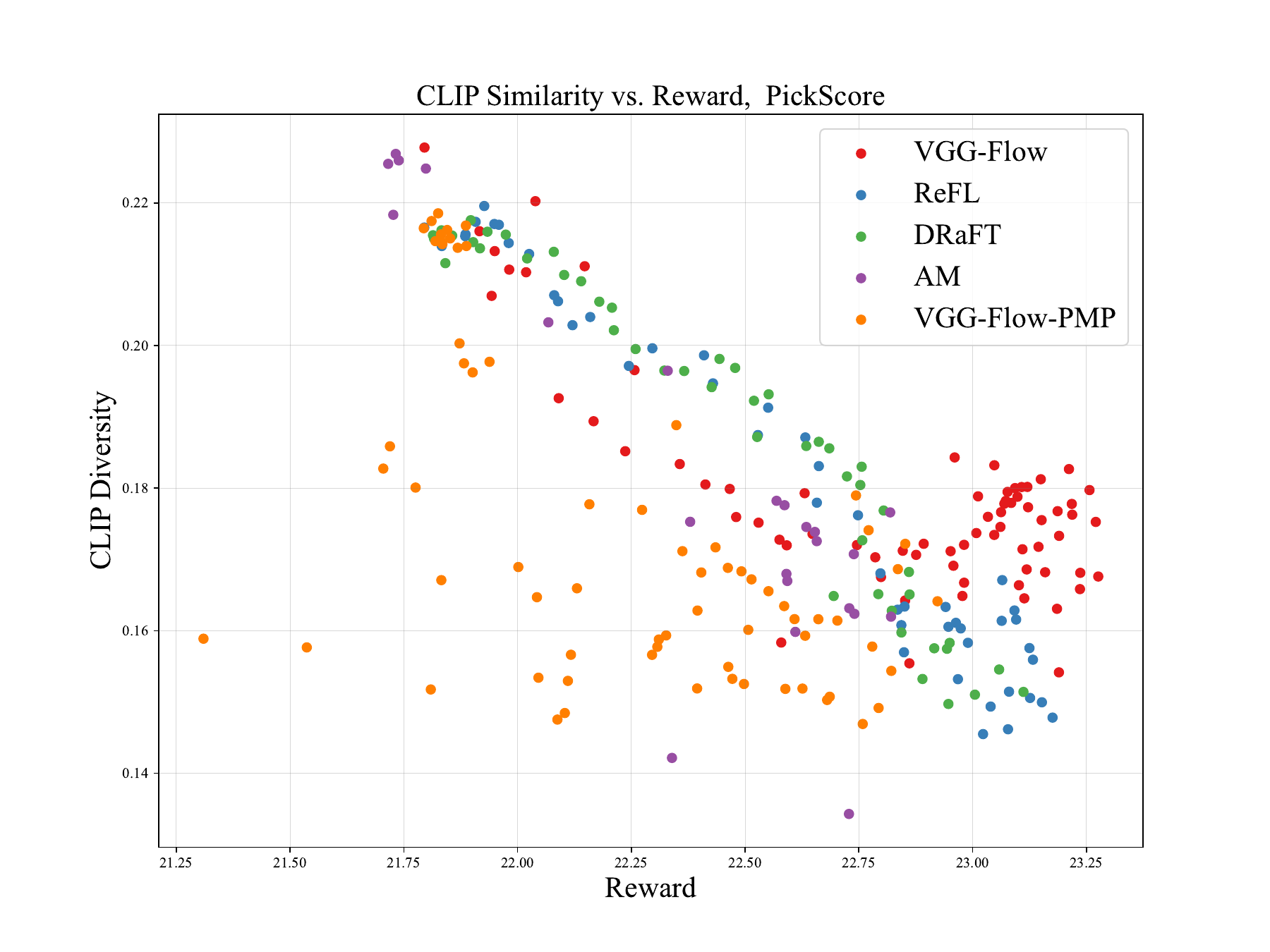}
    \hspace{-5mm}
    \includegraphics[width=0.33\linewidth]{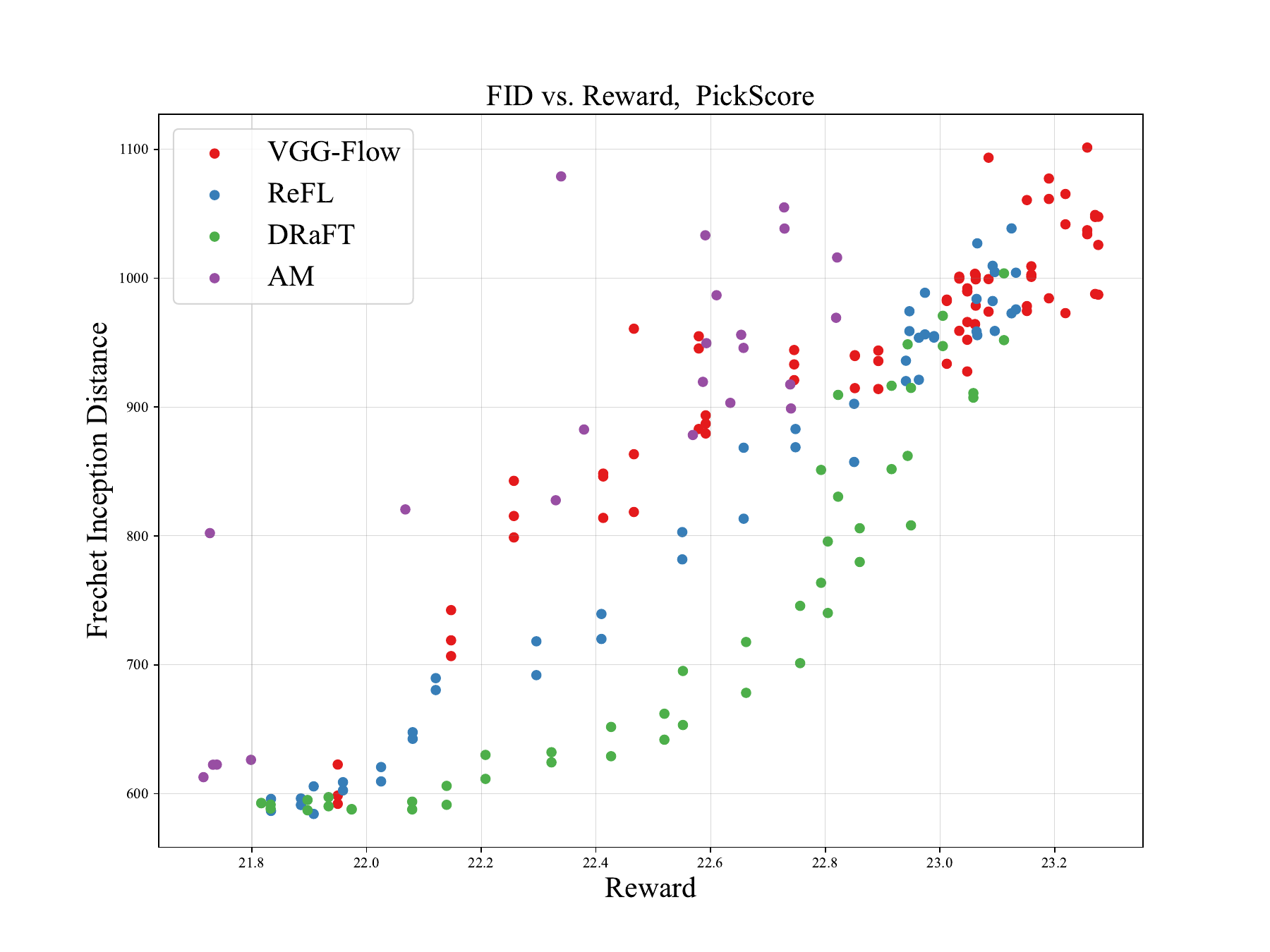}
    \vspace{-1mm}
    \caption{
        \footnotesize Trade-offs between metrics for different reward finetuning methods (experiments on PickScore).
    }
    \label{fig:pickscore_tradeoff}
    \vspace{-1mm}
\end{figure}

\begin{figure}[t!]
    \centering
    \vspace{-4.5mm}
    \adjustbox{valign=t, max width=0.98\linewidth}{%
        \includegraphics[width=0.5\linewidth]{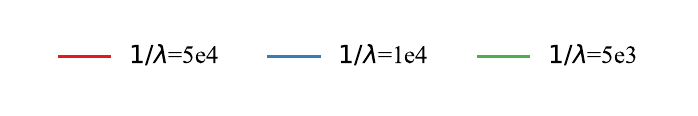}%
    }
    \vspace{-1.5em}

    \includegraphics[width=0.24\linewidth]{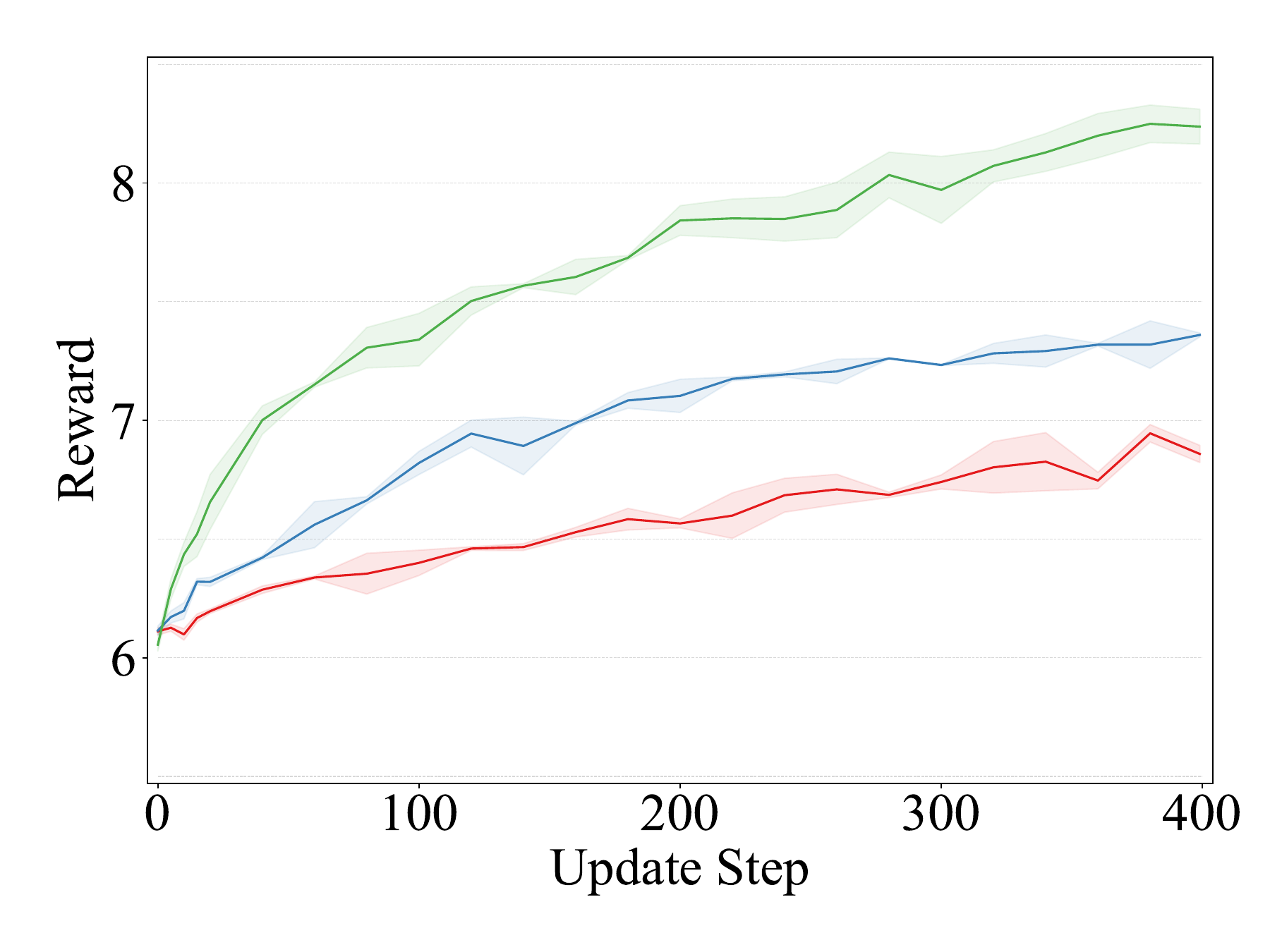}
    \hspace{-0.4em}
    \includegraphics[width=0.24\linewidth]{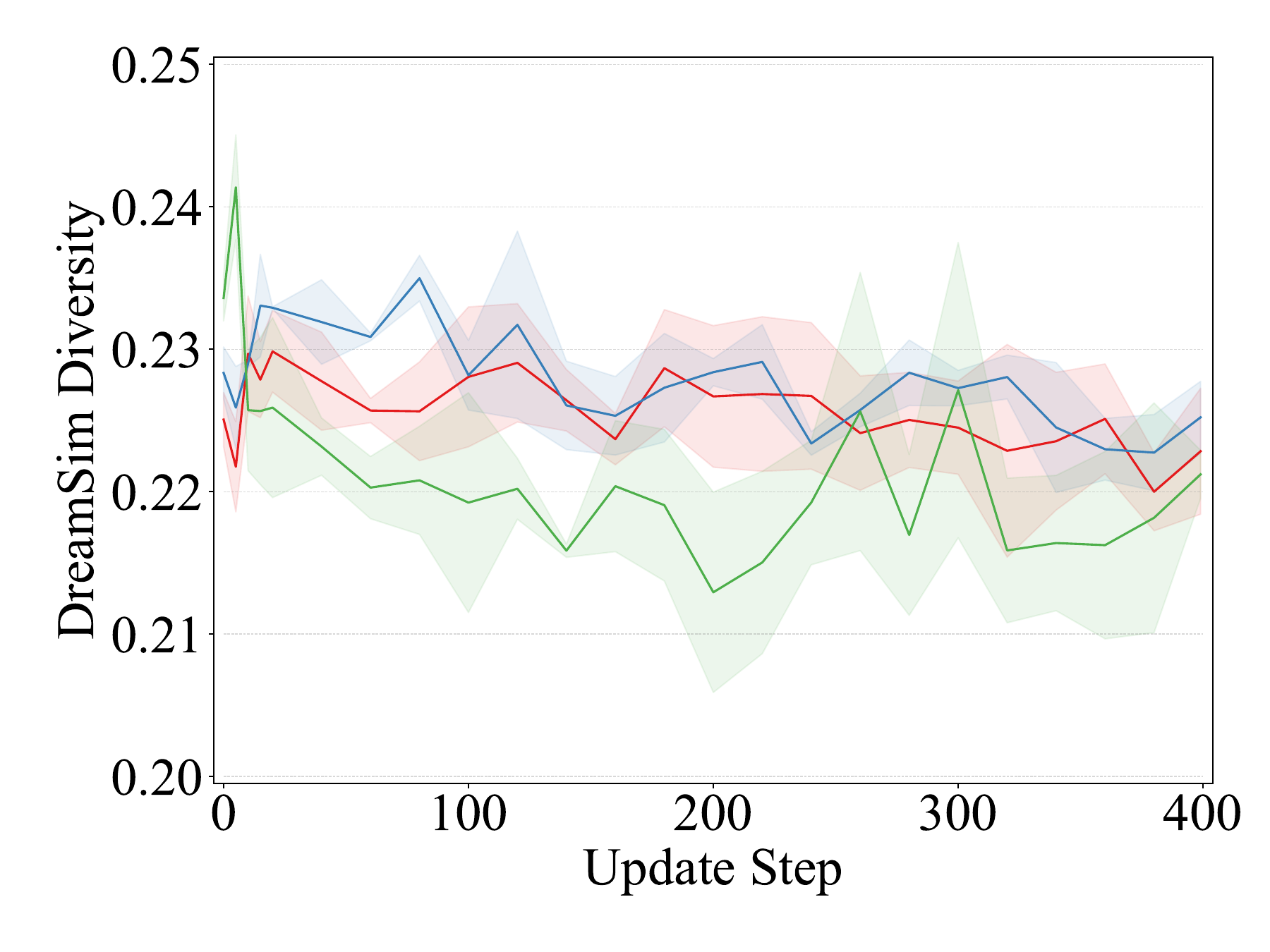}
    \hspace{-0.4em}
    \includegraphics[width=0.24\linewidth]{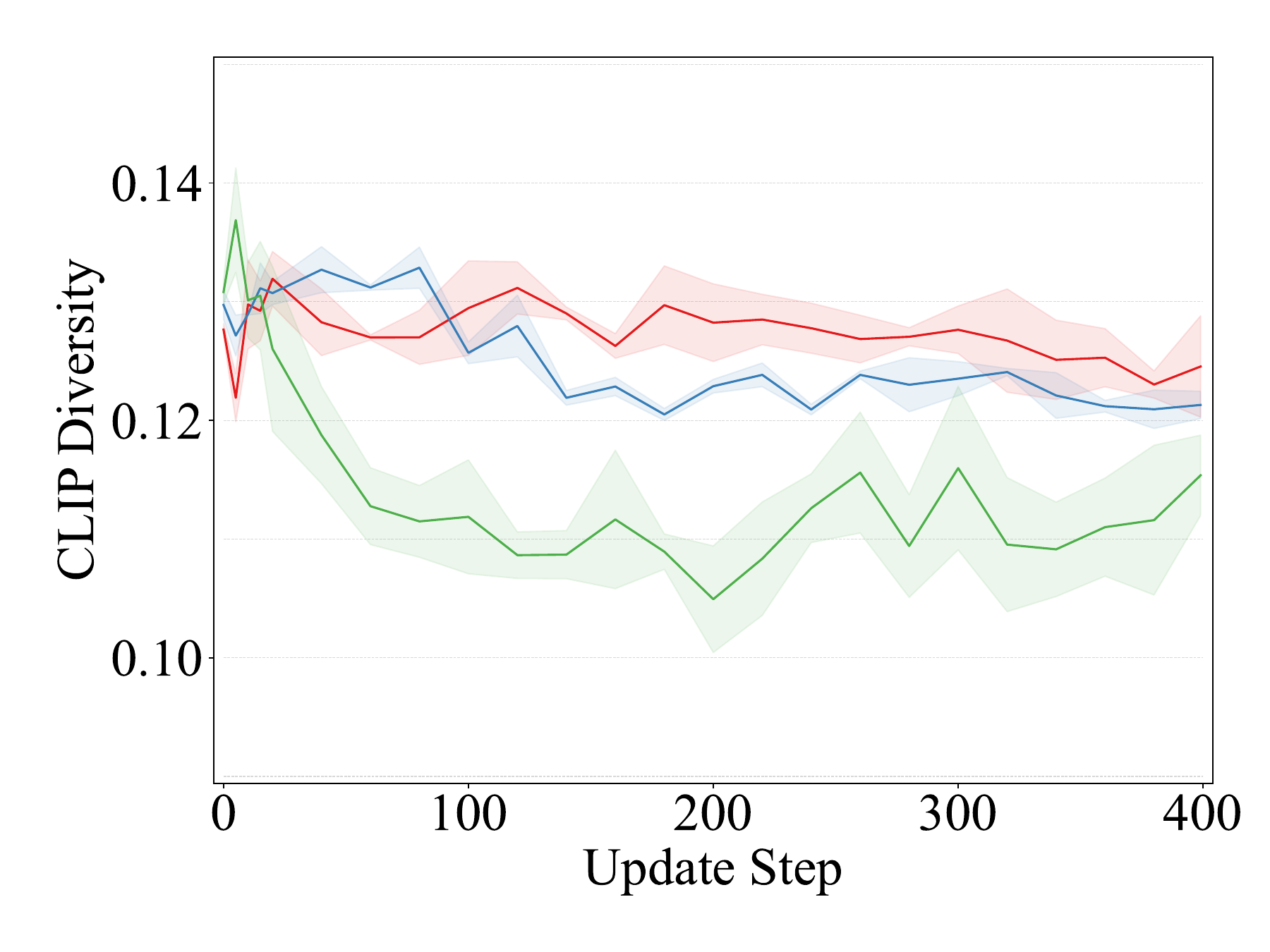}
    \hspace{-0.4em}
    \includegraphics[width=0.24\linewidth]{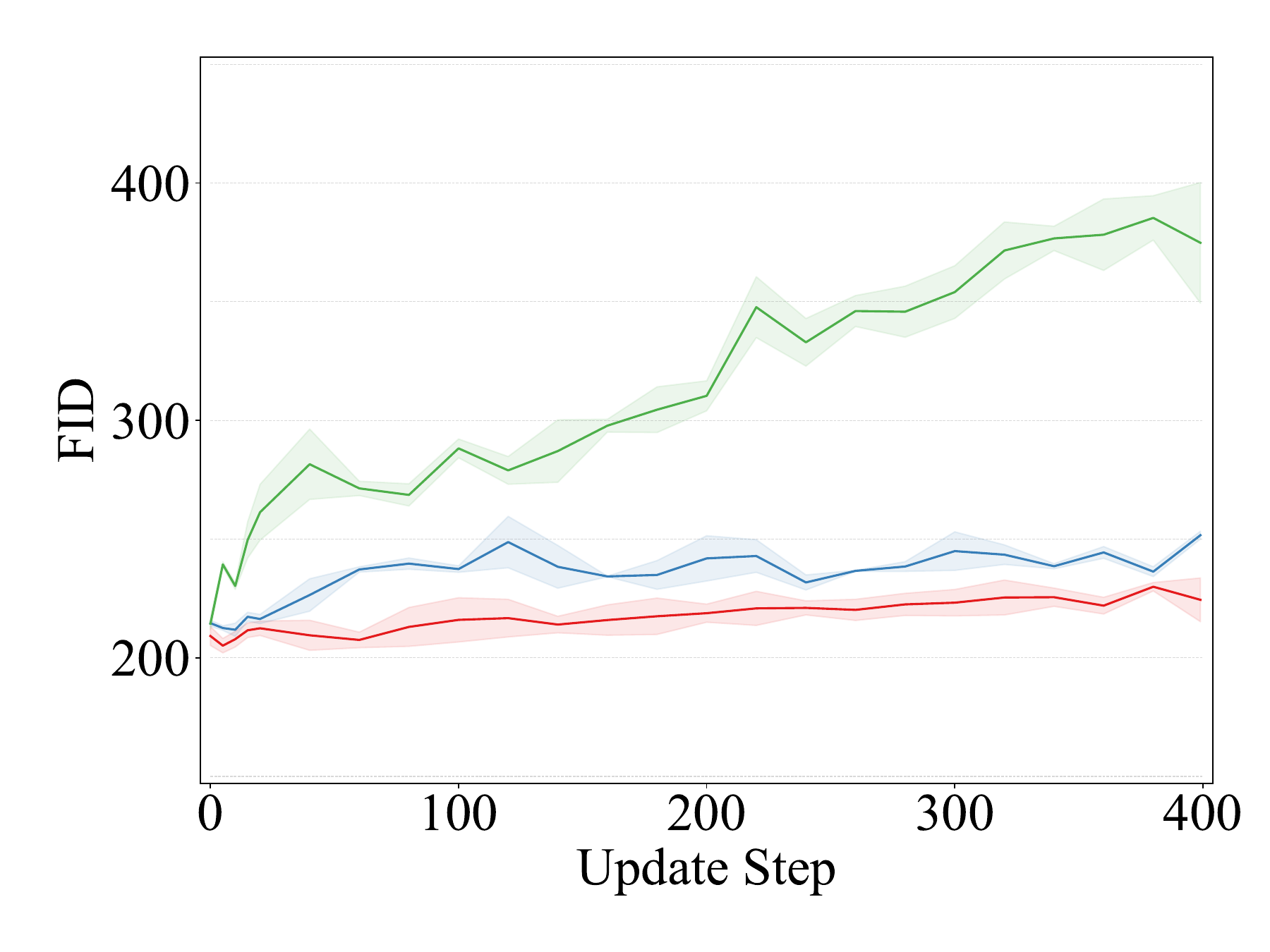}
    \vspace{-1mm}
    \caption{\footnotesize Evolution of metrics for different reward temperature (experiments on Aesthetic Score). Higher temperature $\beta$ leads to faster convergence but with less diversity and less prior preservation.}
    \label{fig:ablation_temp}
    \vspace{-1mm}
\end{figure}

\begin{figure}[t!]
    \vspace{-2mm}
    \centering
    
    \includegraphics[width=0.33\linewidth]{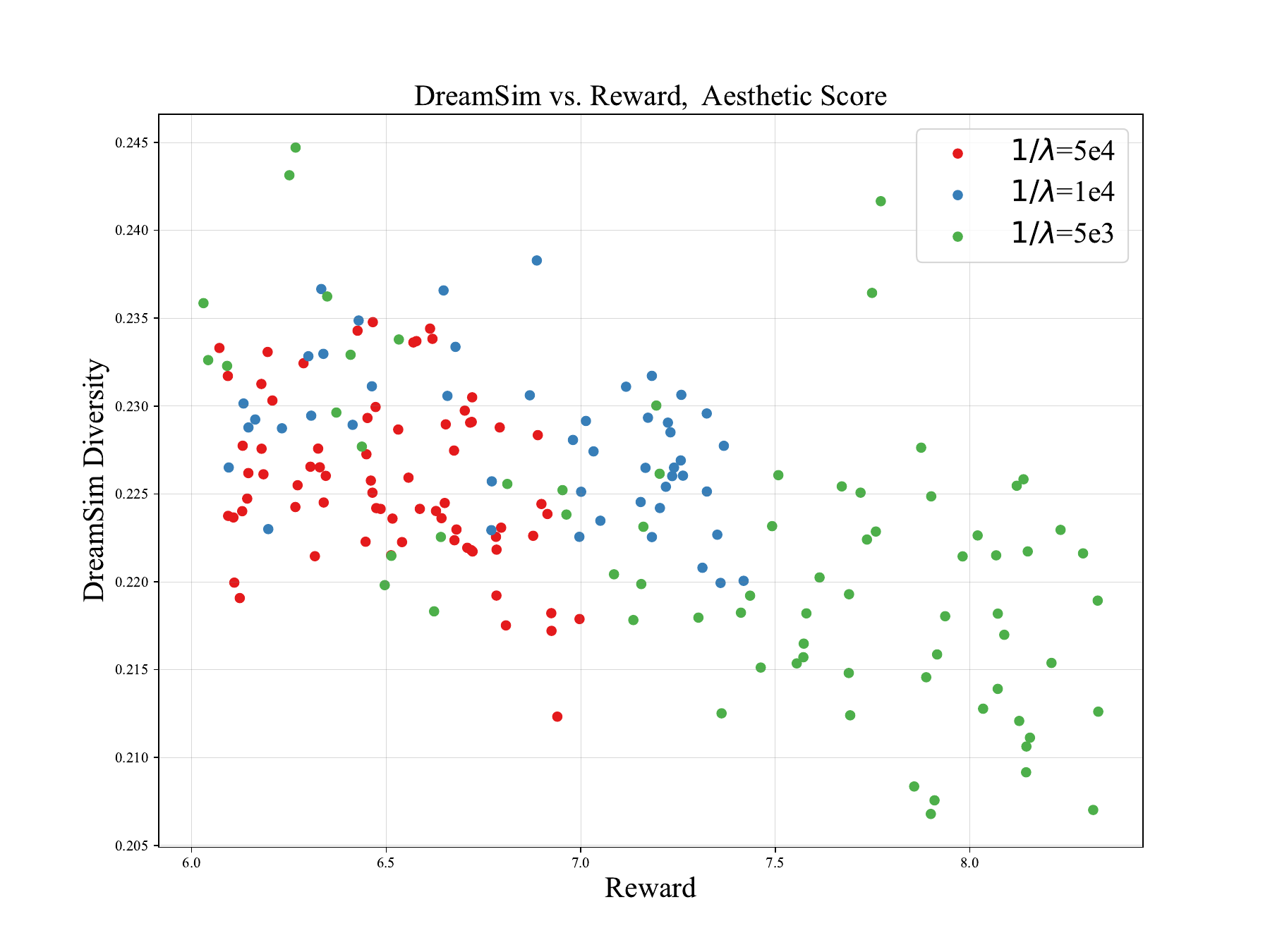}
    \hspace{-5mm}
    \includegraphics[width=0.33\linewidth]{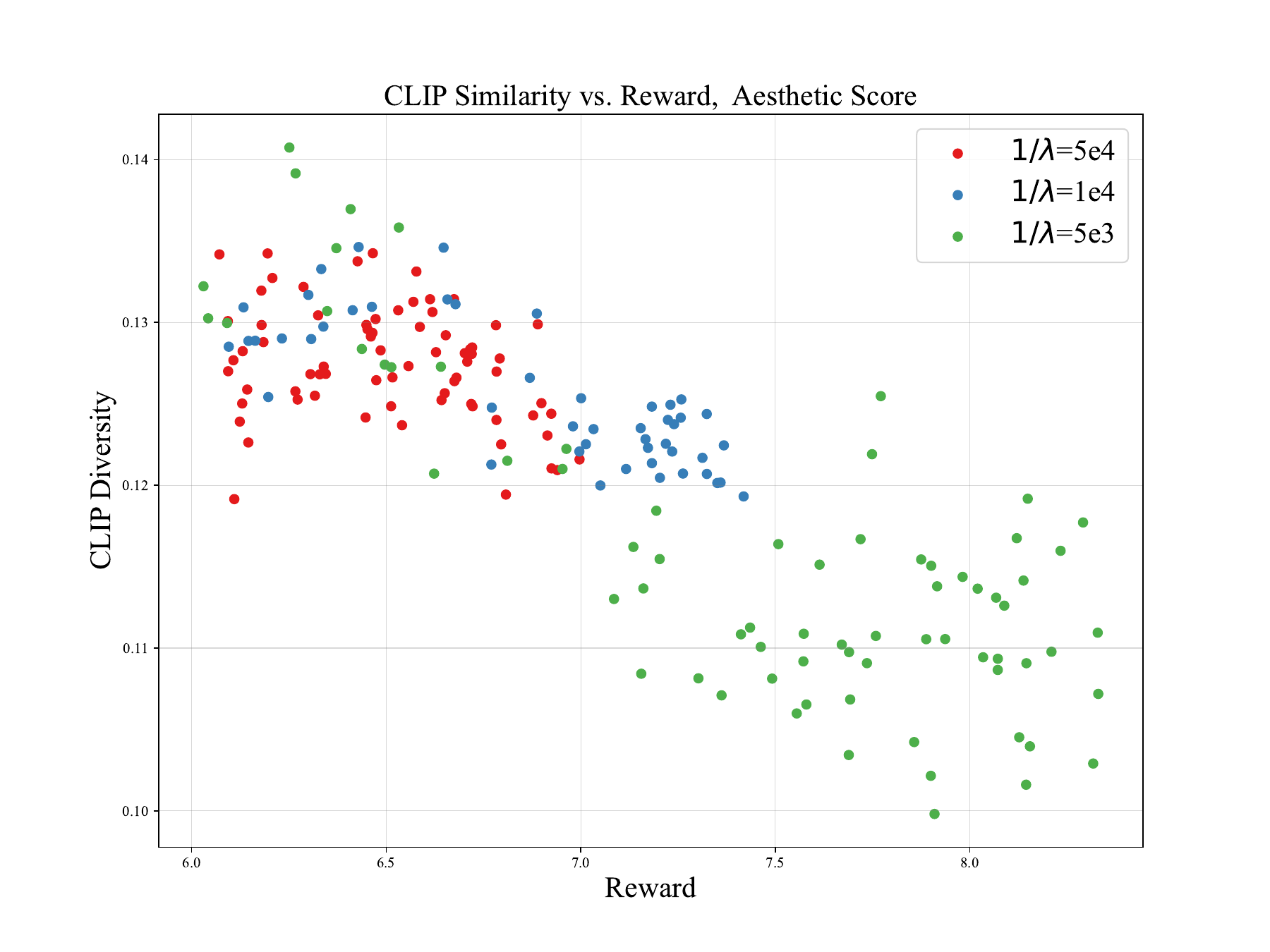}
    \hspace{-5mm}
    \includegraphics[width=0.33\linewidth]{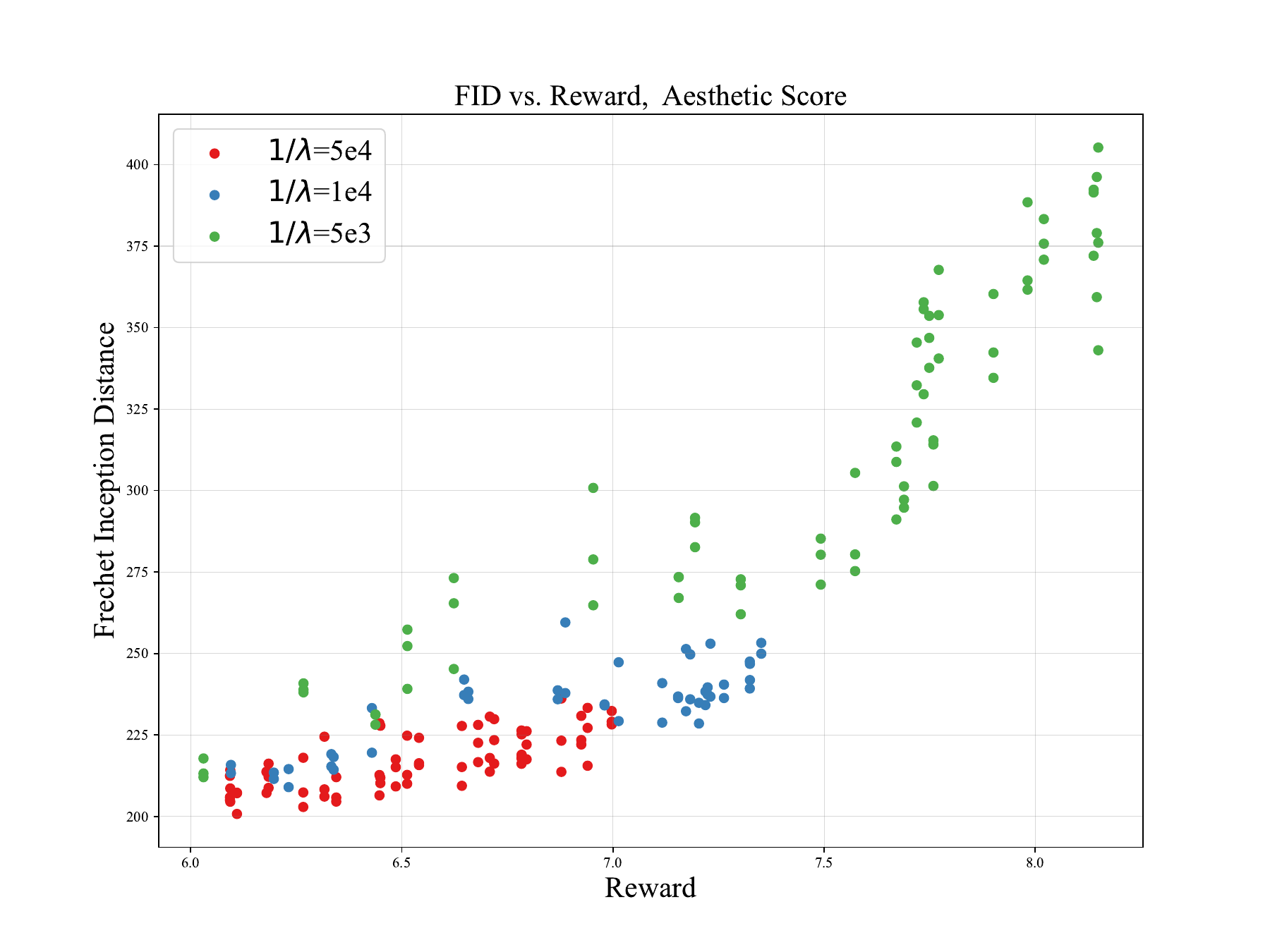}
    \vspace{-1mm}
    \caption{
        \footnotesize Trade-offs between metrics for different reward temperatures (experiments on Aesthetic Score).
    }
    \label{fig:temp_tradeoff}
    \vspace{-1mm}
\end{figure}

\textbf{Effect of reward temperature.} 
We conduct an ablation study on Aesthetic Score with different $\beta \in \{5000, 10000, 50000\}$ and show in Figure~\ref{fig:ablation_temp} the effect of reward temperature. We observe that for all reward temperatures, the reward smoothly increases at a speed proportional to $\beta$. The sample diversity and prior preservation capability are generally worse with greater $\beta$ values. Greater $\beta$ values also leads to worse trade-off on FID vs. reward but no significant difference in diversity vs. reward.

\textbf{Effect of $\eta$ schedule.} We observe that by setting $\eta_t = t$, the convergence speed is faster than our default choice of quadratic schedule $\eta_t = t^2$ (Fig.~\ref{fig:ablation_eta} and~\ref{fig:eta_tradeoff}). Both schedule yields nearly identical trade-offs between metrics, which not only suggest that relative independence of the final performance of trained models on the choice of the parameterization of the learned value gradient model.

\textbf{Effect of transition subsampling rate.} We also investigate if a lower subsampling rate, with which the variance of estimated parameter gradients are lower, leads to better performance. In Fig.~\ref{fig:subsample_eta} and~\ref{fig:subsample_tradeoff}, we observe that there is no significant difference between subsampling rates of 25\% and 50\%.

\vspace{-1mm}
\section{Discussions}
\vspace{-1mm}
\label{sec:discussion}

\textbf{HJB vs. PMP.} Another way to characterize the optimal control is through Pontryagin's Maximum Principle~\cite[PMP]{liberzon2011calculus}. With the control formulation in \Secref{sec:optimal_control}, we can define the Hamiltonian $H(x,u,t,a) \triangleq L(x,u) + a f(x,u,t)$ and the adjoint state (also known as co-state) $a(t)$ that satisfies
\begin{align} \label{eq:PMP_ode}
    \dot a(t) = -\nabla_x H(x(t),u(t))~~~~\text{s.t.}~~~~a(T)=\nabla \Phi(x(T)), ~~\dot x = f(x,u,t).
\end{align}
Essentially, the PMP states that the optimal control $u^*$ satisfies $u^*(t) = \argmax_{u} H(x^*(t),u,t,a^*(t))$, where $x^*$ and $a^*$ are the solutions to \eqref{eq:PMP_ode} for the optimal control $u^*$.
With the cost functional and dynamics in our setting (\Eqref{eqn:optimal-ctrl-objective}), we have $\dot a = -2\nabla[\| v - v_\text{base}\|^2 + v^T a]$ and $\tilde{v}_\theta(x,t) + a(t) = 0$. By comparing it with~\Eqref{eqn:optimal-ctrl-law}, we have $a(t) = \nabla V(x_t,t)$ for any trajectory $x_{t\in[0,1]}$ from the dynamics $\dot x = v(x,t)$. While mathematically equivalent, solving this adjoint equation not only requires the expensive (and often unaffordable) computation of $\nabla H$ multiple times per trajectory but also is prone to accumulated errors in solving the adjoint equation. In contrast, our HJB-based method is more efficient and robust because it 1) solves for $\nabla V$ in an amortized approach with the forward-looking parametrization of $\nabla V$ and 2) allows for efficient transition subsampling. See~\Secref{sec:adjoint_matching_pmp} for more details.

\textbf{Connection with adjoint matching.} Adjoint matching~\cite{domingo-enrich2025adjoint} reaches a matching objective similar to the one in the above PMP discussion. However, their framework is based on stochastic optimal control instead of deterministic optimal control. While the stochastic setting allows them to sample from a simple tilted distribution $\pb(x)\exp{r(x)}$, their algorithm requires modifying the flow matching ODE into an SDE with equal marginals. 
Our proposed algorithm fine-tunes directly the ODE dynamics with deterministic control. Computationally, adjoint matching relies on solving the adjoint ODE, which requires taking one backward pass through the model for each time step. VGG-Flow relies on the value function gradient model of current step and is thus more computationally tractable.

\textbf{Limitations.} Since our method relies on a relaxed objective, the finetuned distribution approximates the ideal KL-regularized distribution well only in the case of a relatively small $\lambda$. Implementation-wise, we use finite differences to approximate the first-order gradients of the value gradient estimator and disable all second-order gradients during backpropagation, which inevitably leads to biases. Furthermore, our method suffers from the same challenge of exploration-exploitation tradeoff as in common reinforcement learning settings. As we aim for fast convergence within limited computational resources, our hyperparameter settings are in theory more prone to mode collapse. Furthermore, we do not explore better architecture designs, which is shown important for efficient and stable finetuning of foundation models~\cite{Qiu2023OFT, liu2024boft}.

\vspace{-1mm}
\section{Conclusion}
\vspace{-1mm}
\label{sec:conclusion}

We propose \methodname, an efficient and robust method for performing alignment of flow matching models with some reward model. By leveraging a relaxed objective and the HJB equation in optimal control theory, we derive a gradient matching method that allows us to finetune flow matching models with probabilistic guarantees and memory-efficient computation. We empirically demonstrate the effectiveness of our \methodname on Stable Diffusion 3, a popular large-scale text-conditioned flow matching model, with common image-input reward functions.
As for broader impact, we point out that improving the alignment of flow matching models enhances their ability to reliably follow human instructions, contributing to the development of more trustworthy and controllable AI systems that can better serve societal needs in education, healthcare, and decision support.

\bibliography{ref}
\bibliographystyle{plain}

\newpage
\appendix

\addcontentsline{toc}{section}{Appendix} %
\renewcommand \thepart{} %
\renewcommand \partname{}
\part{\Large{\centerline{Appendix}}}
\parttoc
\newpage

\section{Theoretical Connections between Optimal Control Formulations}

\subsection{Deriving adjoint matching from Pontryagin's maximum principle}
\label{sec:adjoint_matching_pmp}

For a general dynamical system $\dot x = f(x, u, t)$ and control problem $\min_u \int_{0}^{T} L(x(t), u(t), t) \, \dif t + \Phi(x(T))$, we can define the Hamiltonian $H(x,u,t,a) \triangleq L(x,u) + a f(x,u,t)$ and the adjoint state  $a(t)$ that satisfies $\dot a(t) = -\nabla H$ and $a(T)=\nabla \Phi(x(T))$.
The Pontryagin's maximum principle (PMP) states that the optimal control $u^*$ satisfies $u^*(t) = \argmax_{u} H(x^*(t),u,t,a^*(t))$.

For our flow matching model $\dot x_t = v_\theta(x_t, t)$ setup and the control formulation in Remark~\ref{remark:connect_control}, we have 
\begin{align}
\min_\theta \ &\mathbb{E}_{\dot x_t = v_\theta(x_t, t)} \left[\frac{\lambda}{2}\int_0^1\left\|v_\theta(x_t, t) - v_{\rm base}(x_t, t) \right\|^2 - r(x_1) \right] \mathrm{d} t\\
    H(x, v_\theta, t, a) &= \frac{\lambda}{2} \norm{v_\theta(x, t) - \vb(x, t)}^2 + a^\top v_\theta(x, t).
    \\ \label{eq:adjoint_ode}
    \dot a(x, t) &= -\frac{\lambda}{2} \nabla_x\left(\norm{v_\theta(x, t) - \vb(x, t)}^2 + a(x, t)^\top v_\theta(x, t)\right),
\end{align}
with the terminal constraint $a(x_1, 1) = -\nabla r(x_1)$.
In order to solve $v^* = \argmax_v H(x, v, t, a)$, with such quadratic form we have 
\begin{align}
\label{eq:v_tilde_a}
    v^* = \vb - \frac{a}{\lambda} \Rightarrow \tilde v + \frac{a}{\lambda} = 0.
\end{align}
Therefore, we can have an intuitive algorithm following the common practice of solving PMP:
\begin{enumerate}
    \item Solve forward ODE, $\dot x_t = v_\theta(x_t, t), x_0\sim \mathcal{N}(0, I),$ to obtain $\{x_t\}_{t=0}^1$;
    \item Solve backward ODE in \Eqref{eq:adjoint_ode}, $\dot a = - \nabla H, a(x_1, 1) = -\nabla r(x_1),$  to obtain $\{a_t\}_{t=0}^1$;
    \item Update velocity field $v_\theta$ by minimizing matching loss 
    \begin{align}    
    \label{eq:our_adjoint_matching}
    \LL(\theta) = \int \norm{\lambda \tilde v_\theta(x_t, t) + a_t}^2\dif t.
    \end{align}
\end{enumerate}
This is already very similar to the adjoint matching algorithm proposed in \cite{domingo-enrich2025adjoint} (but for stochastic control settings), where the authors obtain their algorithm from a different derivation and slightly different assumptions.

Comparing this \Eqref{eq:v_tilde_a} with \Eqref{eqn:optimal-ctrl-law}, we can easily get
\begin{align}
    a(x, t) = \nabla V(x, t),
\end{align}
which, maybe not that surprisingly, indicates adjoint matching and \methodname share the same vector matching objective (\Eqref{eq:our_adjoint_matching} and \Eqref{eqn:optimal-ctrl-law}). They differ in the way of how to obtain the matching target (either $a_t$ or $\nabla V$).

\subsection{The adjoint method vs. adjoint matching for deterministic optimal control}

Here we give a more general discussion between the adjoint method (for open-loop control) and adjoint matching (for closed-loop control) in the deterministic control setting.

\subsubsection{Open-loop vs closed-loop formulations of deterministic optimal control}

Consider the deterministic control problem in \eqref{eqn:optimal-ctrl-objective} with a quadratic cost and affine control. 
That is, $L(x,u,t) = \frac{1}{2} \| u\|^2_{Q(t)} + f(x,t)$, where $Q(t) \in \mathbb{R}^{d\times d}$ is a positive-definite matrix, and $\| u\|^2_{Q} = u^{\top} Q u$, and the drift is $f_{\text{drift}}(x,u,t) = b(x,t) + u$. Hence, the control problem reads
\begin{align} \label{eq:open_1}
    &\min_{u : [0,T] \to \mathbb{R}^d} J[u] \triangleq \int_0^T \bigg( \frac{1}{2} \| u(t)\|^2_{Q(t)} + f(X_t,t) \bigg) \, \mathrm{d}t + \Phi(X_T), \\
    &\mathrm{s.t.} \quad \dot{X}_t = b(X_t,t) + u(t), \qquad X_0 \sim x_0.
    \label{eq:open_2}
\end{align}
This is an \emph{open-loop control problem}, because the control $u(t)$ does not depend explicitly on the state $X_t$. Note however that since both the starting point and the dynamics are deterministic, given the function $u : [0,t] \to \mathbb{R}^d$, it is possible to determine the state $X_t$, and hence $u(t)$ can be defined to depend implicitly on $X_t$.

Alternatively, consider the control problem
\begin{align} \label{eq:closed_1}
    &\min_{u : \mathbb{R}^d \times [0,T]  \to \mathbb{R}^d} \tilde{J}[u] \triangleq \int_0^T \bigg( \frac{1}{2} \| u(X_t,t)\|^2_{Q(t)} + f(X_t,t) \bigg) \, \mathrm{d}t + \Phi(X_T), \\
    &\mathrm{s.t.} \quad \dot{X}_t = b(X_t,t) + u(X_t,t), \qquad X_0 \sim x_0.
    \label{eq:closed_2}
\end{align}
In this case, $u$ is a function that depends explicitly on the state $X_t$, which makes this a \emph{closed-loop control problem}. 

In general, closed-loop control problems are more general than open-loop problems, but in our case both problems are actually equivalent because the deterministic dynamics and initial conditional make it possible to make $u(t)$ depend on $X_t$ implicitly. In other words, for any open-loop control $u : [0,T] \to \mathbb{R}^d$, we can define a closed-loop control $\tilde{u} : \mathbb{R}^d \times [0,T] \to \mathbb{R}^d$ by setting $\tilde{u}(x,t) = u(t)$. And for any closed-loop control $\tilde{u} : \mathbb{R}^d \times [0,T] \to \mathbb{R}^d$, we can define an open-loop control $u : [0,T] \to \mathbb{R}^d$ by setting $u(t) = u(X_t,t)$, where $X = (X_s)_{s \in [0,t]}$ satisfies the ODE in \eqref{eq:closed_2}. 

Thus, finding the solution to the open-loop problem in equations \ref{eq:open_1}-\ref{eq:open_2} is equivalent to finding the solution to the closed-loop problem in equations \ref{eq:closed_1}-\ref{eq:closed_2}. As we see next, the former formulation naturally gives rise to the adjoint matching loss for deterministic optimal control, while the latter yields the basic adjoint matching loss, which is simply a reformulation of the adjoint method.

\subsubsection{Solving the closed-loop problem: the adjoint method for deterministic optimal control} \label{subsubsec:closed-loop}

Using an argument similar to the one used in \cite[Prop.~2]{domingo-enrich2025adjoint} for stochastic optimal control, we can derive the continuous-time version of the basic adjoint matching loss for deterministic control:
\begin{proposition}[Basic adjoint matching for deterministic control] \label{prop:basic_adjoint_matching}
Consider the adjoint ODE
\begin{align} 
\begin{split} \label{eq:cont_adjoint_1}
    \frac{\mathrm{d}}{\mathrm{d}t} a(t;X,u) \! &= \! - \! \left[ \left(\nabla_{X_t} (b (X_t,t) \! + \! u(X_t,t))\right)^{\top} a(t;X,u) 
    \! + \! \nabla_{X_t} \left( f(X_t,t) \! + \! \frac{1}{2}\|u(X_t,t)\|^2 \right) \right],
\end{split}
\\ a(1;X,u) &= \nabla \Phi(X_1). 
\label{eq:cont_adjoint_2}
\end{align}
Suppose that the control $u : \mathbb{R}^d \times [0,T] \to \mathbb{R}^d$ is parameterized by $\theta$, and let $\tilde{J}[u]$ be the closed-loop control objective in \eqref{eq:closed_1}. Then, the gradient $\nabla_{\theta} \tilde{J}[u]$ is equal to the gradient of this loss:
\begin{align}
\begin{split} \label{eq:cont_adjoint}
    &\mathcal{L}_{\mathrm{Basic-Adj-Match}}(u) := \frac{1}{2} \int_0^{1} \big\| u(X_t,t)
    + Q(t)^{-1/2} a(t;X,\bar{u}) \big\|^2 \, \mathrm{d}t, \\ &X \text{ s.t. } \dot{X}_t = b(X_t,t) + \bar{u}(X_t,t), \quad \bar{u} = \texttt{stop-gradient}(u),
\end{split}
\end{align}
where $\bar{u} = \texttt{stop-gradient}(u)$ means that the gradients of $\bar{u}$ with respect to the parameters $\theta$ of the control $u$ are artificially set to zero.
\end{proposition}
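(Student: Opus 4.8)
\emph{Proof strategy.} The plan is to run the classical continuous-time adjoint (Lagrange-multiplier) computation, exactly parallel to \cite[Prop.~2]{domingo-enrich2025adjoint} but with all Brownian terms removed, and then to observe that the resulting gradient formula is \emph{itself} the gradient of $\mathcal{L}_{\mathrm{Basic-Adj-Match}}$ thanks to the \texttt{stop-gradient}. First I would make the $\theta$-dependence explicit: since the initial law and the dynamics in \eqref{eq:closed_2} are deterministic, the trajectory $X = (X_t)_{t \in [0,1]}$ is a differentiable function of $\theta$ (under standard smoothness assumptions on $b$, $u$, $f$, $\Phi$), and I would introduce the sensitivity process $S_t \triangleq \partial_\theta X_t$. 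Differentiating the state ODE yields the linearized (variational) equation $\dot S_t = \nabla_{X_t}\!\big(b(X_t,t) + u(X_t,t)\big)\, S_t + \partial_\theta u(X_t,t)$ with $S_0 = 0$, where $\partial_\theta u$ denotes the \emph{explicit} partial derivative in the parameters, with $x$ held fixed.

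Next I would differentiate the objective through the flow. Writing $h(x,t) \triangleq \tfrac{1}{2} \|u(x,t)\|^2_{Q(t)} + f(x,t)$, the chain rule gives $\nabla_\theta \tilde J[u] = \int_0^1 \big[ (\partial_\theta u(X_t,t))^\top Q(t) u(X_t,t) + \nabla_{X_t} h(X_t,t)^\top S_t \big]\, \mathrm{d}t + \nabla \Phi(X_1)^\top S_1$, the first summand being the explicit dependence of the running cost on $\theta$ and the rest the implicit dependence carried by $S_t$. To eliminate $S_t$ I would let $a(\cdot)$ solve the adjoint ODE \eqref{eq:cont_adjoint_1} with terminal condition \eqref{eq:cont_adjoint_2}, and integrate $\tfrac{\mathrm{d}}{\mathrm{d}t}\!\big(a(t)^\top S_t\big)$ over $[0,1]$: using $S_0 = 0$ and substituting both the variational ODE and the adjoint ODE, the Jacobian terms $\nabla_{X_t}(b + u)$ cancel between the two equations, the terminal condition $a(1) = \nabla\Phi(X_1)$ absorbs the boundary term, and one obtains $\nabla\Phi(X_1)^\top S_1 = \int_0^1 \big( a(t)^\top \partial_\theta u(X_t,t) - \nabla_{X_t} h(X_t,t)^\top S_t \big)\, \mathrm{d}t$. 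Substituting this back makes every $S_t$-term cancel, leaving $\nabla_\theta \tilde J[u] = \int_0^1 (\partial_\theta u(X_t,t))^\top \big( Q(t) u(X_t,t) + a(t) \big)\, \mathrm{d}t$; in particular the gradient sees the parametrization only through the explicit factor $u(X_t,t)$.

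Finally I would match this with $\nabla_\theta \mathcal{L}_{\mathrm{Basic-Adj-Match}}(u)$. The crucial point is the role of the \texttt{stop-gradient}: in \eqref{eq:cont_adjoint} the trajectory $X$ (generated by $\bar u$) and the adjoint $a(\cdot; X, \bar u)$ depend only on $\bar u = \texttt{stop-gradient}(u)$, hence carry zero $\theta$-gradient, so differentiating the squared norm hits only the explicit $u(X_t,t)$ and gives $\nabla_\theta \mathcal{L}_{\mathrm{Basic-Adj-Match}}(u) = \int_0^1 (\partial_\theta u(X_t,t))^\top \big( u(X_t,t) + Q(t)^{-1/2} a(t) \big)\, \mathrm{d}t$. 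Evaluated at the current parameter, where $\bar u = u$ so that $X$ and $a(\cdot;X,\bar u)$ coincide with the trajectory and adjoint of the previous step, and once the $Q(t)$-weighting conventions of the running cost, the adjoint ODE, and the norm inside the loss are made consistent (they agree verbatim when $Q(t) \equiv I$), this is exactly the integral derived above, which gives $\nabla_\theta \tilde J[u] = \nabla_\theta \mathcal{L}_{\mathrm{Basic-Adj-Match}}(u)$.

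I expect the main obstacle to be bookkeeping rather than a genuinely new idea: one must carefully separate explicit from implicit $\theta$-dependence when differentiating through the flow, verify that the adjoint ODE \eqref{eq:cont_adjoint_1} is exactly the one whose Jacobian term cancels the one in the variational equation (so that all $S_t$-dependence drops out), and reconcile the $Q(t)$-weightings across the cost, the adjoint ODE, and the loss. The only conceptual content is that the \texttt{stop-gradient} converts the adjoint-method gradient \emph{identity} into a bona fide objective, so that a single reverse-mode pass through the ODE reproduces the exact gradient $\nabla_\theta \tilde J[u]$.
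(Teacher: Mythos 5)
Your proposal is correct and follows essentially the same route as the paper's proof: both reduce to the identity $\nabla_\theta \tilde J[u]=\int_0^1(\partial_\theta u)^\top\bigl(Q(t)u+a(t)\bigr)\,\mathrm{d}t$ and then complete the square, with the \texttt{stop-gradient} ensuring that differentiating the loss only hits the explicit $u(X_t,t)$. The only difference is that you derive that gradient identity from first principles via the sensitivity process $S_t$ and the duality cancellation $\tfrac{\mathrm{d}}{\mathrm{d}t}(a^\top S_t)$, whereas the paper imports it by analogy with Prop.~2 / Eq.~32 of the Adjoint Matching paper; your remark about reconciling the $Q(t)$-weightings correctly identifies a (harmless) notational inconsistency already present in the paper's own statement.
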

\begin{proof}
    The proof mirrors the proof of \cite[Prop.~2]{domingo-enrich2025adjoint}. If we define the adjoint state \begin{align}
        &a(t,X,u) = \nabla_{X_t} \bigg( \int_0^T \bigg( \frac{1}{2} \| u(X_t,t)\|^2_{Q(t)} + f(X_t,t) \bigg) \, \mathrm{d}t + \Phi(X_T) \bigg), \\
        &\text{where } X \text{ is a solution of } \dot{X}_t = b(X_t,t) + u(X_t,t),
    \end{align}
    we have that $a(t,X,u)$ satisfies the adjoint ODE in equations \ref{eq:cont_adjoint_1}-\ref{eq:cont_adjoint_2}. 
    In analogy with equation 32 of \cite{domingo-enrich2025adjoint}, we have that
    \begin{align} \label{eq:derivative_tilde_J}
        \frac{\mathrm{d}}{\mathrm{d}\theta} \tilde{J}[u] = \frac{1}{2} \int_0^T \frac{\partial}{\partial \theta} \| u(X_t,t)\|^2_{Q(t)} \, \mathrm{d}t + \int_0^T \frac{\partial u(X_t,t)}{\partial \theta}^{\top} a(t,X,u) \, \mathrm{d}t,
    \end{align}
    where $\frac{\mathrm{d}}{\mathrm{d}\theta}$ and $\frac{\partial}{\partial \theta}$ denote the total and partial derivatives with respect to $\theta$. Completing the square, we have that
    \begin{align}
    \begin{split} \label{eq:square_completion}
        &\frac{1}{2} \frac{\partial}{\partial \theta} \| u(X_t,t)\|^2_{Q(t)} + \frac{\partial u(X_t,t)}{\partial \theta}^{\top} a(t,X,u) \\ &= \frac{1}{2} \frac{\partial}{\partial \theta} \big( u(X_t,t)^{\top} Q(t)^{1/2} Q(t)^{1/2} u(X_t,t) \big) + \frac{\partial u(X_t,t)}{\partial \theta}^{\top} Q(t)^{1/2} Q(t)^{-1/2} a(t,X,\bar{u}) \\ &= \frac{1}{2} \frac{\partial}{\partial \theta} \| u(X_t,t) + Q(t)^{-1/2} a(t,X,u) \|^2_{Q(t)}
    \end{split}
    \end{align}
    where $Q(t)^{1/2}$ is defined as the matrix with the same eigenvectors as $Q(t)$ and eigenvalues equal to the square root of the eigenvalues of $Q(t)$, and $\bar{u} = \texttt{stopgrad}(u)$. Notice adjoint $a$ does not depend on $\theta$. Plugging \eqref{eq:square_completion} into \eqref{eq:derivative_tilde_J}, we can finally rewrite the gradient as the gradient of $\mathcal{L}_{\mathrm{Basic-Adj-Match}}$. 
\end{proof}

\subsubsection{Solving the open-loop problem: the adjoint matching loss for deterministic optimal control}

\begin{proposition}[Adjoint matching for deterministic control]
    Consider the lean adjoint ODE:
    \begin{align} 
    \begin{split} \label{eq:lean_adjoint_1}
        \frac{\mathrm{d}}{\mathrm{d}t} \tilde{a}(t;X) \! &= \! - \! \left[ \left(\nabla_{X_t} b (X_t,t)\right)^{\top} \tilde{a}(t;X) 
        \! + \! \nabla_{X_t} f(X_t,t) \right],
    \end{split}
    \\ \tilde{a}(1;X) &= \nabla \Phi(X_1). 
    \label{eq:lean_adjoint_2}
    \end{align}
    Suppose that the control $u : [0,T] \to \mathbb{R}^d$ is parameterized by $\theta$, and let $J[u]$ be the open-loop control objective in \eqref{eq:closed_1}. Then, the gradient $\nabla_{\theta} J[u]$ is equal to the gradient of this loss:
    \begin{align}
    \begin{split} \label{eq:cont_adjoint_4}
        &\mathcal{L}_{\mathrm{Adj-Match}}(u) := \frac{1}{2} \int_0^{1} \big\| u(t)
        + Q(t)^{-1/2} \tilde{a}(t;X) \big\|^2 \, \mathrm{d}t, \\ &X \text{ s.t. } \dot{X}_t = b(X_t,t) + \bar{u}(t), \quad \bar{u} = \texttt{stop-gradient}(u),
    \end{split}
    \end{align}
    where $\bar{u} = \texttt{stop-gradient}(u)$ means that the gradients of $\bar{u}$ with respect to the parameters $\theta$ of the control $u$ are artificially set to zero.
\end{proposition}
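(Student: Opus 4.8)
The plan is to differentiate the open-loop objective $J[u]$ from \eqref{eq:open_1}--\eqref{eq:open_2} directly with respect to $\theta$ and then absorb the resulting state-sensitivity terms into the lean adjoint $\tilde a(\cdot;X)$ via an integration-by-parts identity, mirroring the argument behind Proposition~\ref{prop:basic_adjoint_matching} (and Prop.~2 of \cite{domingo-enrich2025adjoint}). The conceptually cleanest route, which I would state first, is to observe that an open-loop control $u:[0,T]\to\R^d$ is the special closed-loop control $\tilde u(x,t)\triangleq u(t)$ with $\nabla_x\tilde u\equiv 0$: substituting $\tilde u$ into the closed-loop adjoint ODE~\eqref{eq:cont_adjoint_1}--\eqref{eq:cont_adjoint_2} collapses $\nabla_{X_t}(b+\tilde u)$ to $\nabla_{X_t}b$ and removes the $\nabla_{X_t}\frac12\|\tilde u\|^2$ term, which is precisely the lean adjoint ODE~\eqref{eq:lean_adjoint_1}--\eqref{eq:lean_adjoint_2}; moreover $\mathcal{L}_{\mathrm{Basic-Adj-Match}}(\tilde u)=\mathcal{L}_{\mathrm{Adj-Match}}(u)$ and $\tilde J[\tilde u]=J[u]$, so the claim follows as a corollary of Proposition~\ref{prop:basic_adjoint_matching}. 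I would then also record the self-contained derivation for completeness.

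First I would introduce the sensitivity process $S_t\triangleq\partial X_t/\partial\theta$, which solves the linear ODE $\dot S_t=\nabla_{X_t}b(X_t,t)\,S_t+\partial_\theta u(t)$ with $S_0=0$ (since $X_0\sim x_0$ does not depend on $\theta$). Differentiating $J[u]$ under the integral gives
\begin{align*}
  \frac{\dif}{\dif\theta}J[u]=\int_0^T\!\Big(u(t)^\top Q(t)\,\partial_\theta u(t)+\nabla_{X_t}f(X_t,t)^\top S_t\Big)\dif t+\nabla\Phi(X_T)^\top S_T.
\end{align*}
Next I would differentiate the pairing $\tilde a(t;X)^\top S_t$ in time along the fixed trajectory $X$: plugging in the lean adjoint ODE and the sensitivity ODE, the $\nabla_{X_t}b$ contributions cancel and one is left with $\frac{\dif}{\dif t}(\tilde a^\top S_t)=-\nabla_{X_t}f(X_t,t)^\top S_t+\tilde a(t;X)^\top\partial_\theta u(t)$. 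Integrating over $[0,T]$ and using $S_0=0$ together with $\tilde a(T;X)=\nabla\Phi(X_T)$ yields $\nabla\Phi(X_T)^\top S_T=-\int_0^T\nabla_{X_t}f^\top S_t\,\dif t+\int_0^T\tilde a^\top\partial_\theta u\,\dif t$. Substituting this back cancels both state-sensitivity integrals exactly, leaving $\frac{\dif}{\dif\theta}J[u]=\int_0^T\big(Q(t)u(t)+\tilde a(t;X)\big)^\top\partial_\theta u(t)\,\dif t$. A square completion (treating $X$, hence $\tilde a(\cdot;X)$, as $\theta$-independent, which is exactly what \texttt{stop-gradient} encodes) then rewrites the integrand as $\frac12\,\partial_\theta\big\|u(t)+Q(t)^{-1/2}\tilde a(t;X)\big\|^2$ up to the symmetric positive-definite $Q(t)$-weighting; in the flow-matching case $Q(t)\equiv\lambda I$ this is literally $\lambda\,\frac{\dif}{\dif\theta}\big[\frac12\int_0^1\|u(t)+\lambda^{-1}\tilde a(t;X)\|^2\,\dif t\big]$, so $\nabla_\theta J[u]$ and $\nabla_\theta\mathcal{L}_{\mathrm{Adj-Match}}(u)$ agree (up to the harmless positive scalar $\lambda$).

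The step I expect to be the crux is pinning down \emph{why} the adjoint may be taken ``lean'': this is precisely where the open-loop setting departs from the closed-loop one of Proposition~\ref{prop:basic_adjoint_matching}. Since $u(t)$ carries no explicit $X_t$-dependence, neither the drift Jacobian $\nabla_{X_t}(b+u)$ nor the running cost $\nabla_{X_t}\big(f+\frac12\|u\|_{Q}^2\big)$ picks up any term beyond $\nabla_{X_t}b$ and $\nabla_{X_t}f$, and it is exactly this that makes the $\nabla_{X_t}b$ terms cancel in $\frac{\dif}{\dif t}(\tilde a^\top S_t)$. The remaining ingredients are routine and I would only sketch them: $C^1$-regularity and integrability of $b,f,\Phi,u$ to justify differentiating under the integral and to guarantee well-posedness of the sensitivity and lean-adjoint ODEs; elementary properties of the matrix square root $Q(t)^{1/2}$ (symmetric, commuting with $Q(t)^{-1}$) used in the square completion; and, when $x_0$ is a genuine distribution, the observation that the identity holds pathwise in $X_0$, so applying $\E_{X_0\sim x_0}$ to both sides is immediate.
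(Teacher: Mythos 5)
Your proposal is correct and follows essentially the same route as the paper's proof: observe that for an open-loop control $\nabla_{X_t}u(t)=0$, so the closed-loop adjoint of Proposition~\ref{prop:basic_adjoint_matching} collapses to the lean adjoint, then express $\frac{\mathrm{d}}{\mathrm{d}\theta}J[u]$ as $\int_0^T\bigl(u^\top Q\,\partial_\theta u+(\partial_\theta u)^\top\tilde a\bigr)\,\mathrm{d}t$ and complete the square --- you merely make explicit the sensitivity-process / integration-by-parts computation that the paper delegates to a citation of equation 32 of the adjoint-matching paper. One small point in your favor: your completed square correctly carries $Q^{-1}\tilde a$ inside a $Q$-weighted norm (equivalently $\lambda^{-1}\tilde a$ for $Q=\lambda I$), which matches $\nabla_\theta J$ exactly, whereas the proposition as written uses $Q^{-1/2}\tilde a$ in an unweighted norm --- a weighting slip inherited from the paper's own square-completion display that is immaterial only when $Q(t)\equiv I$.
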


\begin{proof}
    The proof mirrors the proof of \cite[Prop.~2]{domingo-enrich2025adjoint}, and the proof of our Prop.~\ref{prop:basic_adjoint_matching}. If we define the adjoint state \begin{align}
        &a(t,X) = \nabla_{X_t} \bigg( \int_0^T \bigg( \frac{1}{2} \| u(t)\|^2_{Q(t)} + f(X_t,t) \bigg) \, \mathrm{d}t + \Phi(X_T) \bigg), \\
        &\text{where } X \text{ is a solution of } \dot{X}_t = b(X_t,t) + u(t),
    \end{align}
    we have that $a(t,X)$ satisfies the adjoint ODE in equations \ref{eq:cont_adjoint_1}-\ref{eq:cont_adjoint_2}. Note that unlike in Prop.~\ref{prop:basic_adjoint_matching}, the control $u(t)$ does not depend on the state $X_t$, which simplifies expressions substantially as $\nabla_{X_t} u(t) = 0$.
    In analogy with equation 32 of \cite{domingo-enrich2025adjoint} and our \eqref{eq:derivative_tilde_J}, we have that 
    \begin{align} \label{eq:derivative_J}
        \frac{\mathrm{d}}{\mathrm{d}\theta} J[u] = \frac{1}{2} \int_0^T \frac{\partial}{\partial \theta} \| u(t)\|^2_{Q(t)} \, \mathrm{d}t + \int_0^T \frac{\partial u(t)}{\partial \theta}^{\top} a(t,X) \, \mathrm{d}t,
    \end{align}
    and completing the square as in \eqref{eq:square_completion}, we obtain that $\frac{1}{2} \frac{\partial}{\partial \theta} \| u(t)\|^2_{Q(t)} + \frac{\partial u(t)}{\partial \theta}^{\top} a(t,X) = \frac{1}{2} \frac{\partial}{\partial \theta} \| u(t) + Q(t)^{-1/2} a(t,X) \|^2_{Q(t)}$. Plugging this equality into \eqref{eq:derivative_J} concludes the proof.
\end{proof}

\newpage
\section{Bounds of Resulted Distributions}
\subsection{Bounding the Wasserstein-2 Distance}

We first analyze the relationship between our objective of \methodname and the 2-Wasserstein distance ($W_2$). We show that our objective minimizes a strong upper bound on $W_2(p_1, q_1)$.

Let $p_0 = q_0$ be the initial distribution. Consider the two flows, coupled by their initial condition:
\begin{align*}
    \dot{x}_t &= v_\theta(x_t, t), \quad x_0 \sim p_0 \implies x_t \sim \pt \\
    \dot{y}_t &= \vb(y_t, t), \quad y_0 = x_0 \sim q_0 \implies y_t \sim \qt
\end{align*}
By definition, the squared $W_2$ distance is the minimum expected squared distance over all possible couplings. Our choice of $x_0 = y_0$ is one such coupling, so it provides an upper bound:
$$
W_2(\pt, \qt)^2 \le \E[\|x_t - y_t\|^2]
$$

\begin{proposition}[$W_2$ Bound via Grönwall's Inequality]
Assume the base vector field $\vb$ is $L$-Lipschitz in $x$. Then the $W_2$ distance is bounded by the $L_2$ FM loss:
$$
W_2(p_1, q_1)^2 \le C \int_0^1 \E_{\pt}[\|\tilde v_\theta(x_t, t)\|^2] \dif t
$$
where $C = e^{2L+1}$ is a constant.
\end{proposition}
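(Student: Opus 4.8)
The plan is to start from the coupling bound already recorded in the excerpt, $W_2(\pt,\qt)^2 \le \E[\norm{x_t - y_t}^2]$, and to control the right-hand side by a Grönwall-type stability estimate on the difference process $\delta_t \triangleq x_t - y_t$. Since $x_0 = y_0$ we have $\delta_0 = 0$, and using the decomposition $v_\theta = \vb + \tilde v_\theta$ the difference obeys
\[
\dot{\delta}_t = v_\theta(x_t,t) - \vb(y_t,t) = \bigl(\vb(x_t,t) - \vb(y_t,t)\bigr) + \tilde v_\theta(x_t,t).
\]
So the Lipschitz part of $\vb$ acts as a contraction/expansion term while $\tilde v_\theta$ acts as a forcing term, which is exactly the structure needed for Grönwall.

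Next I would compute the evolution of the squared norm, $\dt \norm{\delta_t}^2 = 2\langle \delta_t, \vb(x_t,t) - \vb(y_t,t)\rangle + 2\langle \delta_t, \tilde v_\theta(x_t,t)\rangle$. The first inner product is at most $L\norm{\delta_t}^2$ by Cauchy--Schwarz and the $L$-Lipschitz assumption on $\vb$; the second is at most $\norm{\delta_t}^2 + \norm{\tilde v_\theta(x_t,t)}^2$ by Young's inequality $2\langle a,b\rangle \le \norm{a}^2 + \norm{b}^2$. Taking expectations and setting $m(t) \triangleq \E[\norm{\delta_t}^2]$ (here $x_t \sim \pt$) yields the differential inequality
\[
\dot m(t) \le (2L+1)\, m(t) + \E_{\pt}\bigl[\norm{\tilde v_\theta(x_t,t)}^2\bigr].
\]
Then I would integrate via the integrating-factor form of Grönwall's lemma: with $a = 2L+1 > 0$ and $m(0) = 0$,
\[
m(1) \le \int_0^1 e^{a(1-s)}\, \E_{p_s}\bigl[\norm{\tilde v_\theta(x_s,s)}^2\bigr] \dif s \le e^{2L+1}\int_0^1 \E_{p_s}\bigl[\norm{\tilde v_\theta(x_s,s)}^2\bigr] \dif s,
\]
using $e^{a(1-s)} \le e^{a}$ on $[0,1]$. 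Combining with $W_2(p_1,q_1)^2 \le m(1)$ gives the claim with $C = e^{2L+1}$, and I would note that the integrand is precisely (twice/$\lambda$ times) the running cost appearing in the \methodname objective, which is the point of the bound.

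There is no deep obstacle here; it is a textbook stability estimate. The only points that need care are (i) justifying the almost-everywhere differentiation of $t \mapsto \norm{\delta_t}^2$ and the interchange of $\dt$ with $\E[\cdot]$, which follow from mild regularity of $v_\theta$ (continuity/local Lipschitzness guaranteeing a well-posed flow and an integrable integrand), and (ii) observing that the unit weight in Young's inequality is the one that makes the constant optimal: replacing $2\langle \delta_t, \tilde v_\theta\rangle$ by $\epsilon\norm{\delta_t}^2 + \epsilon^{-1}\norm{\tilde v_\theta}^2$ produces $C = \epsilon^{-1} e^{2L+\epsilon}$, and a one-line minimization over $\epsilon > 0$ gives $\epsilon = 1$, hence exactly $C = e^{2L+1}$.
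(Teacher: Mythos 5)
Your proposal is correct and follows essentially the same route as the paper's proof: the same coupling of the two flows at $t=0$, the same split of $\dot{\Delta}_t$ into the Lipschitz part of $\vb$ plus the residual $\tilde v_\theta$, the same Cauchy--Schwarz/Young step yielding $\dot u \le (2L+1)u + \E_{\pt}[\|\tilde v_\theta\|^2]$, and the same integral Grönwall conclusion with $C = e^{2L+1}$. The added remarks on regularity and on the optimality of the unit Young weight are correct but do not change the argument.
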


\begin{proof}
Let $\Delta_t = x_t - y_t$ and $u(t) = \E[\|\Delta_t\|^2]$. We have $u(0) = \E[\|x_0 - y_0\|^2] = 0$.
The time derivative is $\dot{\Delta}_t = v_\theta(x_t, t) - \vb(y_t, t)$. Let $\tilde v_\theta(x, t) = v_\theta(x, t) - \vb(x, t)$.
\begin{align*}
    \frac{d}{dt} \|\Delta_t\|^2 &= 2 \langle \Delta_t, \dot{\Delta}_t \rangle \\
    &= 2 \langle \Delta_t, [v_\theta(x_t, t) - \vb(x_t, t)] + [\vb(x_t, t) - \vb(y_t, t)] \rangle \\
    &= 2 \langle \Delta_t, \tilde v_\theta(x_t, t) \rangle + 2 \langle \Delta_t, \vb(x_t, t) - \vb(y_t, t) \rangle
\end{align*}
We apply the Cauchy-Schwarz inequality to the first term and the $L$-Lipschitz condition to the second:
$$
\frac{d}{dt} \|\Delta_t\|^2 \le 2 \|\Delta_t\| \|\tilde v_\theta(x_t, t)\| + 2 \|\Delta_t\| (L \|\Delta_t\|)
$$
Using Young's inequality ($2ab \le a^2 + b^2$) on the first term gives:
$$
\frac{d}{dt} \|\Delta_t\|^2 \le (\|\Delta_t\|^2 + \|\tilde v_\theta(x_t, t)\|^2) + 2L \|\Delta_t\|^2 = (2L+1) \|\Delta_t\|^2 + \|\tilde v_\theta(x_t, t)\|^2
$$
Taking the expectation and letting $b(t) = \E_{\pt}[\|\tilde v_\theta(x_t, t)\|^2]$, we have the differential inequality:
$$
\dot{u}(t) \le (2L+1) u(t) + b(t)
$$
By the integral form of Grönwall's inequality, which states that if
\[
\dot{u}(t) \le a u(t) + b(t) \quad \text{with} \quad u(0)=0, \quad \text{then} \quad u(t) \le \int_0^t e^{a(t-s)} b(s) ds
\]
Applying this with $a=(2L+1)$, the solution at $t=1$ is:
$$
u(1) \le \int_0^1 e^{(2L+1)(1-s)} b(s) ds \le e^{2L+1} \int_0^1 b(s) ds
$$
Since $W_2(p_1, q_1)^2 \le u(1)$, we arrive at the bound:
\begin{equation}
\label{eq:w2_bound}
W_2(p_1, q_1)^2 \le e^{2L+1} \int_0^1 \E_{\pt}[\|\tilde v_\theta(x_t, t)\|^2] \dif t
\end{equation}
\end{proof}

This result confirms that our objective is a theoretically sound one for minimizing an upper bound on the $W_2$ distance.

\subsection{Bounding the KL Divergence}

We now analyze the KL divergence. Unlike the $W_2$ distance, the KL divergence is sensitive to changes in density, which are governed by the \emph{divergence} of the vector field.

The marginal densities satisfy the continuity equations ($t\in(0,1)$): 
\begin{align*}
    \partial_t \pt(x, t) &= -\nabla \cdot (\pt(x, t) v_\theta(x, t)) \\
    \partial_t \qt(x, t) &= -\nabla \cdot (\qt(x, t) \vb(x, t))
\end{align*}

\begin{proposition}[KL Divergence Identity for ODEs]
Assume the vector fields $v_\theta, \vb$ and densities $\pt, \qt$ are sufficiently smooth and have sufficient decay at infinity such that all boundary terms from integration by parts vanish. Then, the exact identity for the final KL divergence is:
$$
\KL{p_1}{q_1} = -\int_0^1 \E_{\pt}\left[ \tilde v_\theta(x_t, t) \cdot \nabla \log \qt \right] \dif t - \int_0^1 \E_{\pt}\left[ \nabla \cdot \tilde v_\theta(x_t, t) \right] \dif t,
$$
where $\tilde v_\theta = v_\theta - \vb$.
\end{proposition}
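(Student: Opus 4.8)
The plan is to introduce the time-indexed functional
\[
F(t) \triangleq \KL{\pt}{\qt} = \int \pt(x,t)\log\frac{\pt(x,t)}{\qt(x,t)}\,\dif x,
\]
observe that $F(0)=0$ because $p_0=q_0$, compute the time derivative $\tfrac{\dif}{\dif t}F(t)$, and recover the stated identity from the fundamental theorem of calculus, $\KL{p_1}{q_1} = F(1)-F(0) = \int_0^1 \tfrac{\dif}{\dif t}F(t)\,\dif t$. All of the content lies in evaluating this derivative.

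First I would differentiate under the integral sign (licensed by the stated smoothness and decay hypotheses). Splitting $F(t) = \int \pt\log\pt\,\dif x - \int\pt\log\qt\,\dif x$ and using $\tfrac{\dif}{\dif t}\int\pt\,\dif x = 0$ together with $\partial_t\log\qt = \partial_t\qt/\qt$, the normalization contributions cancel and
\[
\tfrac{\dif}{\dif t}F(t) = \int (\partial_t\pt)\log\pt\,\dif x - \int(\partial_t\pt)\log\qt\,\dif x - \int \frac{\pt}{\qt}\,\partial_t\qt\,\dif x.
\]
Next I would substitute the two continuity equations $\partial_t\pt = -\nabla\cdot(\pt v_\theta)$ and $\partial_t\qt = -\nabla\cdot(\qt\vb)$ and integrate by parts in each of the three integrals, discarding the boundary terms. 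The first integral becomes $\int \pt\, v_\theta\cdot\nabla\log\pt\,\dif x$, the second becomes $-\int\pt\, v_\theta\cdot\nabla\log\qt\,\dif x$, and the third, after one integration by parts and the identity $\nabla(\pt/\qt) = (\pt/\qt)(\nabla\log\pt - \nabla\log\qt)$, becomes $-\int\pt\,\vb\cdot\nabla\log\pt\,\dif x + \int\pt\,\vb\cdot\nabla\log\qt\,\dif x$.

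Collecting the four pieces, the $v_\theta$ and $\vb$ terms pair up into $\tilde v_\theta = v_\theta - \vb$, giving $\tfrac{\dif}{\dif t}F(t) = \int \pt\,\tilde v_\theta\cdot\nabla\log\pt\,\dif x - \int\pt\,\tilde v_\theta\cdot\nabla\log\qt\,\dif x$. The remaining step is to rewrite the first integral: since $\pt\nabla\log\pt = \nabla\pt$, one further integration by parts yields $\int\pt\,\tilde v_\theta\cdot\nabla\log\pt\,\dif x = \int\tilde v_\theta\cdot\nabla\pt\,\dif x = -\int\pt\,\nabla\cdot\tilde v_\theta\,\dif x$, so that $\tfrac{\dif}{\dif t}F(t) = -\E_{\pt}[\tilde v_\theta\cdot\nabla\log\qt] - \E_{\pt}[\nabla\cdot\tilde v_\theta]$. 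Integrating over $[0,1]$ and using $F(0)=0$ gives the claimed identity. I expect the main obstacle to be the sign bookkeeping across the three integration-by-parts steps — in particular verifying that the $\nabla\log\pt$ contributions arising from $(\partial_t\pt)\log\pt$ and from the $\pt/\qt$ term against $\vb$ combine exactly into the single term $\pt\,\tilde v_\theta\cdot\nabla\log\pt$, which must then be recognized as a pure divergence so that the seemingly $\pt$-dependent part of the derivative collapses to $-\E_{\pt}[\nabla\cdot\tilde v_\theta]$. A secondary point is justifying that all boundary terms genuinely vanish and that differentiation under the integral is valid, for which I would simply invoke the stated regularity hypothesis rather than tracking explicit tail conditions.
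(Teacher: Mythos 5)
Your proof is correct, and it is the standard derivation that the paper implicitly relies on (the paper states this proposition without providing a proof): differentiate $\KL{\pt}{\qt}$ in time, substitute both continuity equations, integrate by parts, and observe that the $\nabla\log\pt$ contributions combine into $\int \pt\,\tilde v_\theta\cdot\nabla\log\pt\,\dif x = -\E_{\pt}[\nabla\cdot\tilde v_\theta]$, eliminating the unknown score of $\pt$. Your sign bookkeeping checks out at every step, and the only hypothesis you use beyond the stated regularity is $p_0=q_0$, which the paper fixes in the preceding subsection.
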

Applying a bound to the first term (as we did for the $W_2$ proof) gives the final inequality:
\begin{equation}
\label{eq:kl_bound}
\KL{p_1}{q_1} \le \underbrace{\frac{1}{2} \int_0^1 \E_{\pt}[\|\tilde v_\theta(x_t, t)\|^2] \dif t}_{\text{(A) } L_2 \text{ Value Gradient Matching Loss}} + \underbrace{C(p,q)}_{\text{(B) Path-Dependent Term}} - \underbrace{\int_0^1 \E_{\pt}[\nabla \cdot \tilde v_\theta(x_t, t)] \dif t}_{\text{(C) Divergence Term}}
\end{equation}
where $C(p,q) = \frac{1}{2}\int_0^1 \E_{\pt}[\|\nabla \log \qt\|^2] \dif t$ is a functional that depends on both the target path $q_t$ and the learned path $p_t$.

\begin{remark}[Justification for the $L_2$ Proxy Objective]
Equation \ref{eq:kl_bound} shows that the KL divergence is bounded by the $L_2$ value gradient matching loss (Term A), a path-dependent term (Term B), and a divergence-dependent term (Term C). Term (B) depends on both the learned path $p_t$ and the target path $q_t$. It can be bounded, for example, if the target score function has a uniform bound (i.e., $\|\nabla \log \qt(x)\| \le M_t$ for all $x, t$), which would imply $\E_{\pt}[\|\nabla \log \qt\|^2] \le M_t^2$. The primary challenge is that Term (A) and Term (C) are geometrically independent, and Term (C) is computationally expensive to estimate. We therefore use the value gradient matching loss as a computationally efficient proxy objective. We empirically justify this choice, as our finetuned models produce high-quality samples. This success suggests that for our network architecture and problem setup, minimizing Term (A) is sufficient, and the "missing" divergence term (Term C) is implicitly regularized or remains small, likely due to the implicit bias of the neural network.
\end{remark}

\newpage
\section{Experiment Details}

\subsection{Finite difference for value consistency}

Our value consistency loss requires the costly computation of second-order gradients during backpropagation. To save memory and time, we instead use finite differences to approximate the terms (with $u = \vb - \frac{1}{\lambda} g_\phi$):

\begin{align}
    \frac{\partial}{\partial t} g_\phi(x_t, t) &\approx \frac{g_\phi(x_t + \epsilon v(x_t,t) \cdot, t + \epsilon) - g_\phi(x_t, t)}{\epsilon}
    \\
    \Bigl([\nabla g_\phi]^T \bigl( \vb - \frac{1}{\lambda}g_\phi \bigr)\Big)_{(x_t, t)}
    &\approx 
        \frac{
            g_\phi \big(x_t + \epsilon \slashed{\nabla}[ v(x_t, t) ], t \big)
            - g_\phi \big(x_t - \epsilon \slashed{\nabla}[ v(x_t, t)], t \big)
        }{2\epsilon}
    \\
    \Big([\nabla \vb]^T g_\phi \Big)_{(x_t, t)}
    & \approx 
        \frac{
            \vb\big(x_t + \epsilon\slashed{\nabla}[g_\phi(x_t, t)], t\big)
            - \vb\big(x_t - \epsilon\slashed{\nabla}[g_\phi(x_t, t)], t\big)
        }{2\epsilon}
\end{align}

where $\slashed{\nabla}(\cdot) = \texttt{stop-gradient}(\cdot)$. The stop gradient operations on nested function calls prevent second-order gradients during backpropagation. Empirically, we find this approximation works well.

\subsection{More implementation details}

In our experiments, we choose $\eta_t = t^2$ in \Eqref{eq:vgrad_param} if not otherwise specified. We use a CFG scale of $w_\text{CFG} = 5.0$ for all experiments, and the velocity fields of both the base and finetuned models are CFG-composited as $v(x,t;c) = (1 + w_\text{CFG})v(x,t;c) - w_\text{CFG}v(x,t;\varnothing)$. We stop the gradients on $v(x,t;\varnothing)$ as we found this leads to faster convergence. We use the best learning rates (in terms of fast yet stable reward convergence) for each method instead of a fixed ones, as we observe that methods like ReFL and DRaFT can be unstable for very large learning rates. Specifically, we use $5e-4$ for \methodname on all reward models, $5e-5$ for \methodname-PMP on HPSv2 and PickScore, and $1e-4$ for all others. We use the standard AdamW optimizer with $\beta_1 = 0.9$, $\beta_2 = 0.999$ and weight decay $1e-2$. We clip the norm of network update gradients to $1$. We use bfloat16 computation for the flow matching model but float32 for the reward model due to numerical precision issues.

\newpage
\vspace{-1mm}
\section{Additional Figures}
\vspace{-1mm}

\begin{figure}[H]
    \centering
    \vspace{-4.5mm}
    \adjustbox{valign=t, max width=0.98\linewidth}{%
        \includegraphics[width=0.5\linewidth]{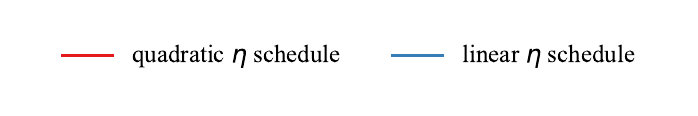}%
    }
    \vspace{-1.5em}

    \includegraphics[width=0.24\linewidth]{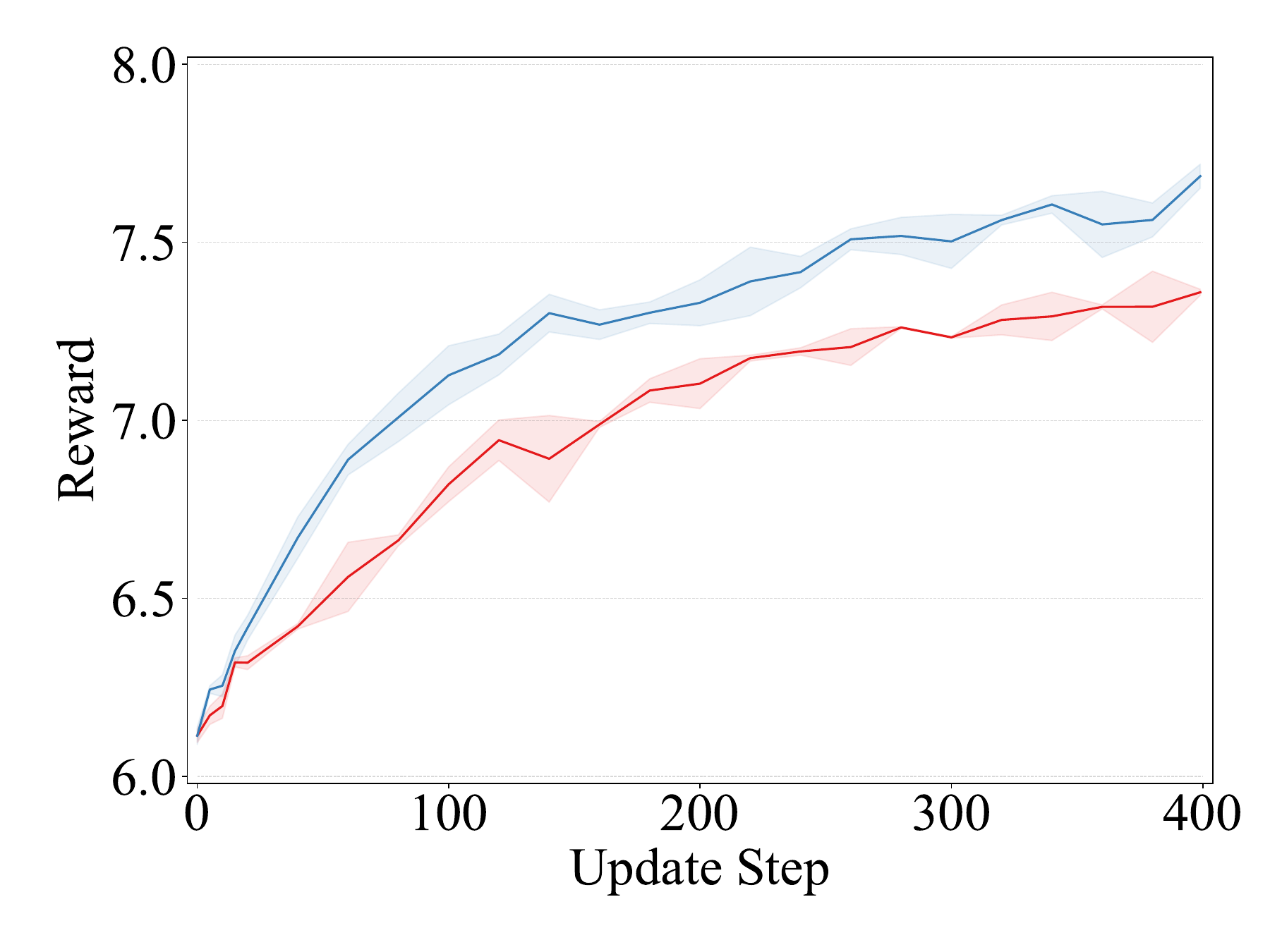}
    \hspace{-0.8em}
    \includegraphics[width=0.24\linewidth]{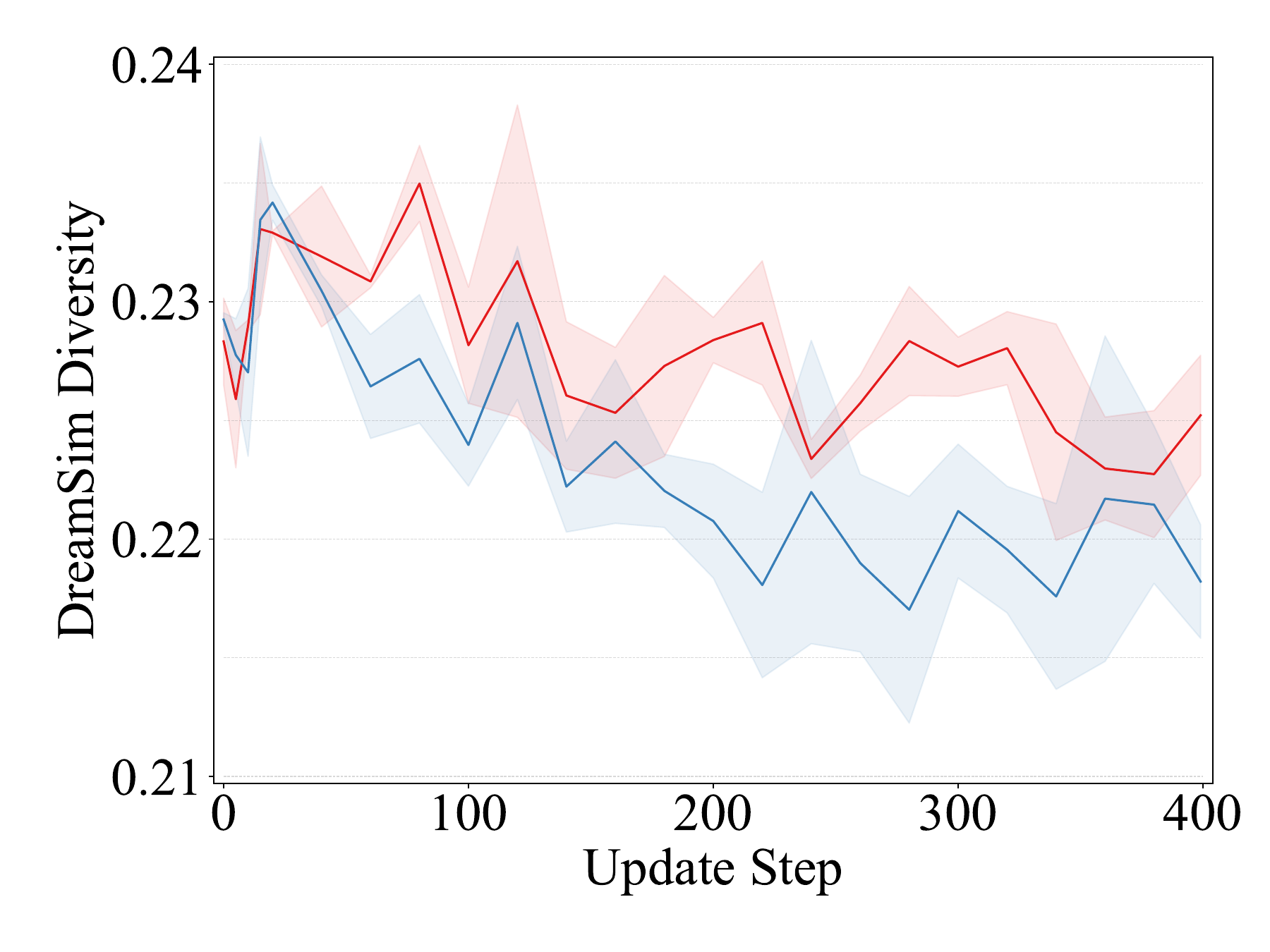}
    \hspace{-0.8em}
    \includegraphics[width=0.24\linewidth]{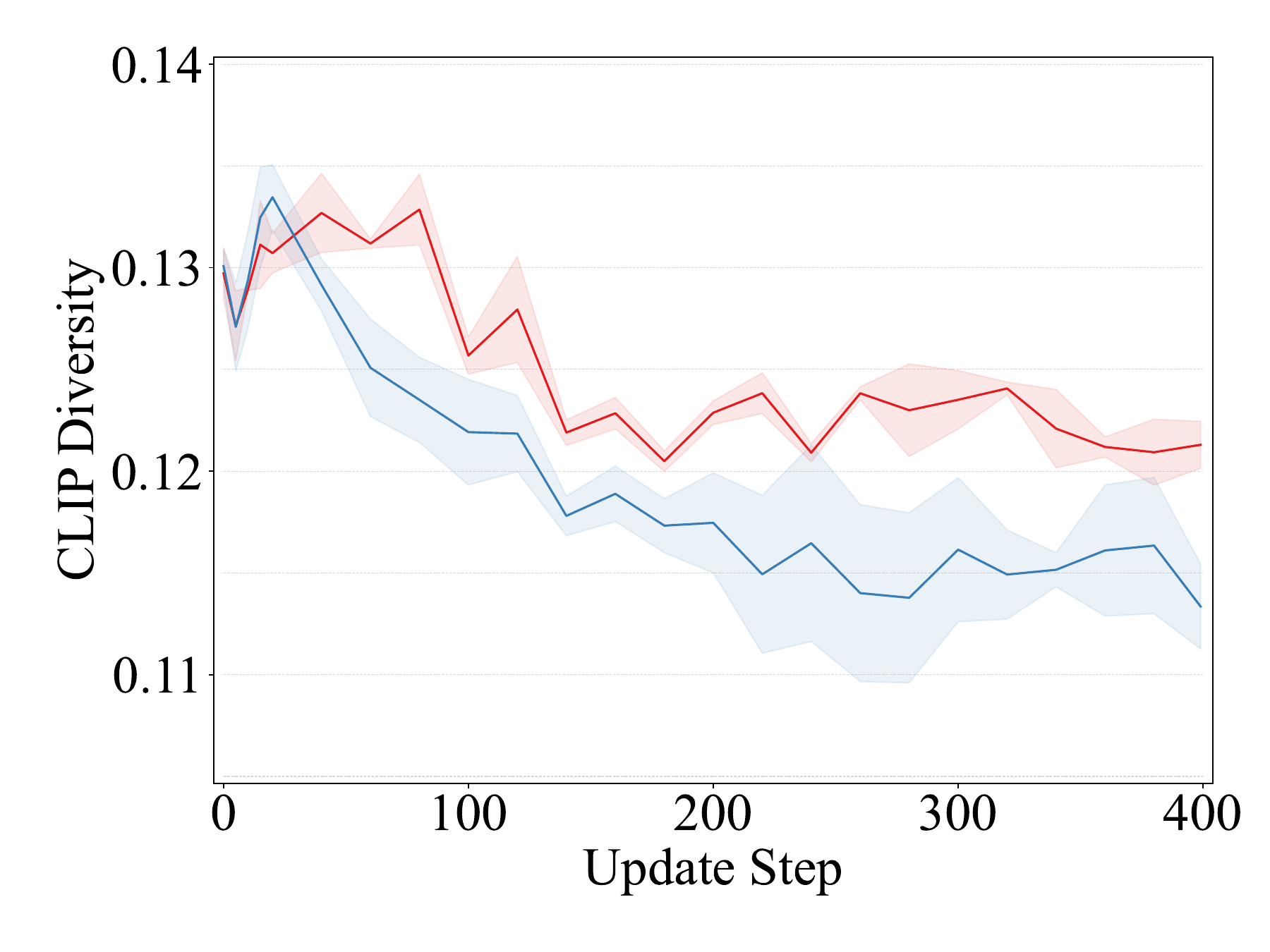}
    \hspace{-0.8em}
    \includegraphics[width=0.24\linewidth]{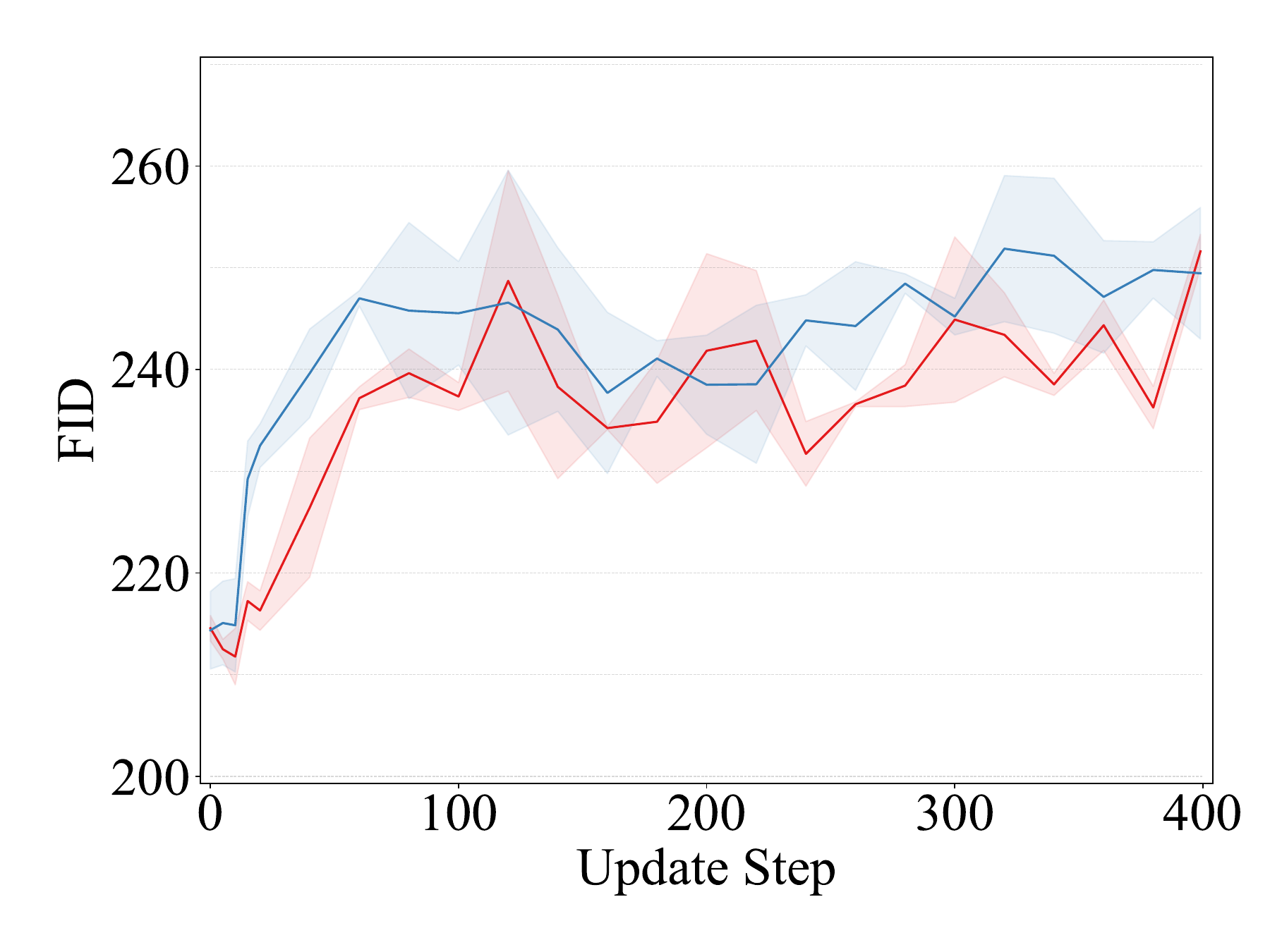}
    \vspace{-1mm}
    \caption{\footnotesize Evolution of metrics for different $\eta$ schedule (experiments on Aesthetic Score). The linear schedule of $\eta$ leads to faster convergence.}
    \label{fig:ablation_eta}
    \vspace{-1mm}
\end{figure}

\begin{figure}[H]
    \vspace{-2mm}
    \centering
    
    \includegraphics[width=0.33\linewidth]{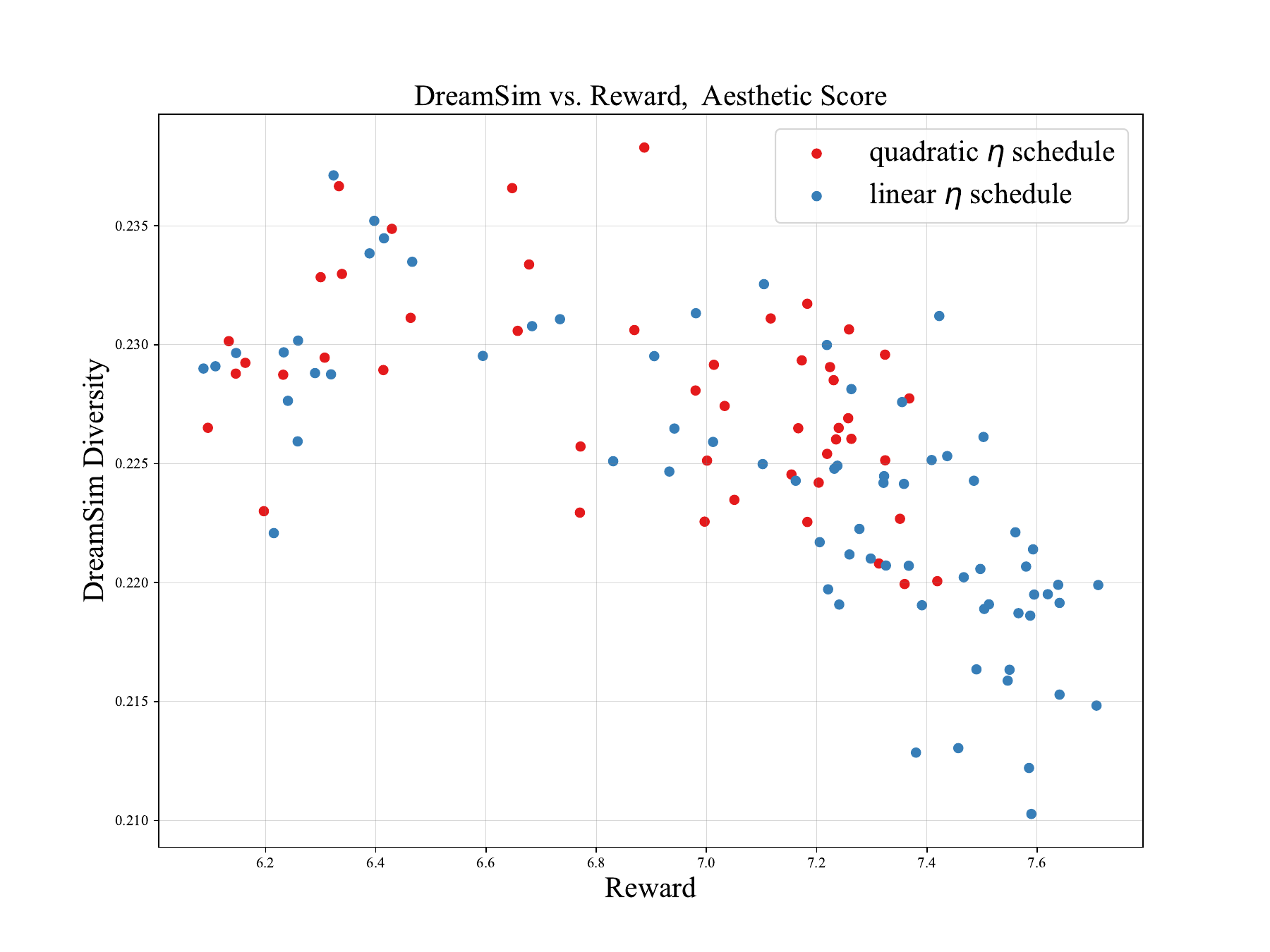}
    \hspace{-5mm}
    \includegraphics[width=0.33\linewidth]{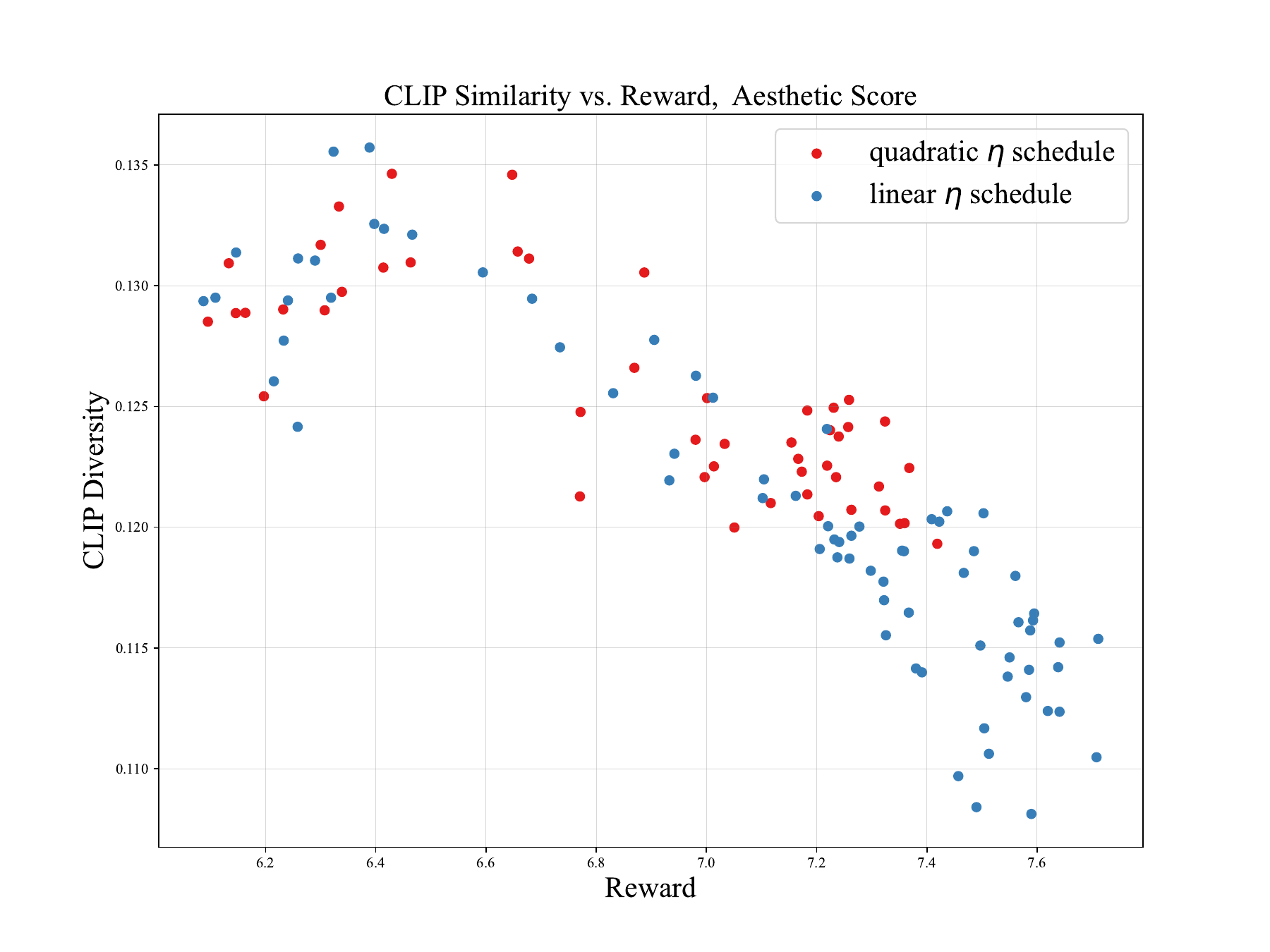}
    \hspace{-5mm}
    \includegraphics[width=0.33\linewidth]{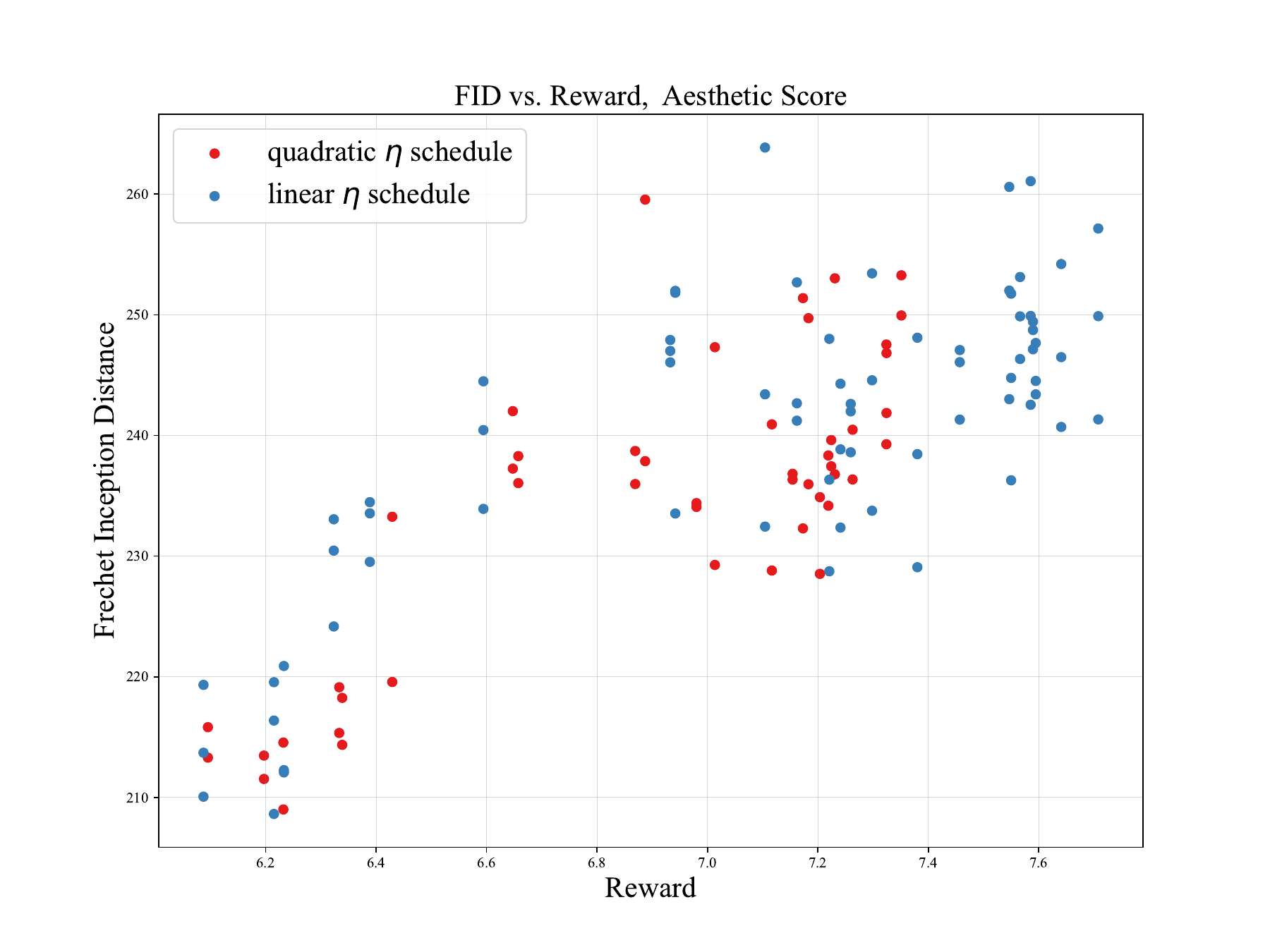}
    \vspace{-1mm}
    \caption{
        \footnotesize Trade-offs between metrics for different $\eta$ schedule (experiments on Aesthetic Score).
    }
    \label{fig:eta_tradeoff}
    \vspace{-1mm}
\end{figure}

\begin{figure}[H]
    \centering
    \vspace{-4.5mm}
    \adjustbox{valign=t, max width=0.98\linewidth}{%
        \includegraphics[width=0.5\linewidth]{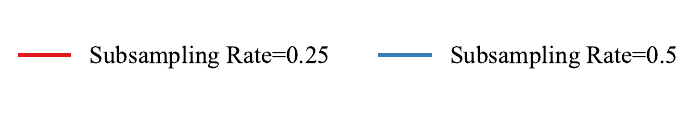}%
    }
    \vspace{-1.5em}

    \includegraphics[width=0.24\linewidth]{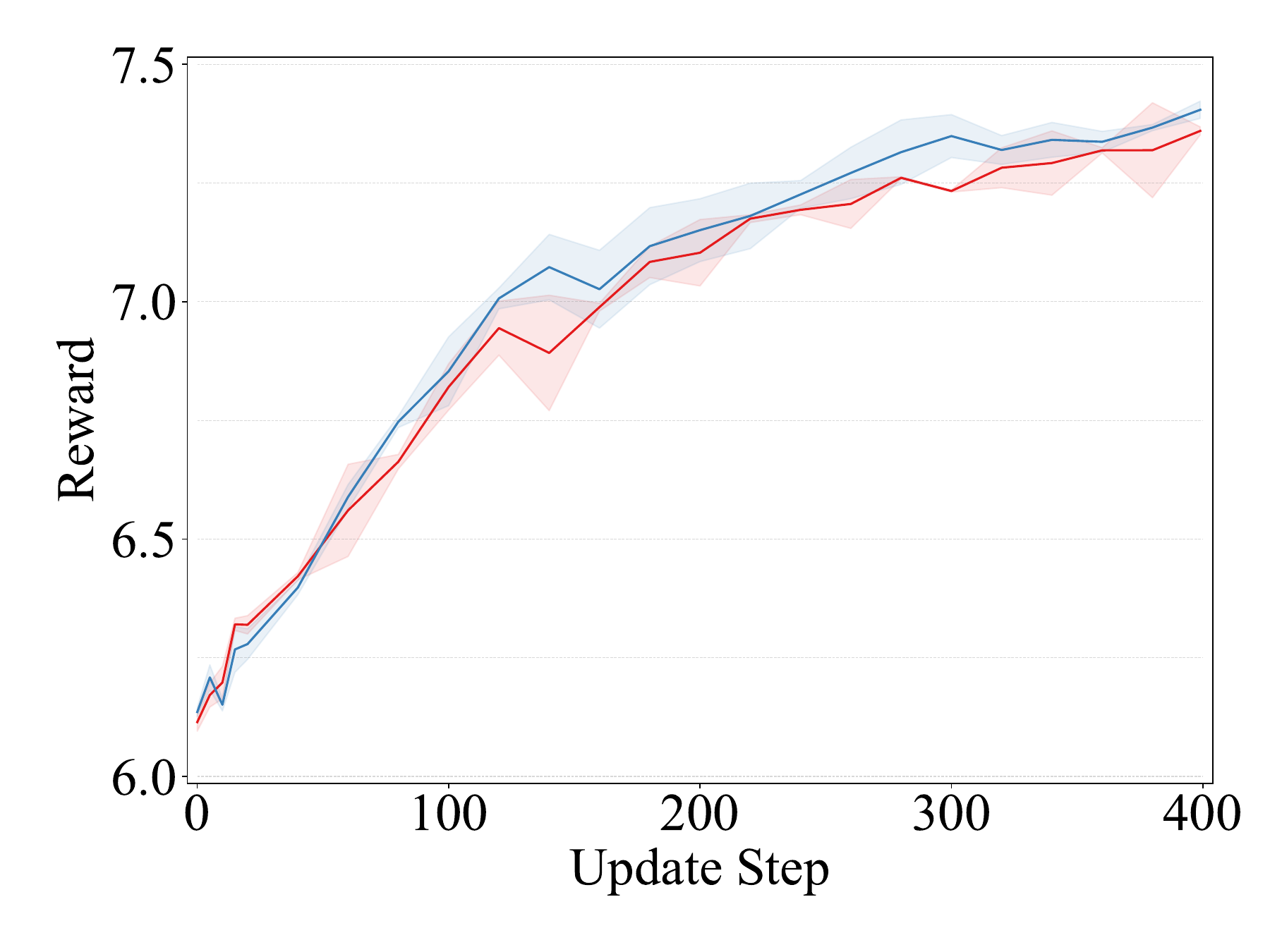}
    \hspace{-0.8em}
    \includegraphics[width=0.24\linewidth]{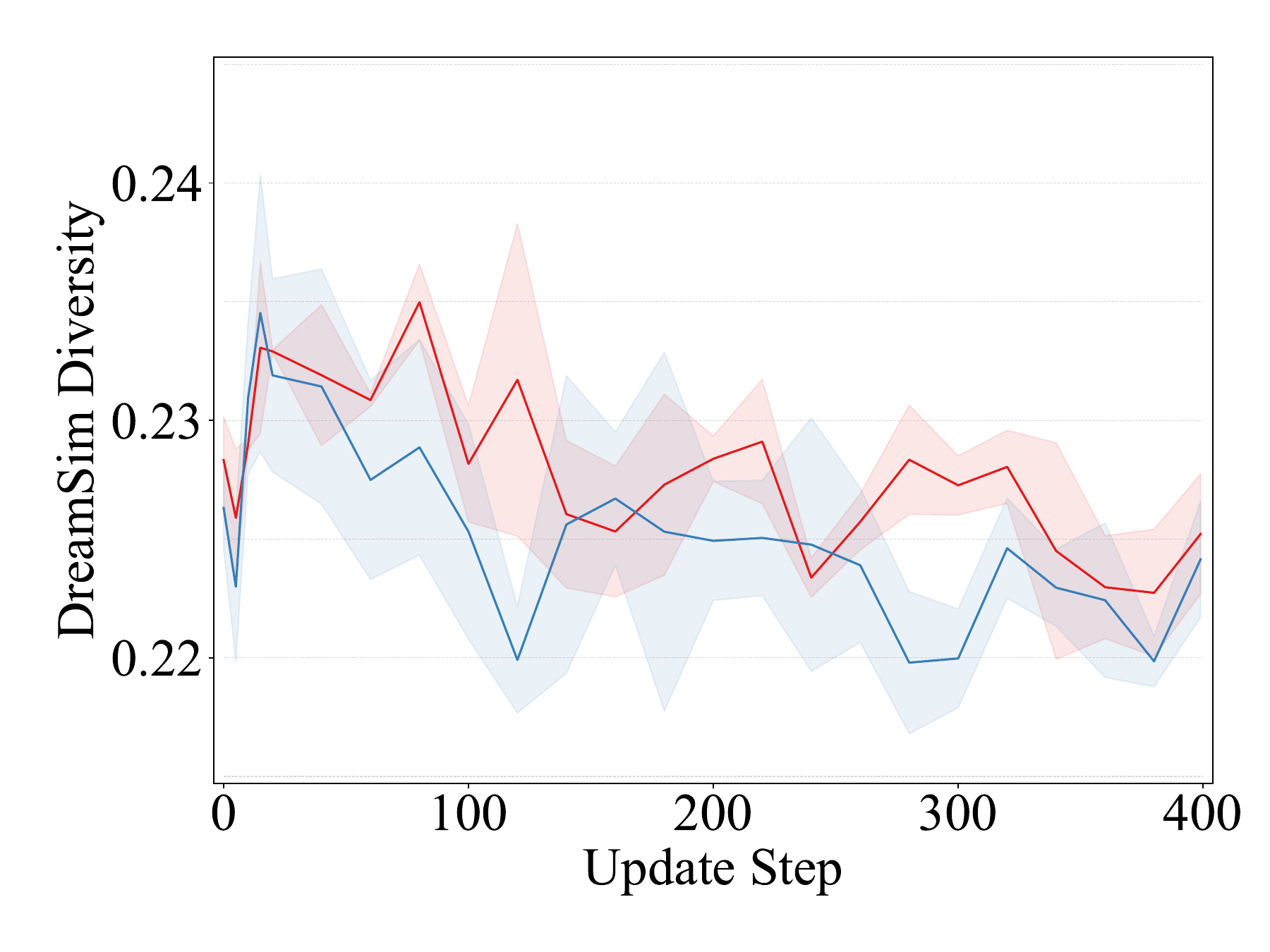}
    \hspace{-0.8em}
    \includegraphics[width=0.24\linewidth]{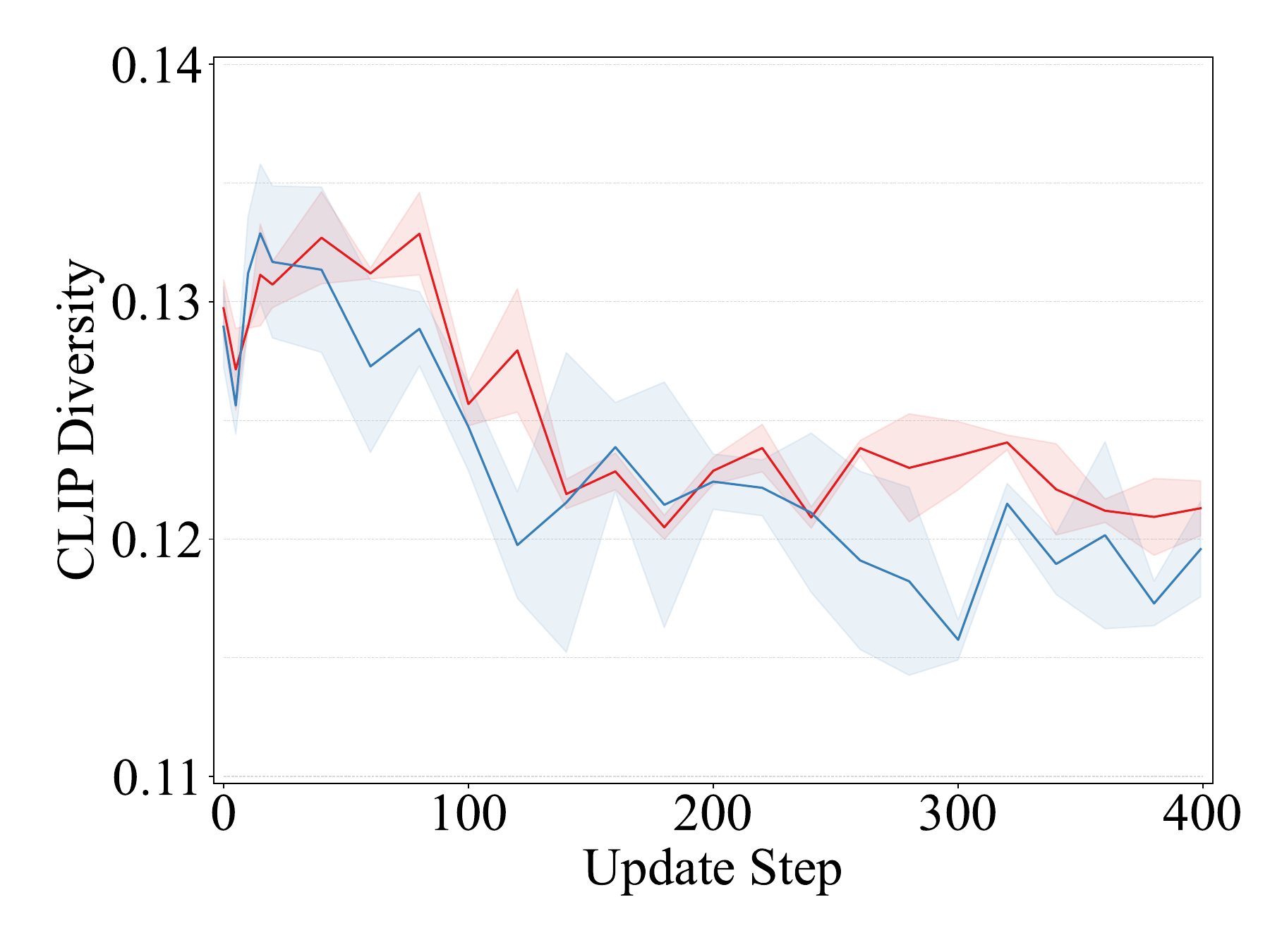}
    \hspace{-0.8em}
    \includegraphics[width=0.24\linewidth]{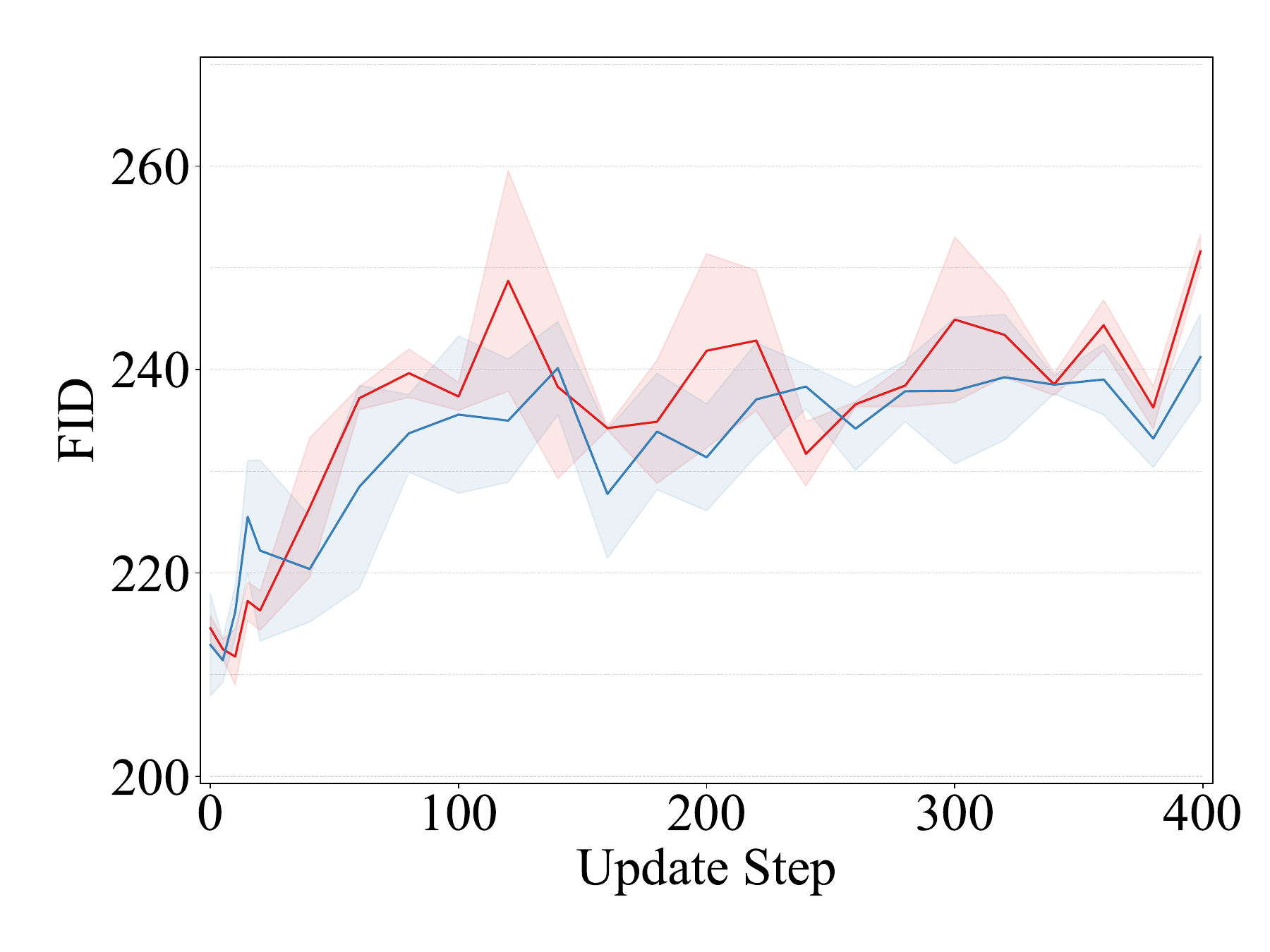}
    \vspace{-1mm}
    \caption{\footnotesize Evolution of metrics for different transition subsampling rates (experiments on Aesthetic Score).}
    \label{fig:subsample_eta}
    \vspace{-3mm}
\end{figure}

\begin{figure}[H]
    \centering
    
    \includegraphics[width=0.32\linewidth]{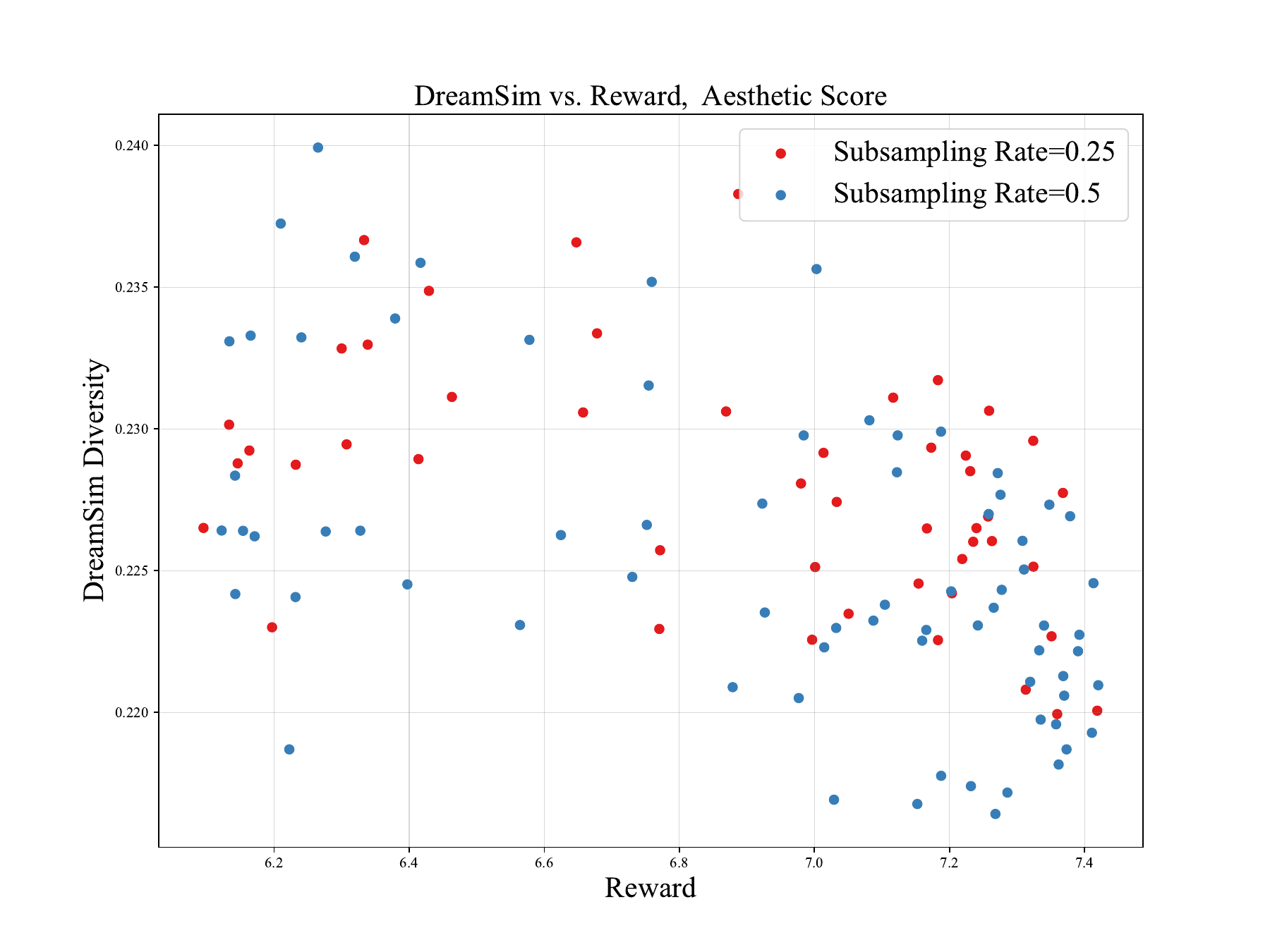}
    \hspace{-5mm}
    \includegraphics[width=0.32\linewidth]{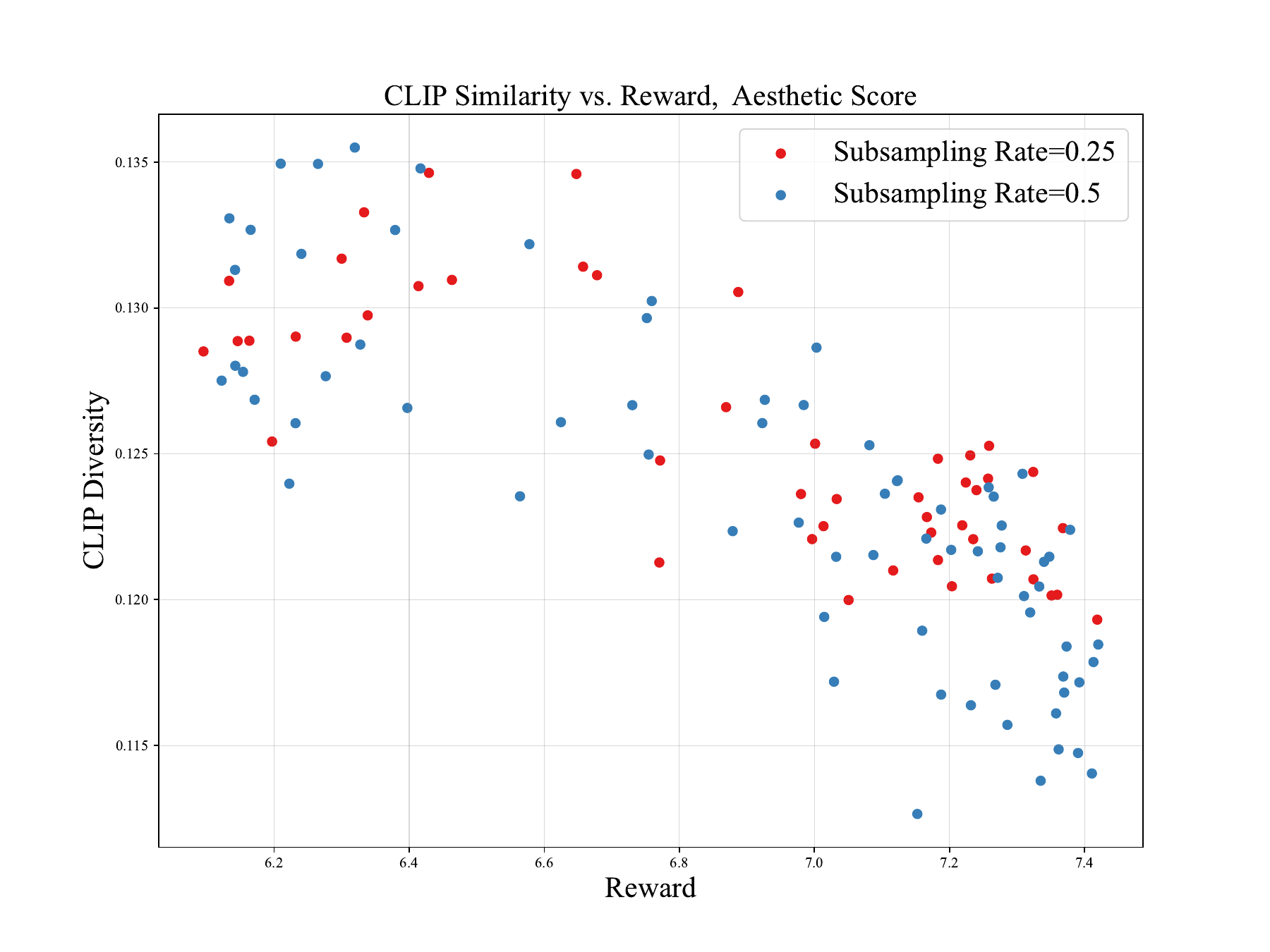}
    \hspace{-5mm}
    \includegraphics[width=0.32\linewidth]{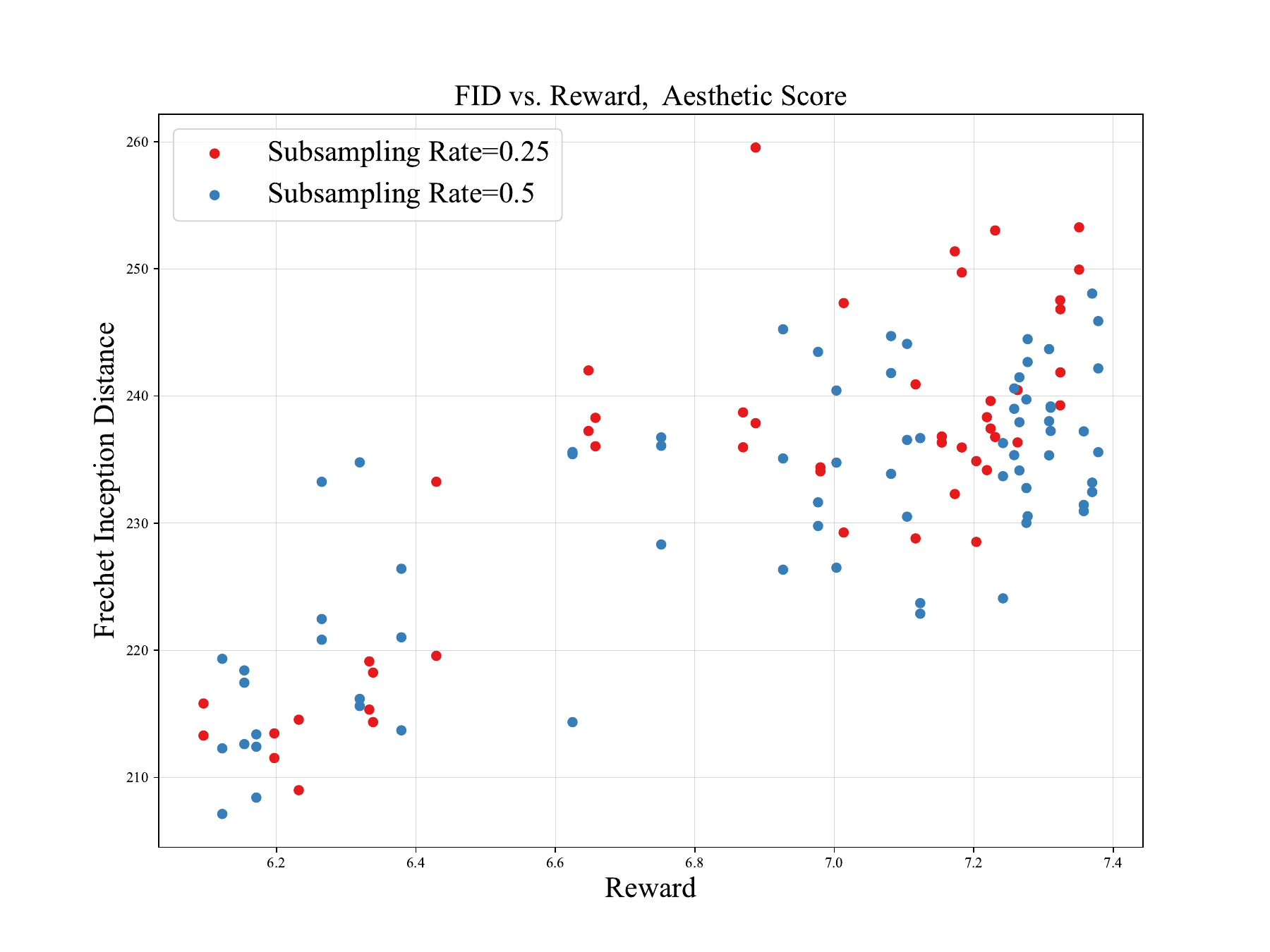}
    \vspace{-1mm}
    \caption{
        \footnotesize Trade-offs between metrics for different transition subsampling rates (experiments on Aesthetic Score).
    }
    \label{fig:subsample_tradeoff}
    \vspace{-4mm}
\end{figure}

\clearpage
\newpage

\section{Evolution of Generated Samples}

We show in Figure~\ref{fig:degradation} that our method is more capable of preserving the prior from the base model during the finetuning process.

\begin{figure}[h]
    \vspace{-5mm}
    \centering
    \includegraphics[width=0.8\linewidth]{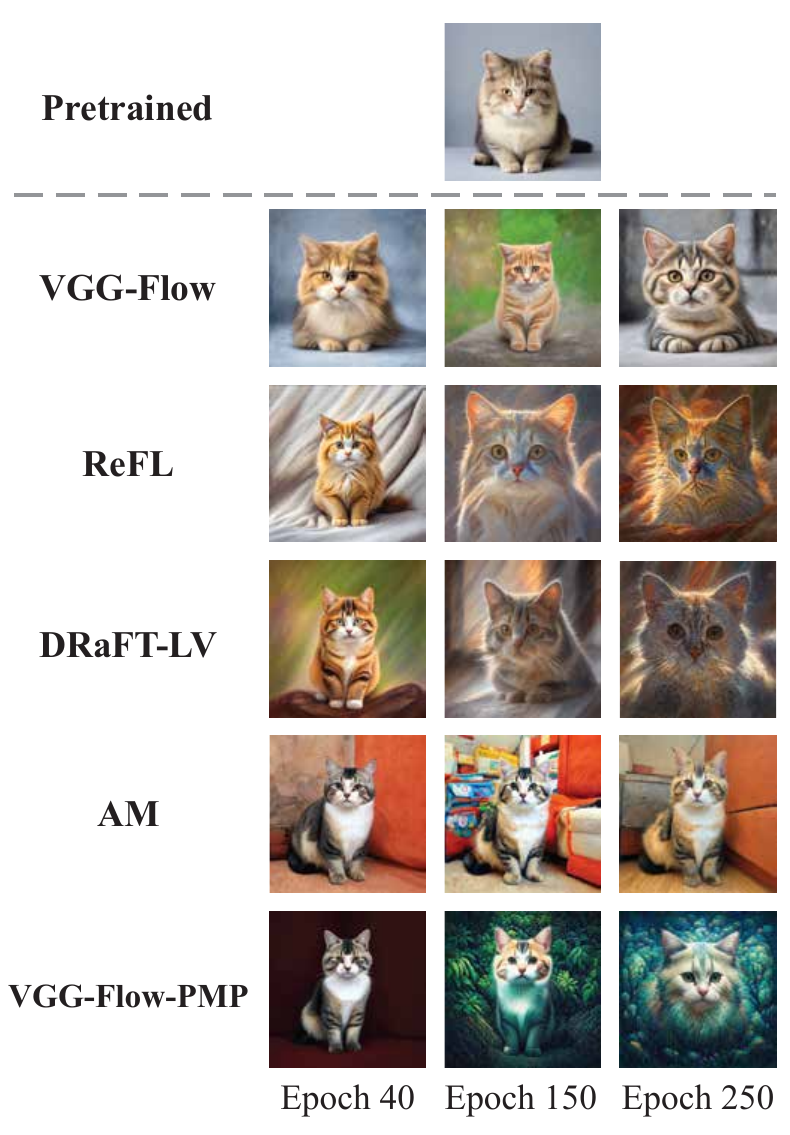}
    \caption{\footnotesize
        The degradation of image quality of baselines, compared to the evolution sequence of results produced by our method.
    }
    \label{fig:degradation}
    \vspace{-5mm}
\end{figure}

\clearpage
\newpage

\section{More Generated Samples}

\begin{figure}[H]
    \vspace{-5mm}
    \centering
    \includegraphics[width=0.9\linewidth]{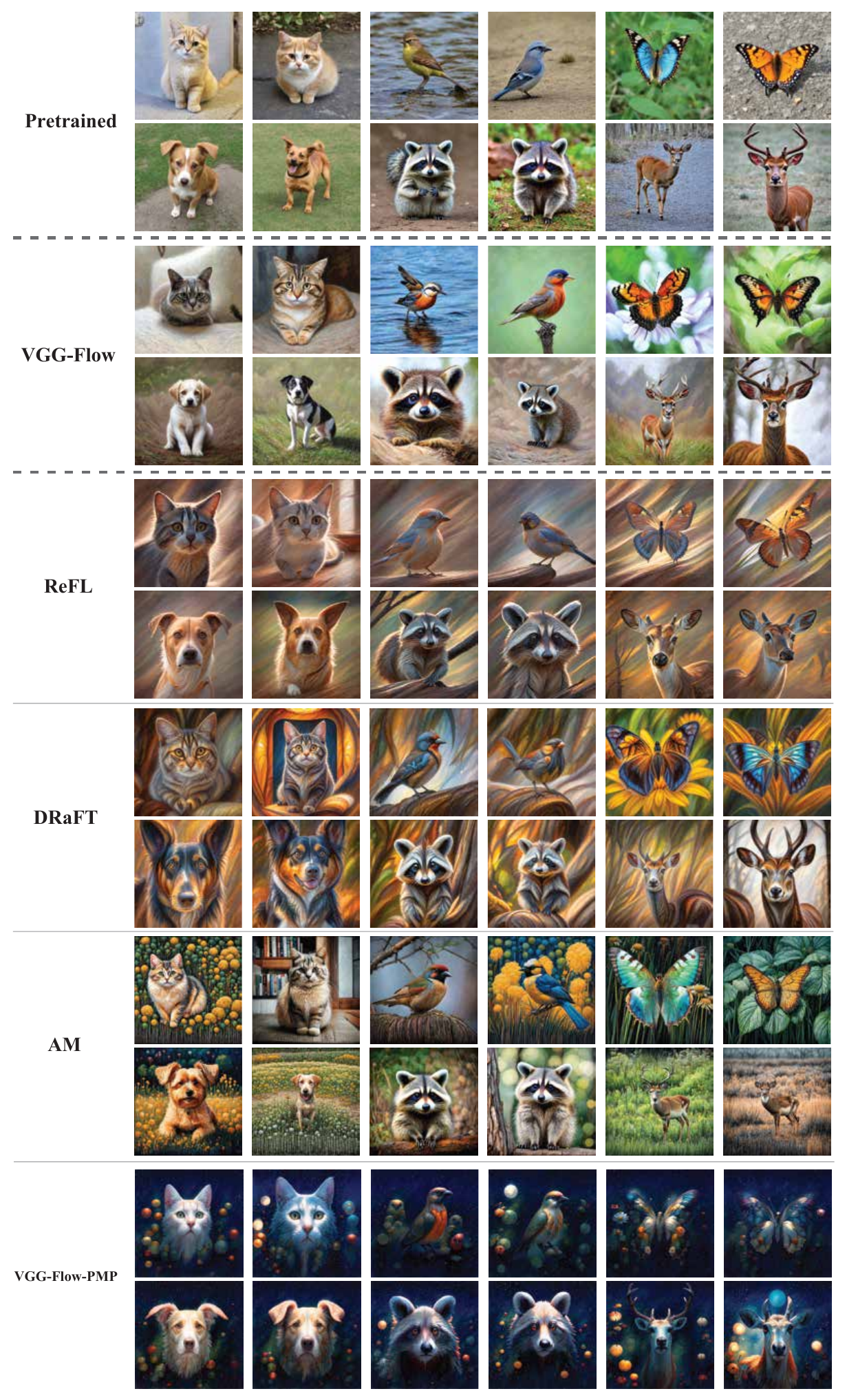}
    \caption{\footnotesize
        More qualitative results on Aesthetic Score.
    }
    \label{fig:aes_appendix}
\end{figure}

\clearpage
\begin{figure}[H]
    \centering
    \includegraphics[width=\linewidth]{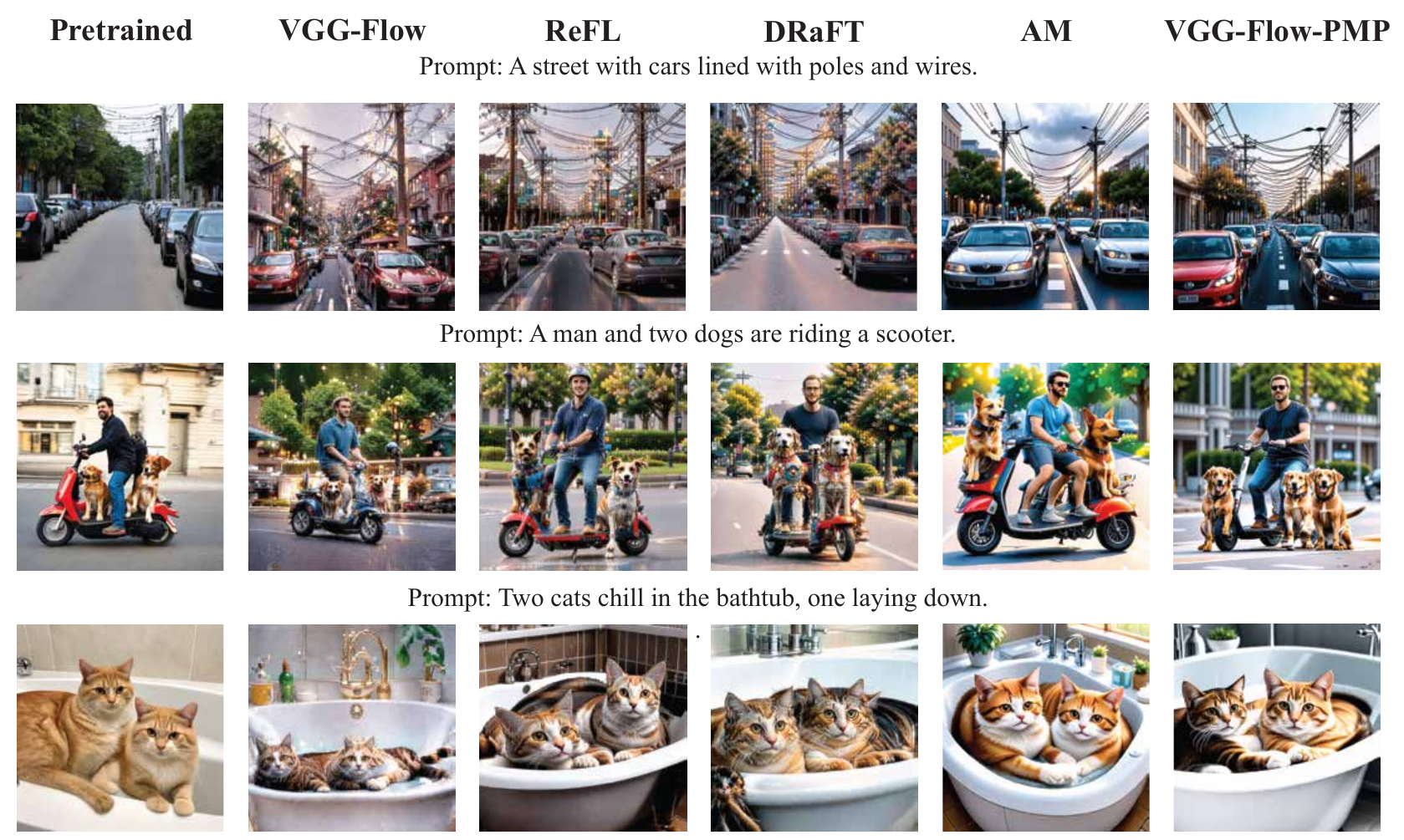}
    \caption{\footnotesize
        More qualitative results on HPSv2.
    }
    \label{fig:hps_appendix}
\end{figure}

\begin{figure}[H]
    \centering
    \includegraphics[width=\linewidth]{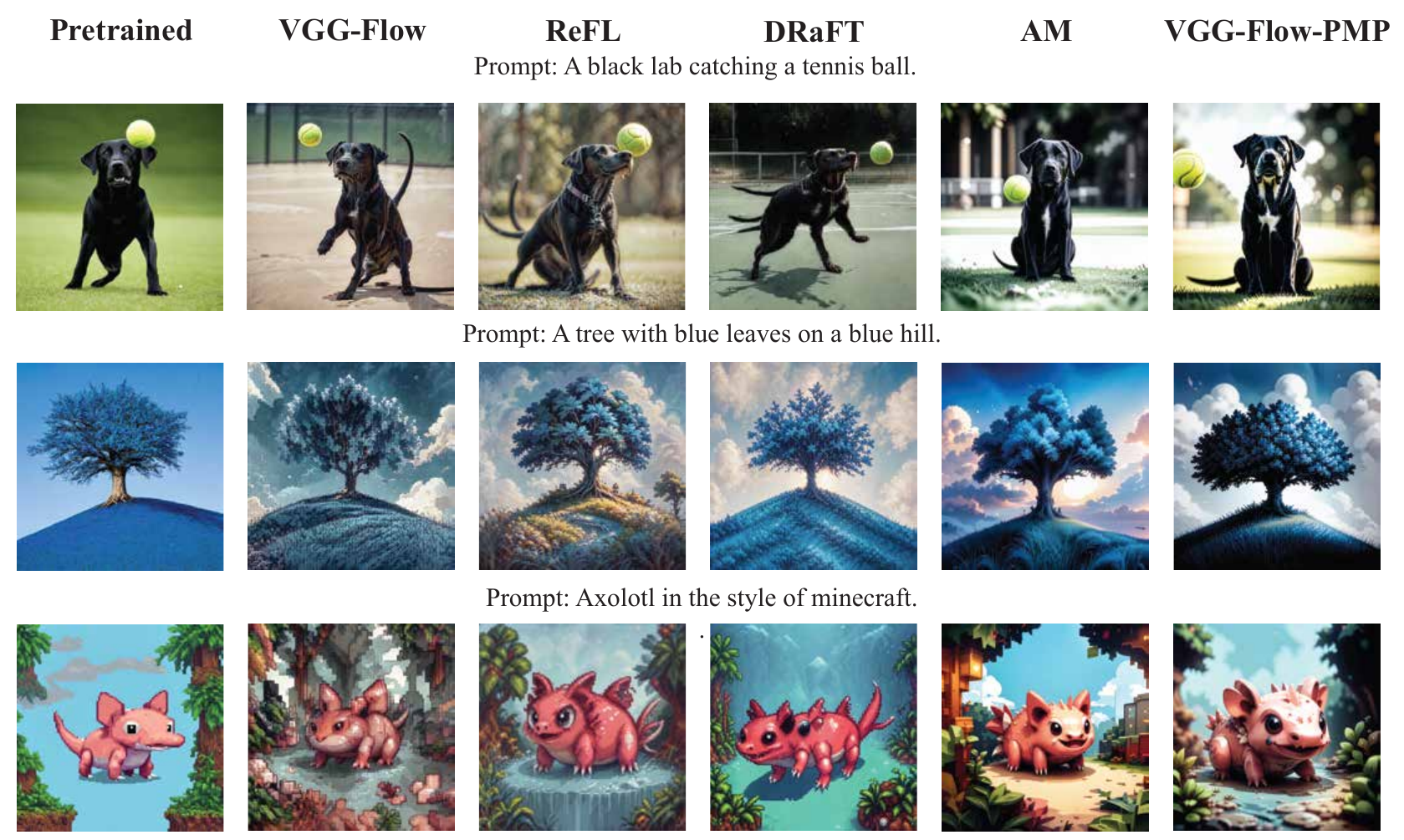}
    \caption{\footnotesize
        More qualitative results on PickScore.
    }
    \label{fig:pickscore_appendix}
    \vspace{-3.5mm}
\end{figure}

\newpage
\section*{NeurIPS Paper Checklist}

\begin{enumerate}

\item {\bf Claims}
    \item[] Question: Do the main claims made in the abstract and introduction accurately reflect the paper's contributions and scope?
    \item[] Answer: \answerYes{} %
    \item[] Justification: 
    The claims are accurately reflected through corresponding sections.
    \item[] Guidelines:
    \begin{itemize}
        \item The answer NA means that the abstract and introduction do not include the claims made in the paper.
        \item The abstract and/or introduction should clearly state the claims made, including the contributions made in the paper and important assumptions and limitations. A No or NA answer to this question will not be perceived well by the reviewers. 
        \item The claims made should match theoretical and experimental results, and reflect how much the results can be expected to generalize to other settings. 
        \item It is fine to include aspirational goals as motivation as long as it is clear that these goals are not attained by the paper. 
    \end{itemize}

\item {\bf Limitations}
    \item[] Question: Does the paper discuss the limitations of the work performed by the authors?
    \item[] Answer: \answerYes{} %
    \item[] Justification: 
    We incorporate a limitation discussion in the discussion \Secref{sec:discussion}.
    \item[] Guidelines:
    \begin{itemize}
        \item The answer NA means that the paper has no limitation while the answer No means that the paper has limitations, but those are not discussed in the paper. 
        \item The authors are encouraged to create a separate "Limitations" section in their paper.
        \item The paper should point out any strong assumptions and how robust the results are to violations of these assumptions (e.g., independence assumptions, noiseless settings, model well-specification, asymptotic approximations only holding locally). The authors should reflect on how these assumptions might be violated in practice and what the implications would be.
        \item The authors should reflect on the scope of the claims made, e.g., if the approach was only tested on a few datasets or with a few runs. In general, empirical results often depend on implicit assumptions, which should be articulated.
        \item The authors should reflect on the factors that influence the performance of the approach. For example, a facial recognition algorithm may perform poorly when image resolution is low or images are taken in low lighting. Or a speech-to-text system might not be used reliably to provide closed captions for online lectures because it fails to handle technical jargon.
        \item The authors should discuss the computational efficiency of the proposed algorithms and how they scale with dataset size.
        \item If applicable, the authors should discuss possible limitations of their approach to address problems of privacy and fairness.
        \item While the authors might fear that complete honesty about limitations might be used by reviewers as grounds for rejection, a worse outcome might be that reviewers discover limitations that aren't acknowledged in the paper. The authors should use their best judgment and recognize that individual actions in favor of transparency play an important role in developing norms that preserve the integrity of the community. Reviewers will be specifically instructed to not penalize honesty concerning limitations.
    \end{itemize}

\item {\bf Theory assumptions and proofs}
    \item[] Question: For each theoretical result, does the paper provide the full set of assumptions and a complete (and correct) proof?
    \item[] Answer: \answerYes{} %
    \item[] Justification: 
    We provide rigorous proof to all theoretical results in Appendix.
    \item[] Guidelines:
    \begin{itemize}
        \item The answer NA means that the paper does not include theoretical results. 
        \item All the theorems, formulas, and proofs in the paper should be numbered and cross-referenced.
        \item All assumptions should be clearly stated or referenced in the statement of any theorems.
        \item The proofs can either appear in the main paper or the supplemental material, but if they appear in the supplemental material, the authors are encouraged to provide a short proof sketch to provide intuition. 
        \item Inversely, any informal proof provided in the core of the paper should be complemented by formal proofs provided in appendix or supplemental material.
        \item Theorems and Lemmas that the proof relies upon should be properly referenced. 
    \end{itemize}

    \item {\bf Experimental result reproducibility}
    \item[] Question: Does the paper fully disclose all the information needed to reproduce the main experimental results of the paper to the extent that it affects the main claims and/or conclusions of the paper (regardless of whether the code and data are provided or not)?
    \item[] Answer: \answerYes{} %
    \item[] Justification: 
    We provide the algorithm pseudocode in \Algref{alg:ours} and experimental details in \Secref{sec:experiment} which ensures reproducibility.
    \item[] Guidelines:
    \begin{itemize}
        \item The answer NA means that the paper does not include experiments.
        \item If the paper includes experiments, a No answer to this question will not be perceived well by the reviewers: Making the paper reproducible is important, regardless of whether the code and data are provided or not.
        \item If the contribution is a dataset and/or model, the authors should describe the steps taken to make their results reproducible or verifiable. 
        \item Depending on the contribution, reproducibility can be accomplished in various ways. For example, if the contribution is a novel architecture, describing the architecture fully might suffice, or if the contribution is a specific model and empirical evaluation, it may be necessary to either make it possible for others to replicate the model with the same dataset, or provide access to the model. In general. releasing code and data is often one good way to accomplish this, but reproducibility can also be provided via detailed instructions for how to replicate the results, access to a hosted model (e.g., in the case of a large language model), releasing of a model checkpoint, or other means that are appropriate to the research performed.
        \item While NeurIPS does not require releasing code, the conference does require all submissions to provide some reasonable avenue for reproducibility, which may depend on the nature of the contribution. For example
        \begin{enumerate}
            \item If the contribution is primarily a new algorithm, the paper should make it clear how to reproduce that algorithm.
            \item If the contribution is primarily a new model architecture, the paper should describe the architecture clearly and fully.
            \item If the contribution is a new model (e.g., a large language model), then there should either be a way to access this model for reproducing the results or a way to reproduce the model (e.g., with an open-source dataset or instructions for how to construct the dataset).
            \item We recognize that reproducibility may be tricky in some cases, in which case authors are welcome to describe the particular way they provide for reproducibility. In the case of closed-source models, it may be that access to the model is limited in some way (e.g., to registered users), but it should be possible for other researchers to have some path to reproducing or verifying the results.
        \end{enumerate}
    \end{itemize}

\item {\bf Open access to data and code}
    \item[] Question: Does the paper provide open access to the data and code, with sufficient instructions to faithfully reproduce the main experimental results, as described in supplemental material?
    \item[] Answer: \answerYes{} %
    \item[] Justification:  We release the whole set of code that can reproduce our algorithm.
    \item[] Guidelines:
    \begin{itemize}
        \item The answer NA means that paper does not include experiments requiring code.
        \item Please see the NeurIPS code and data submission guidelines (\url{https://nips.cc/public/guides/CodeSubmissionPolicy}) for more details.
        \item While we encourage the release of code and data, we understand that this might not be possible, so “No” is an acceptable answer. Papers cannot be rejected simply for not including code, unless this is central to the contribution (e.g., for a new open-source benchmark).
        \item The instructions should contain the exact command and environment needed to run to reproduce the results. See the NeurIPS code and data submission guidelines (\url{https://nips.cc/public/guides/CodeSubmissionPolicy}) for more details.
        \item The authors should provide instructions on data access and preparation, including how to access the raw data, preprocessed data, intermediate data, and generated data, etc.
        \item The authors should provide scripts to reproduce all experimental results for the new proposed method and baselines. If only a subset of experiments are reproducible, they should state which ones are omitted from the script and why.
        \item At submission time, to preserve anonymity, the authors should release anonymized versions (if applicable).
        \item Providing as much information as possible in supplemental material (appended to the paper) is recommended, but including URLs to data and code is permitted.
    \end{itemize}

\item {\bf Experimental setting/details}
    \item[] Question: Does the paper specify all the training and test details (e.g., data splits, hyperparameters, how they were chosen, type of optimizer, etc.) necessary to understand the results?
    \item[] Answer: \answerYes{} %
    \item[] Justification:  We provide all experimental details in \Secref{sec:experiment} and Appendix.
    \item[] Guidelines:
    \begin{itemize}
        \item The answer NA means that the paper does not include experiments.
        \item The experimental setting should be presented in the core of the paper to a level of detail that is necessary to appreciate the results and make sense of them.
        \item The full details can be provided either with the code, in appendix, or as supplemental material.
    \end{itemize}

\item {\bf Experiment statistical significance}
    \item[] Question: Does the paper report error bars suitably and correctly defined or other appropriate information about the statistical significance of the experiments?
    \item[] Answer: \answerYes{} %
    \item[] Justification: Following common practice, we provide standard deviation based on repeated random seeds. 
    \item[] Guidelines:
    \begin{itemize}
        \item The answer NA means that the paper does not include experiments.
        \item The authors should answer "Yes" if the results are accompanied by error bars, confidence intervals, or statistical significance tests, at least for the experiments that support the main claims of the paper.
        \item The factors of variability that the error bars are capturing should be clearly stated (for example, train/test split, initialization, random drawing of some parameter, or overall run with given experimental conditions).
        \item The method for calculating the error bars should be explained (closed form formula, call to a library function, bootstrap, etc.)
        \item The assumptions made should be given (e.g., Normally distributed errors).
        \item It should be clear whether the error bar is the standard deviation or the standard error of the mean.
        \item It is OK to report 1-sigma error bars, but one should state it. The authors should preferably report a 2-sigma error bar than state that they have a 96\% CI, if the hypothesis of Normality of errors is not verified.
        \item For asymmetric distributions, the authors should be careful not to show in tables or figures symmetric error bars that would yield results that are out of range (e.g. negative error rates).
        \item If error bars are reported in tables or plots, The authors should explain in the text how they were calculated and reference the corresponding figures or tables in the text.
    \end{itemize}

\item {\bf Experiments compute resources}
    \item[] Question: For each experiment, does the paper provide sufficient information on the computer resources (type of compute workers, memory, time of execution) needed to reproduce the experiments?
    \item[] Answer: \answerYes{} %
    \item[] Justification: Our compute resources are written in the Appendix.
    \item[] Guidelines:
    \begin{itemize}
        \item The answer NA means that the paper does not include experiments.
        \item The paper should indicate the type of compute workers CPU or GPU, internal cluster, or cloud provider, including relevant memory and storage.
        \item The paper should provide the amount of compute required for each of the individual experimental runs as well as estimate the total compute. 
        \item The paper should disclose whether the full research project required more compute than the experiments reported in the paper (e.g., preliminary or failed experiments that didn't make it into the paper). 
    \end{itemize}
    
\item {\bf Code of ethics}
    \item[] Question: Does the research conducted in the paper conform, in every respect, with the NeurIPS Code of Ethics \url{https://neurips.cc/public/EthicsGuidelines}?
    \item[] Answer: \answerYes{} %
    \item[] Justification: The research conducted in the paper conform with the NeurIPS Code of Ethics.
    \item[] Guidelines:
    \begin{itemize}
        \item The answer NA means that the authors have not reviewed the NeurIPS Code of Ethics.
        \item If the authors answer No, they should explain the special circumstances that require a deviation from the Code of Ethics.
        \item The authors should make sure to preserve anonymity (e.g., if there is a special consideration due to laws or regulations in their jurisdiction).
    \end{itemize}

\item {\bf Broader impacts}
    \item[] Question: Does the paper discuss both potential positive societal impacts and negative societal impacts of the work performed?
    \item[] Answer: \answerYes{} %
    \item[] Justification: We discussed the potential societal impact in the Section \ref{sec:conclusion}.
    \item[] Guidelines:
    \begin{itemize}
        \item The answer NA means that there is no societal impact of the work performed.
        \item If the authors answer NA or No, they should explain why their work has no societal impact or why the paper does not address societal impact.
        \item Examples of negative societal impacts include potential malicious or unintended uses (e.g., disinformation, generating fake profiles, surveillance), fairness considerations (e.g., deployment of technologies that could make decisions that unfairly impact specific groups), privacy considerations, and security considerations.
        \item The conference expects that many papers will be foundational research and not tied to particular applications, let alone deployments. However, if there is a direct path to any negative applications, the authors should point it out. For example, it is legitimate to point out that an improvement in the quality of generative models could be used to generate deepfakes for disinformation. On the other hand, it is not needed to point out that a generic algorithm for optimizing neural networks could enable people to train models that generate Deepfakes faster.
        \item The authors should consider possible harms that could arise when the technology is being used as intended and functioning correctly, harms that could arise when the technology is being used as intended but gives incorrect results, and harms following from (intentional or unintentional) misuse of the technology.
        \item If there are negative societal impacts, the authors could also discuss possible mitigation strategies (e.g., gated release of models, providing defenses in addition to attacks, mechanisms for monitoring misuse, mechanisms to monitor how a system learns from feedback over time, improving the efficiency and accessibility of ML).
    \end{itemize}
    
\item {\bf Safeguards}
    \item[] Question: Does the paper describe safeguards that have been put in place for responsible release of data or models that have a high risk for misuse (e.g., pretrained language models, image generators, or scraped datasets)?
    \item[] Answer: \answerNA{} %
    \item[] Justification: Not applicable for our case.
    \item[] Guidelines:
    \begin{itemize}
        \item The answer NA means that the paper poses no such risks.
        \item Released models that have a high risk for misuse or dual-use should be released with necessary safeguards to allow for controlled use of the model, for example by requiring that users adhere to usage guidelines or restrictions to access the model or implementing safety filters. 
        \item Datasets that have been scraped from the Internet could pose safety risks. The authors should describe how they avoided releasing unsafe images.
        \item We recognize that providing effective safeguards is challenging, and many papers do not require this, but we encourage authors to take this into account and make a best faith effort.
    \end{itemize}

\item {\bf Licenses for existing assets}
    \item[] Question: Are the creators or original owners of assets (e.g., code, data, models), used in the paper, properly credited and are the license and terms of use explicitly mentioned and properly respected?
    \item[] Answer: \answerYes{} %
    \item[] Justification: Every asset we use is properly credited and used.
    \item[] Guidelines:
    \begin{itemize}
        \item The answer NA means that the paper does not use existing assets.
        \item The authors should cite the original paper that produced the code package or dataset.
        \item The authors should state which version of the asset is used and, if possible, include a URL.
        \item The name of the license (e.g., CC-BY 4.0) should be included for each asset.
        \item For scraped data from a particular source (e.g., website), the copyright and terms of service of that source should be provided.
        \item If assets are released, the license, copyright information, and terms of use in the package should be provided. For popular datasets, \url{paperswithcode.com/datasets} has curated licenses for some datasets. Their licensing guide can help determine the license of a dataset.
        \item For existing datasets that are re-packaged, both the original license and the license of the derived asset (if it has changed) should be provided.
        \item If this information is not available online, the authors are encouraged to reach out to the asset's creators.
    \end{itemize}

\item {\bf New assets}
    \item[] Question: Are new assets introduced in the paper well documented and is the documentation provided alongside the assets?
    \item[] Answer: \answerYes{} %
    \item[] Justification: We will provide detailed documentation for the released code.
    \item[] Guidelines:
    \begin{itemize}
        \item The answer NA means that the paper does not release new assets.
        \item Researchers should communicate the details of the dataset/code/model as part of their submissions via structured templates. This includes details about training, license, limitations, etc. 
        \item The paper should discuss whether and how consent was obtained from people whose asset is used.
        \item At submission time, remember to anonymize your assets (if applicable). You can either create an anonymized URL or include an anonymized zip file.
    \end{itemize}

\item {\bf Crowdsourcing and research with human subjects}
    \item[] Question: For crowdsourcing experiments and research with human subjects, does the paper include the full text of instructions given to participants and screenshots, if applicable, as well as details about compensation (if any)? 
    \item[] Answer: \answerNA{} %
    \item[] Justification: The paper does not involve crowdsourcing nor research with human subjects.
    \item[] Guidelines:
    \begin{itemize}
        \item The answer NA means that the paper does not involve crowdsourcing nor research with human subjects.
        \item Including this information in the supplemental material is fine, but if the main contribution of the paper involves human subjects, then as much detail as possible should be included in the main paper. 
        \item According to the NeurIPS Code of Ethics, workers involved in data collection, curation, or other labor should be paid at least the minimum wage in the country of the data collector. 
    \end{itemize}

\item {\bf Institutional review board (IRB) approvals or equivalent for research with human subjects}
    \item[] Question: Does the paper describe potential risks incurred by study participants, whether such risks were disclosed to the subjects, and whether Institutional Review Board (IRB) approvals (or an equivalent approval/review based on the requirements of your country or institution) were obtained?
    \item[] Answer: \answerNA{} %
    \item[] Justification: This paper does not involve human subjects or crowdsourcing. All experiments are conducted using publicly available datasets or synthetic data without interaction with human participants.
    \item[] Guidelines:
    \begin{itemize}
        \item The answer NA means that the paper does not involve crowdsourcing nor research with human subjects.
        \item Depending on the country in which research is conducted, IRB approval (or equivalent) may be required for any human subjects research. If you obtained IRB approval, you should clearly state this in the paper. 
        \item We recognize that the procedures for this may vary significantly between institutions and locations, and we expect authors to adhere to the NeurIPS Code of Ethics and the guidelines for their institution. 
        \item For initial submissions, do not include any information that would break anonymity (if applicable), such as the institution conducting the review.
    \end{itemize}

\item {\bf Declaration of LLM usage}
    \item[] Question: Does the paper describe the usage of LLMs if it is an important, original, or non-standard component of the core methods in this research? Note that if the LLM is used only for writing, editing, or formatting purposes and does not impact the core methodology, scientific rigorousness, or originality of the research, declaration is not required.
    \item[] Answer: \answerNA{} %
    \item[] Justification: LLMs are only used for writing polish and formatting purposes.
    \item[] Guidelines:
    \begin{itemize}
        \item The answer NA means that the core method development in this research does not involve LLMs as any important, original, or non-standard components.
        \item Please refer to our LLM policy (\url{https://neurips.cc/Conferences/2025/LLM}) for what should or should not be described.
    \end{itemize}

\end{enumerate}

\end{document}